\def\eqref#1{equation~\ref{#1}}
\def\1{\bm{1}}
\def\eps{{\epsilon}}
\DeclareMathAlphabet{\mathsfit}{\encodingdefault}{\sfdefault}{m}{sl}
\SetMathAlphabet{\mathsfit}{bold}{\encodingdefault}{\sfdefault}{bx}{n}
\newcommand{\E}{\mathbb{E}}
\newcommand{\R}{\mathbb{R}}
\newcommand{\Cov}{\mathrm{Cov}}
\newtheorem{theorem}{Theorem}[section]
\newtheorem{corollary}[theorem]{Corollary}
\newtheorem{lemma}[theorem]{Lemma}
\newtheorem{definition}[theorem]{Definition}
\renewcommand{\tilde}{\widetilde}
\DeclareMathOperator{\tr}{tr}
\newcommand{\norm}[1]{\left\|#1\right\|}
\newcommand{\normtwo}[1]{\norm{#1}_2}
\newcommand{\norminf}[1]{\norm{#1}_\infty}
\providecommand{\expect}[2]{\ensuremath{\ifthenelse{\equal{#1}{}}{\mathbb{E}}{\mathbb{E}_{#1}}\!\left[#2\right]}\xspace}
\providecommand{\prob}[2]{\ensuremath{\ifthenelse{\equal{#1}{}}{\Pr}{\Pr_{#1}}\!\left[#2\right]}\xspace}
\newcommand{\nnz}{\mathrm{nnz}\xspace}
\newcommand{\inner}[1]{\langle #1\rangle}
\newcommand{\dtv}{d_{\mathrm TV}}
\newcommand{\abs}[1]{\left|{#1}\right|}
\DeclareMathOperator{\diag}{diag}
\DeclareMathOperator{\poly}{poly}
\newcommand{\CondTable}{\Gamma}
\def\colorful{1}
\title{Robust Learning of Fixed-Structure Bayesian Networks \\ in Nearly-Linear Time}
\author{%
  Yu Cheng\thanks{Part of this work was done while Yu Cheng was visiting the Institute of Advanced Study.} \\
  University of Illinois at Chicago \\
  \texttt{yucheng2@uic.edu}
  \and
  Honghao Lin\thanks{Part of this work was done while Honghao Lin was an undergraduate student at Shanghai Jiao Tong University.} \\
  Shanghai University of Finance and Economics \\
  \texttt{Guilan3010@gmail.com}
}
\begin{document}

\maketitle


\begin{abstract}
We study the problem of learning Bayesian networks where an $\epsilon$-fraction of the samples are adversarially corrupted.  We focus on the fully-observable case where the underlying graph structure is known.
In this work, we present the first nearly-linear time algorithm for this problem with a dimension-independent error guarantee.
Previous robust algorithms with comparable error guarantees are slower by at least a factor of $(d/\epsilon)$, where $d$ is the number of variables in the Bayesian network and $\epsilon$ is the fraction of corrupted samples.

Our algorithm and analysis are considerably simpler than those in previous work.~\footnote{An implementation of our algorithms is available at~\texttt{https://github.com/chycharlie/robust-bn-faster}.}
We achieve this by establishing a direct connection between robust learning of Bayesian networks and robust mean estimation.
As a subroutine in our algorithm, we develop a robust mean estimation algorithm whose runtime is nearly-linear in the number of nonzeros in the input samples, which may be of independent interest.
\end{abstract}

\section{Introduction}
Probabilistic graphical models~\cite{Koller:2009} offer an elegant and succinct way to represent structured high-dimensional distributions.
The problem of inference and learning in probabilistic graphical models is an important problem that arises in many disciplines (see \cite{Wainwright:2008} and the references therein), which has been studied extensively during the past decades (see, e.g.,~\cite{Chow68, Dasgupta97, Abbeel:2006, WainwrightRL06, AnandkumarHHK12,
SanthanamW12, LohW12, BreslerMS13, BreslerGS14a, Bresler15}).

Bayesian networks~\cite{Jensen:2007} are an important family of probabilistic graphical models that represent conditional dependence by a directed graph (see Section~\ref{sec:prelim} for a formal definition).
In this paper, we study the problem of learning Bayesian networks where an $\eps$-fraction of the samples are adversarially corrupted.
We focus on the simplest setting: all variables are binary and observable, and the structure of the Bayesian network is given to the algorithm.

Formally, we work with the following corruption model:

\begin{definition}[$\eps$-Corrupted Set of Samples]
\label{def:adv}
Given $0 < \eps < 1/2$ and a distribution family $\mathcal{P}$ on $\R^d$, the algorithm first specifies the number of samples $N$, and $N$ samples $X_1, X_2, \ldots, X_N$ are drawn from some unknown $P \in \mathcal{P}$.
The adversary inspects the samples, the ground-truth distribution $P$, and the algorithm, and then replaces $\eps N$ samples with arbitrary points. The set of $N$ points is given to the algorithm as input.
We say that a set of samples is $\eps$-corrupted if it is generated by this process.
\end{definition}

This is a strong corruption model which generalizes many existing models.
In particular, it is stronger than Huber's contamination model~\cite{Huber64}, because we allow the adversary to add bad samples and remove good samples, and he can do so adaptively.

We would like to design robust algorithms for learning Bayesian networks with dimension-independent error.
More specifically, given as input an $\eps$-corrupted set of samples drawn from some ground-truth Bayesian network $P$ and the structure of $P$, we want the algorithm to output a Bayesian network $Q$, such that the total variation distance between $P$ and $Q$ is upper bounded by a function that depends only on $\eps$ (the fraction of corruption) but not $d$ (the number of variables).

In the fully-observable fixed-structure setting, the problem is straightforward when there is no corruption.
We know that the empirical estimator (which computes the empirical conditional probabilities) is sample efficient and runs in linear time~\cite{Dasgupta97}.

It turns out that the problem becomes much more challenging when there is corruption.
Even for robust learning of binary product distributions (i.e., a Bayesian network with an empty dependency graph), the first computational efficient algorithms with dimension-independent error was only discovered in~\cite{DKKLMS16}.
Subsequently,~\cite{ChengDKS18} gave the first polynomial-time algorithms for robust learning of fixed-structured Bayesian networks.
The main drawback of the algorithm in~\cite{ChengDKS18} is that it runs in time~$\Omega(N d^2/\eps)$, which is slower by at least a factor of $(d/\eps)$ compared to the fastest non-robust estimator.

Motivated by this gap in the running time, in this work we want to resolve the following question:

\begin{quote}
\em
Can we design a robust algorithm for learning Bayesian networks in the  fixed-structure fully-observable setting that runs in nearly-linear time?
\end{quote}

\subsection{Our Results and Contributions}
We resolve this question affirmatively by proving Theorem~\ref{thm:simple}. 
We say a Bayesian network is $c$-\emph{balanced} if all its conditional probabilities are between $c$ and $1-c$.
For the ground-truth Bayesian network $P$, let $m$ be the size of its \emph{conditional probability table} and $\alpha$ be its \emph{minimum parental configuration probability} (see Section~\ref{sec:prelim} for formal definitions).

\begin{theorem}[informal statement]
\label{thm:simple}
Consider an $\eps$-corrupted set of $N = \tilde \Omega(m/\eps^2)$ samples drawn from a $d$-dimensional Bayesian network $P$.
Suppose $P$ is $c$-balanced and has minimum parental configuration probability $\alpha$, where both $c$ and $\alpha$ are universal constants.
We can compute a Bayesian network $Q$ in time $\tilde O(N d)$ such that $\dtv(P,Q) \leq \eps\sqrt{\ln(1/\eps)}$.~\footnote{Throughout the paper, we use $\tilde O(f)$ to denote $O(f \, \poly\!\log(f))$.}
\end{theorem}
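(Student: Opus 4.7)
The plan is to reduce robust learning of a fixed-structure Bayesian network to a single robust mean estimation problem on a sparse feature map, develop a sparsity-aware robust mean estimator whose running time is nearly linear in the number of nonzero coordinates, and then convert the $\ell_2$ error in the estimated mean into a total variation error via the chain rule for KL divergence.

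\textbf{Sparse sufficient-statistic embedding.} For every node $i\in[d]$, every parental configuration $\pi\in\{0,1\}^{|\mathrm{pa}(i)|}$, and every value $v\in\{0,1\}$, define a coordinate
\[
\phi(X)_{(i,\pi,v)} := \mathbf{1}\!\left[X_i = v \text{ and } X_{\mathrm{pa}(i)} = \pi\right],
\]
giving a vector $\phi(X)\in\{0,1\}^{2m}$. For each sample exactly one pair $(\pi,v)$ is active per node, so $\phi(X)$ has exactly $d$ nonzero entries and is computable in $O(d)$ time. The mean $\mu := \E_{X\sim P}[\phi(X)]$ records all joint probabilities $P(X_i=v,\,X_{\mathrm{pa}(i)}=\pi)$, from which the full conditional probability table of $P$ is recovered by a coordinate-wise division. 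Thus learning $P$ reduces to estimating $\mu$ robustly in $\ell_2$.

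\textbf{Covariance bound and fast robust mean estimation.} Under the $c$-balanced and minimum-parental-configuration-probability-$\alpha$ hypotheses I would verify that $\Sigma := \cov(\phi(X))$ has spectral norm bounded by $\poly(1/c,1/\alpha) = O(1)$: balance prevents mass concentration on any individual trajectory, while the $\alpha$-lower bound controls the magnitude of cross-block correlations. Standard filter-based robust mean estimation then outputs $\hat\mu$ with $\|\hat\mu-\mu\|_2 = O(\eps\sqrt{\ln(1/\eps)})$ from $N = \tilde\Omega(m/\eps^2)$ corrupted samples. The new algorithmic ingredient is to execute every step of the filter in time $\tilde O(\mathrm{nnz}) = \tilde O(Nd)$ rather than $\tilde O(Nm)$: approximate power iteration on the empirical covariance, evaluation of per-sample scores, and the reweighting step are all carried out using only the $d$ nonzero coordinates of each $\phi(X_j)$, never materializing the $m\times m$ empirical covariance. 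Wrapping this in an $O(\ln(1/\eps))$-round outer loop then gives the claimed $\tilde O(Nd)$ total running time.

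\textbf{From $\ell_2$ mean error to total variation.} Let $Q$ be the Bayesian network whose parameters are read off from $\hat\mu$ (with a mild clipping that keeps the conditional probabilities in $[c/2,\,1-c/2]$). By the chain rule for KL divergence,
\[
\KL(P\,\|\,Q) \;=\; \sum_{i=1}^{d}\sum_{\pi} P(X_{\mathrm{pa}(i)}=\pi)\, \KL\!\left(P_{X_i\mid\pi}\,\|\,Q_{X_i\mid\pi}\right).
\]
For Bernoullis bounded away from $0$ and $1$, $\KL(p\,\|\,q) = O_c((p-q)^2)$. Expressing each conditional-probability error in terms of the underlying joint-probability errors and using the $\alpha$-lower bound on $P(X_{\mathrm{pa}(i)}=\pi)$ to reassemble the weighted sum as an unweighted sum of squared coordinate errors of $\hat\mu$ gives $\KL(P\,\|\,Q) = O(\|\hat\mu-\mu\|_2^2) = O(\eps^2 \ln(1/\eps))$. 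Pinsker's inequality then yields $\dtv(P,Q) \le O(\eps\sqrt{\ln(1/\eps)})$, as required.

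\textbf{Main obstacle.} The conceptually easy parts are the embedding and the KL-to-TV conversion. The technical heart is engineering the filter so that every step touches only the $Nd$ nonzeros while still preserving the statistical guarantees of prior robust mean estimators: a top eigenvector of the empirical covariance must be approximated without ever materializing the $m\times m$ matrix, and per-sample scores must be updated in time proportional to the number of nonzeros affected, not to $m$. A secondary hurdle is the clean $O(1)$ spectral-norm bound on $\Sigma$ under balance and $\alpha$-min-probability; without it the $\ell_2$ mean estimation error would carry an explicit $d$-dependence and defeat the dimension-independent guarantee.
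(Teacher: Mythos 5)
Your overall architecture (expand each sample into an $m$-dimensional $d$-sparse vector, run a sparsity-aware robust mean estimator in $\tilde O(Nd)$ time, then convert parameter error to $\dtv$) matches the paper, but the specific one-shot reduction you propose has a fatal gap: the claimed $O(1)$ (or $\poly(1/c,1/\alpha)$) bound on $\normtwo{\cov(\phi(X))}$ for the raw indicator embedding $\phi(X)_{(i,\pi,v)}=\1[X_i=v,\ X_{\mathrm{pa}(i)}=\pi]$ is false. Balance and the minimum-configuration-probability lower bound do not control cross-node correlations of the parental-configuration events. Concretely, take $X_1\sim\mathrm{Ber}(1/2)$ and let every node $i\ge 2$ have parent $\{1\}$ with $\Pr[X_i=1\mid X_1=1]=1-c$, $\Pr[X_i=1\mid X_1=0]=c$; this is $c$-balanced with $\alpha=1/2$. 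For the unit vector $v$ supported on the $2(d-1)$ coordinates $(i,\pi{=}1,v{\in}\{0,1\})$, $v^\top\phi(X)=\sqrt{(d-1)/2}\;\1[X_1=1]$, so $\normtwo{\cov(\phi(X))}\ge (d-1)/8$. With an unknown covariance of spectral norm $\Theta(d)$, robust mean estimation cannot give a dimension-independent $\ell_2$ guarantee, let alone the $O(\eps\sqrt{\log(1/\eps)})$ rate — that rate additionally requires (near-)identity/known covariance and subgaussian-type stability parameters $\beta=O(\eps\sqrt{\log 1/\eps})$, $\gamma=O(\eps\log 1/\eps)$; with merely bounded covariance the best achievable error is $\Theta(\sqrt{\eps})$.

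The missing idea, which is the crux of the paper, is to \emph{center} the active coordinate by a guess of the conditional probability: $f(X,q)_{i,a}=X_i-q_{i,a}$ on the event $\Pi_{i,a}$ and $0$ otherwise. At $q=p$ this gives a martingale-difference structure across coordinates, so $\Cov[f(X,p)]=\diag(\pi^P\circ p\circ(1-p))$ is diagonal (hence estimable entrywise to $O(\eps)$ even from corrupted data, and normalizable to $\approx I$) and Azuma gives subgaussian tails — exactly what is needed for the $\eps\sqrt{\log(1/\eps)}$ guarantee. Since you can only center by a guess $q\ne p$, the stability parameters degrade with $\normtwo{p-q}$, which is why the paper cannot do a one-shot estimate and instead runs $O(\log d)$ rounds of robust mean estimation on $\{f(X_i,q^t)\}$, contracting $\normtwo{\pi^P\circ(\hat p-\hat q^t)}$ by a constant factor per round; your single call to the estimator has no mechanism to absorb this guess-dependent error. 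Your KL-chain-rule/Pinsker conversion and your input-sparsity filter sketch are reasonable in spirit (the paper uses a squared-Hellinger-type bound from prior work and a JL-accelerated matrix-multiplicative-weights score oracle, respectively), but without the centered embedding and the iterative refinement the main estimation step does not go through.
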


For simplicity, we stated our result in the very special case where both $c$ and $\alpha$ are $\Omega(1)$.
Our approach works for general values of $\alpha$ and $c$, where our error guarantee degrades gracefully as $\alpha$ and $c$ gets smaller.
A formal version of Theorem~\ref{thm:simple} is given as Theorem~\ref{thm:main-result} in Section~\ref{sec:main}.

Our algorithm has optimal error guarantee, sample complexity, and running time (up to logarithmic factors).
There is an information-theoretic lower bound of $\Omega(\eps)$ on the error guarantee, which holds even for Bayesian networks with only one variable.
A sample complexity lower bound of $\Omega(m/\eps^2)$ holds even without corruption (see, e.g., ~\cite{CanonneDKS20}).

\paragraph{Our Contributions.}
We establish a novel connection between robust learning of Bayesian networks and robust mean estimation.
At a high level, we show that one can essentially reduce the former to the latter.
This allows us to take advantage of the recent (and future) advances in robust mean estimation and apply the algorithms almost directly to obtain new algorithms for learning Bayesian networks.


Our algorithm and analysis are considerably simpler than those in previous work.
For simplicity, consider learning binary product distributions as an example.
Previous efficient robust algorithms~\cite{ChengDKS18} tried to remove samples to make the empirical covariance matrix closer to being diagonal (since the true covariance matrix is diagonal because the coordinates are independent).
They used a ``filtering'' approach which requires proving specific tail bounds on the samples.
In contrast, we show that it suffices to use existing robust mean algorithms in a black-box manner, minimizing the spectral norm of the empirical covariance matrix (regardless of whether it is close to being diagonal or not).

As a subroutine in our approach, we develop the first robust mean estimation algorithm that runs in nearly input-sparsity time (i.e., in time nearly linear in the total number of nonzero entries in the input), which may be of independent interest.
The main computation bottleneck of current nearly-linear time robust mean estimation algorithms~\cite{ChengDG19,DepersinL19,DongHL19} is running matrix multiplication weight update with the Johnson-Lindenstrauss lemma, which we show can be done in nearly input-sparsity time.

\subsection{Related Work}
\paragraph{Bayesian Networks.}
Probabilistic graphical models~\cite{Koller:2009} provide an appealing and unifying formalism to succinctly
represent structured high-dimensional distributions.
The general problem of inference in graphical models is of fundamental importance and arises in many applications across several scientific disciplines (see~\cite{Wainwright:2008} and references therein).
The problem of learning graphical models from data~\cite{Neapolitan:2003, RQS11} has many variants:
(i) the family of graphical models (e.g., directed, undirected),
(ii) whether the data is fully or partially observable, and 
(iii) whether the graph structure is known or not. 
This learning problem has been studied extensively (see, e.g.,~\cite{Chow68, Dasgupta97, Abbeel:2006, WainwrightRL06, AnandkumarHHK12, SanthanamW12, LohW12, BreslerMS13, BreslerGS14a, Bresler15}), resulting in a beautiful theory and a collection of algorithms in various settings.

\paragraph{Robust Statistics.}
Learning in the presence of outliers has been studied since the 1960s~\cite{Huber64}.
For the most basic problem of robust mean estimation, it is well-known that the empirical median works in one dimension.
However, most natural generalizations of the median to high dimensions (e.g., coordinate-wise median, geometric median) would incur an error of $\Omega(\eps \sqrt{d})$, even in the infinite sample regime~(see, e.g.,~\cite{DKKLMS16, LaiRV16}).
After decades of work, sample-efficient robust estimators have been discovered (e.g., the Tukey median~\cite{Tukey75,DG85,CGR15b}).
However, the Tukey median is NP-Hard to compute in the worse case~\cite{JP:78, AmaldiKann:95} and many heuristics for approximating it perform poorly as the dimension scales~\cite{Clarkson93, Chan04, MillerS10}.

\paragraph{Computational Efficient Robust Estimators.} Recent work~\cite{DKKLMS16, LaiRV16} gave the first polynomial-time algorithms several high-dimensional unsupervised learning tasks (e.g., mean and covariance estimation) with dimension-independent error guarantees.
After the dissemination of~\cite{DKKLMS16, LaiRV16}, algorithmic high-dimensional robust statistics has attracted a lot of recent attention and there has been a flurry of research that obtained polynomial-time robust algorithms for a wide range of machine learning and statistical tasks~(see, e.g., \cite{BDLS17, CSV17, DKK+17, DKS17-sq, SteinhardtCV18, DiakonikolasKS18-mixtures, HopkinsL18, KothariSS18, PrasadSBR20, DiakonikolasKKLSS2018sever, KlivansKM18, DiakonikolasKS19, LiuSLC20, ChengDGS20, ZhuJS20}). 
In particular, the most relevant prior work is~\cite{ChengDKS18}, which gave the first polynomial-time algorithms for robust learning of fixed-structure Bayesian networks.

\paragraph{Faster Robust Estimators.}
While recent work gave polynomial-time robust algorithms for many tasks, these algorithms are often significantly slower than the fastest non-robust ones (e.g., sample average for mean estimation). \cite{ChengDG19} gave the first nearly-linear time algorithm for robust mean estimation and initiated the research direction of designing robust estimators that are as efficient as their non-robust counterparts.
Since then, there have been several works that develop faster robust algorithms for various learning and statistical tasks, including robust mean estimation for heavy-tailed distributions~\cite{DongHL19, DepersinL19}, robust covariance estimation~\cite{ChengDGW19, LiY20}, robust linear regression~\cite{CherapanamjeriATJFB20}, and list-decodable mean estimation~\cite{CherapanamjeriMY20,DiakonikolasKKLT20}. 

\paragraph{Organization.}
In Section~\ref{sec:prelim}, we define our notations and provide some background on robust learning of Bayesian networks and robust mean estimation.
In Section~\ref{sec:overview}, we give an overview of our approach and highlight some of our key technical results.
In Section~\ref{sec:main}, we present our algorithm for robust learning of Bayesian networks and prove our main result.

\section{Preliminaries}
\label{sec:prelim}


\paragraph{Bayesian Networks.}
Fix a $d$-node directed acyclic graph $H$ whose nodes are labelled $[d] = \{1,2,\ldots,d\}$ in topological order (every edge goes from a node with smaller index to one with larger index).
Let $\text{Parents}(i)$ be the parents of node $i$ in $H$.
A probability distribution $P$ on $\{0,1\}^d$ is a \emph{Bayesian network} (or \emph{Bayes net}) with graph $H$ if, for each $i \in [d]$, we have that $\Pr_{X\sim P}\left[X_i = 1 \mid X_1,\ldots,X_{i-1}\right]$ 
depends only on the values $X_j$ where $j \in \text{Parents}(i)$.
\paragraph{Conditional Probability Table.}
Let $P$ be a Bayesian network with graph $H$.
Let $\CondTable = \{(i,a): i \in [d], a\in \{0,1\}^{|\mathrm{Parents}(i)|}\}$ be the set of all possible parental configurations.
Let $m=|\CondTable|$. For $(i,a)\in \CondTable$, the \emph{parental configuration} $\Pi_{i,a}$
is defined to be the event that $X(\mathrm{Parents}(i))=a$.
The conditional probability table $p\in [0,1]^m$ of $P$ is given by
$p_{i,a}=\Pr_{X\sim P}\left[X(i)=1 \mid \Pi_{i,a}\right].$

In this paper, we often index $p$ as an $m$-dimensional vector. We use the notation $p_k$ and the associated events $\Pi_k$, where each $k \in [m]$ stands for an $(i,a) \in \CondTable$ lexicographically ordered.

\paragraph{Notations.}
For a vector $v$, let $\normtwo{v}$ and $\norminf{v}$ be the $\ell_2$ and $\ell_{\infty}$ norm of $v$ respectively.
We write $\sqrt{v}$ and $1/v$ for the entrywise square root and entrywise inverse of a vector $v$ respectively.
For two vectors $x$ and $y$, we write $x^\top y$ for their inner product, and $x \circ y$ for their entrywise product.

We use $I$ to denote the identity matrix.
For a matrix $M$, let $M_i$ be the $i$-th column of $M$, and let $\normtwo{M}$ be the spectral norm of $M$.
For a vector $v \in \R^n$, let $\diag(v) \in \R^{n \times n}$ denote a diagonal matrix with $v$ on the diagonal.

Throughout this paper, we use $P$ to denote the ground-truth Bayesian network.
We use $d$ for the dimension (i.e., the number of nodes) of $P$, $N$ for the number of samples, $\eps$ for the fraction of corrupted samples, and $m = \sum_{i=1}^d 2^{|\mathrm{Parents}(i)|}$ for the size of the conditional probability table of $P$.
We use $p \in \R^m$ to denote the (unknown) ground-truth conditional probabilities of $P$, and $q \in \R^m$ for our current guess of $p$.

Let $G^\star$ be the original set of $N$ uncorrupted samples drawn from $P$. After the adversary corrupts an $\eps$-fraction of $G^\star$, let $G \subseteq G^\star$ be the remaining set of good samples, and $B$ be the set of bad samples added by the adversary.
The set of samples $S = G \cup B$ is given to the algorithm as input.
Let $X \in \R^{d \times N}$ denote the sample matrix whose $i$-th column $X_i \in \R^d$ is the $i$-th input sample.
Abusing notation, we sometimes also use $X$ as a random variable (e.g., a sample drawn from $P$).

We use $\pi^P \in \R^m$ to denote the parental configuration probabilities of $P$.
That is, $\pi^P_k = \Pr_{X \sim P}[X \in \Pi_k]$.
For a set $S$ of samples, we use $\pi^S \in \R^m$ to denote the \emph{empirical} parental configuration probabilities over $S$: $\pi^S_k = \Pr_{X}[X \in \Pi_k]$ where $X$ is uniformly drawn from $S$.

\paragraph{Balance and Minimum Configuration Probability.}
We say a Bayesian network $P$ is $c$-balanced if all conditional probabilities of $P$ are between $c$ and $1-c$.
We use $\alpha$ for the minimum probability of parental configuration of $P$: $\alpha = \min_{k} \pi^P_k$.

In this paper, we assume that the ground-truth Bayesian network is $c$-balanced, and its minimum parental configuration probability $\alpha$ satisfies that $\alpha = \Omega\bigl((\eps\sqrt{\ln(1/\eps)})^{2/3} c^{-1/3})\bigr)$.
Without loss of generality, we further assume that both $c$ and $\alpha$ are given to the algorithm.

\subsection{Total Variation Distance between Bayesian Networks}

Let $P$ and $Q$ be two distributions supported on a finite domain $D$.
For a set of outcomes $A$, let $P(A) = \Pr_{X \sim P}[X \in A]$.
The total variation distance between $P$ and $Q$ is defined as
\[
\dtv(P, Q) = \max_{A \subseteq D} | P(A) - Q(A) | \; .
\]

For two balanced Bayesian networks that share the same structure, it is well-known that the closeness in their conditional probabilities implies their closeness in total variation distance.
Formally, we use the following lemma from~\cite{ChengDKS18}, which upper bounds the total variation distance between two Bayesian networks in terms of their conditional probabilities.

\begin{lemma}[\cite{ChengDKS18}] \label{lem:dtv-bn}
Let $P$ and $Q$ be two Bayesian networks that share the same structure.
Let $p$ and $q$ denote the conditional probability tables of $P$ and $Q$ respectively.
We have 
\[
(\dtv(P, Q))^{2} \leq 2 \sum_{k} \sqrt{\pi_k^P \pi_k^Q} \frac{(p_{k}-q_{k})^{2}}{(p_{k}+q_{k})(2-p_{k}-q_{k})} \; .
\]
\end{lemma}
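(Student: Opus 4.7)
The plan is to route the inequality through the squared Hellinger distance $H^2(P,Q) = 1 - \sum_x \sqrt{P(x)Q(x)}$. First I would invoke the standard inequality $\dtv(P,Q)^2 \leq 2 H^2(P,Q)$. This reduces the task to proving the chain-rule-type bound $H^2(P,Q) \leq \sum_k \sqrt{\pi_k^P \pi_k^Q}\, (p_k-q_k)^2 / ((p_k+q_k)(2-p_k-q_k))$. Working with $H^2$ rather than KL is essential here: a KL-plus-Pinsker approach would produce the asymmetric weight $\pi_k^P$ instead of the symmetric geometric mean $\sqrt{\pi_k^P \pi_k^Q}$.

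The heart of the argument is a chain rule for $H^2$ over a Bayesian network, obtained by peeling off one variable at a time in topological order. The key factorization is that the Bhattacharyya coefficient $BC = 1 - H^2$ splits under conditioning: writing $P(x_1, x_{>1}) = P_1(x_1)\, P(x_{>1}|x_1)$ and similarly for $Q$ gives $BC(P,Q) = \sum_{x_1} \sqrt{P_1(x_1)Q_1(x_1)}\, BC(P(\cdot|x_1), Q(\cdot|x_1))$, which converts to $H^2(P,Q) = H^2(P_1, Q_1) + \sum_{x_1}\sqrt{P_1(x_1) Q_1(x_1)}\, H^2(P(\cdot|x_1), Q(\cdot|x_1))$. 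Iterating over all $d$ variables yields
\[
H^2(P,Q) \;=\; \sum_{i=1}^d \sum_{x_{<i}} \sqrt{P(x_{<i})\,Q(x_{<i})}\; H^2\!\bigl(P(x_i|x_{<i}),\, Q(x_i|x_{<i})\bigr).
\]
By the Markov property, $P(x_i|x_{<i}) = P(x_i|x_{\mathrm{Par}(i)})$, so the inner Hellinger depends only on the parental configuration $a = x_{\mathrm{Par}(i)}$. Grouping the inner sum by $a$ and applying Cauchy-Schwarz gives $\sum_{x_{<i}:\, x_{\mathrm{Par}(i)}=a} \sqrt{P(x_{<i}) Q(x_{<i})} \leq \sqrt{\pi_k^P \pi_k^Q}$ for $k = (i,a)$, which produces $H^2(P,Q) \leq \sum_k \sqrt{\pi_k^P \pi_k^Q}\, H^2(\mathrm{Ber}(p_k), \mathrm{Ber}(q_k))$.

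It remains to bound each per-node Bernoulli Hellinger by the target fraction. Using $(\sqrt a - \sqrt b)^2 = (a-b)^2 / (\sqrt a + \sqrt b)^2$ together with the elementary inequality $(\sqrt a + \sqrt b)^2 \geq a + b$, one checks that $H^2(\mathrm{Ber}(p), \mathrm{Ber}(q)) = \tfrac12 (p-q)^2 / (\sqrt{p}+\sqrt{q})^2 + \tfrac12 (p-q)^2 / (\sqrt{1-p}+\sqrt{1-q})^2 \leq (p-q)^2 / ((p+q)(2-p-q))$. Plugging this into the chain-rule bound and using $\dtv^2 \leq 2 H^2$ yields the lemma. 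The main obstacle is the chain-rule step: one must decompose the Bhattacharyya coefficient (rather than, say, KL), invoke the Markov property to reduce each conditional Hellinger to a per-configuration Bernoulli quantity, and apply Cauchy-Schwarz at precisely the right place so that the inner sum collapses to the symmetric weight $\sqrt{\pi_k^P \pi_k^Q}$ instead of a one-sided $\pi_k^P$.
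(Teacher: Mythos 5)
Your proof is correct: the reduction $\dtv^2 \le 2H^2$, the exact Bhattacharyya/Hellinger chain rule over variables in topological order, the Markov-property collapse to parental configurations with Cauchy--Schwarz giving the symmetric weight $\sqrt{\pi_k^P\pi_k^Q}$, and the Bernoulli bound $H^2(\mathrm{Ber}(p),\mathrm{Ber}(q)) \le (p-q)^2/((p+q)(2-p-q))$ all check out. The paper itself only cites this lemma from \cite{ChengDKS18} without proof, and your argument is essentially the same Hellinger-based route used there, so no further comparison is needed.
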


\subsection{Expanding the Distribution to Match Conditional Probability Table}
Lemma~\ref{lem:dtv-bn} states that to learn a known-structure Bayesian network $P$, it is sufficient to learn its conditional probabilities $p$.
However, a given coordinate of $X \sim P$ may contain information about multiple conditional probabilities (depending on which parental configuration happens).

To address this issue, we use a similar approach as in~\cite{ChengDKS18}.
We expand each sample $X$ into an $m$-dimensional vector $f(X, q)$, such that each coordinate of $f(X, q)$ corresponds to an entry in the conditional probability table.
Intuitively, $q \in \R^m$ is our current guess for $p$, and initially we set $q$ to be the empirical conditional probabilities.
We use $q$ to fill in the missing entries in $f(X, q)$ for which the parental configurations fail to happen.

\begin{definition}
\label{def:Fxq}
Let $f(X, q)$ for $\{0,1\}^d \times \R^m \to \R^m$ be defined as follows: 
\[
f(X, q)_{i, a} = \begin{cases}
X_i - q_{i, a} & X \in \Pi_{i,a}\\
0 & \mathrm{otherwise}
\end{cases}
\]
\end{definition}

When $X \sim P$ and $q = p$, the distribution of $f(X, p)$ has many good properties.
Using the conditional independence of Bayesian networks, we can compute the first and second moment of $f(X, p)$ and show that $f(X, p)$ has subgaussian tails.

\begin{lemma}
\label{lem:F-moments}
For $X \sim P$ and $f(X, p)$ as defined in Definition~\ref{def:Fxq}, we have
\begin{itemize}
\item[(i)] $\E(f(X,p)) = 0$.
\item[(ii)] $\Cov[f(X,p)]= \diag(\pi^P \circ p \circ (1-p))$.
\item[(iii)] For any unit vector $v \in \R^m$, we have $\Pr_{X \sim P}\left[ | v^\top f(X, p) | \ge T \right] \le 2 \exp(-T^2 / 2)$.
\end{itemize}
\end{lemma}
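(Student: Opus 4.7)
The three parts all rest on one key observation: conditional on $X(\text{Parents}(i))$, the random variable $X_i - p_{i, X(\text{Parents}(i))}$ is a centered Bernoulli with values in $[-1,1]$. Combined with the topological ordering and the Bayes-net conditional independence, this lets us process the sum coordinate by coordinate as a martingale.

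\medskip

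\emph{Part (i).} I would compute $\E[f(X,p)_{i,a}]$ directly by splitting on whether $X \in \Pi_{i,a}$. The integrand is $0$ on the complement, and on $\Pi_{i,a}$ we have $\E[X_i \mid \Pi_{i,a}] = p_{i,a}$ by the definition of the conditional probability table. So each coordinate is $0$.

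\medskip

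\emph{Part (ii).} For the diagonal, the same computation gives
\[
\E[f(X,p)_{i,a}^2] = \Pr[X \in \Pi_{i,a}] \cdot \E[(X_i-p_{i,a})^2 \mid \Pi_{i,a}] = \pi^P_{i,a}\,p_{i,a}(1-p_{i,a}).
\]
For the off-diagonal entries $\E[f(X,p)_{i,a} f(X,p)_{j,b}]$, there are two cases. If $i=j$ and $a \neq b$, then the events $\Pi_{i,a}$ and $\Pi_{i,b}$ are disjoint, so the product vanishes pointwise. If $i \neq j$, WLOG $i < j$, then I would condition on $\mathcal{F}_{j-1} := \sigma(X_1, \ldots, X_{j-1})$. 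The indicator $\mathbb{1}_{\Pi_{i,a}}\mathbb{1}_{\Pi_{j,b}}$ and the factor $(X_i - p_{i,a})$ are all $\mathcal{F}_{j-1}$-measurable, while
\[
\E[X_j - p_{j,b} \mid \mathcal{F}_{j-1}, \Pi_{j,b}] = p_{j,b} - p_{j,b} = 0
\]
by the Bayes-net property that $X_j$ given the past depends only on $X(\text{Parents}(j))$. So the off-diagonal term is $0$.

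\medskip

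\emph{Part (iii).} The key rewriting is that for each fixed $X$, exactly one parental configuration holds for each $i$, so
\[
v^\top f(X,p) = \sum_{j=1}^{d} Y_j, \qquad Y_j := v_{j,\,X(\text{Parents}(j))}\bigl(X_j - p_{j,\,X(\text{Parents}(j))}\bigr).
\]
By the argument in (ii), $\E[Y_j \mid \mathcal{F}_{j-1}] = 0$, so $(Y_j)$ is a martingale difference sequence adapted to $(\mathcal{F}_j)$. Conditionally on $\mathcal{F}_{j-1}$, $Y_j$ is a centered bounded variable taking values in an interval of length $2|v_{j,\,X(\text{Parents}(j))}|$, so Hoeffding's lemma gives
\[
\E\bigl[\exp(s Y_j)\,\big|\,\mathcal{F}_{j-1}\bigr] \leq \exp\bigl(s^2 v_{j,\,X(\text{Parents}(j))}^2 / 2\bigr).
\]
Iterating this bound from $j = d$ down to $j = 1$ (taking conditional expectations one layer at a time), I would obtain
\[
\E\bigl[\exp(s\, v^\top f(X,p))\bigr] \leq \E\!\left[\exp\!\left(\tfrac{s^2}{2}\sum_{j=1}^d v_{j,\,X(\text{Parents}(j))}^2\right)\right].
\]
The crucial (and easy) deterministic bound is $\sum_{j=1}^d v_{j,\,X(\text{Parents}(j))}^2 \leq \sum_{j,a} v_{j,a}^2 = 1$, since for each $j$ only one summand is selected. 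Thus the MGF is at most $\exp(s^2/2)$, and a standard Chernoff/Markov argument with optimal $s = T$ yields the stated sub-Gaussian tail.

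\medskip

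\textbf{Main obstacle.} The only subtle point is part (iii): one has to resist the temptation to apply Azuma directly with deterministic bounds $c_j$, because the effective range of $Y_j$ is the random quantity $|v_{j,\,X(\text{Parents}(j))}|$. The clean fix is to carry the random variances inside the iterated-expectation argument and then exploit the ``one-hot across $a$'' structure to collapse $\sum_j v_{j,X(\text{Parents}(j))}^2$ to a deterministic upper bound of $\norm{v}^2 = 1$. Parts (i) and (ii) are essentially calculations once one conditions correctly.
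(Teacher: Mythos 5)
Your proposal is correct and follows essentially the same route as the paper: direct conditional computations for (i) and (ii), and for (iii) the martingale structure induced by the topological order together with an Azuma/Hoeffding-type argument, using the fact that the selected entries of $v$ have squared norm at most $1$. The only difference is cosmetic: your ``main obstacle'' is not really an obstacle, since the paper indexes the martingale by all $m$ table entries (increments $v_k f(X,p)_k$ with deterministic bounds $|v_k|$, $\sum_k v_k^2 = 1$) and applies Azuma directly, which is equivalent to your node-indexed iteration of Hoeffding's lemma.
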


We defer the proof of Lemma~\ref{lem:F-moments} to Appendix~\ref{cond_good_sample}.
A slightly stronger version of Lemma~\ref{lem:F-moments} was proved in~\cite{ChengDKS18}, which discusses tail bounds for $f(X, q)$.
For our analysis, Lemma~\ref{lem:F-moments} is sufficient.


For general values of $q$, we can similarly compute the mean of $f(X, q)$:
\begin{lemma}
\label{lem:mean-fxq}
Let $\pi^P$ denote the parental configuration of $P$.
For $X \sim P$ and $f(X, q)$ as defined in Definition~\ref{def:Fxq}, we have $\E[f(X, q)] = \pi^P \circ (p-q)$.
\end{lemma}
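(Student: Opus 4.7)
The plan is to prove this by a direct coordinate-wise computation, since the claim is a statement about the mean of each of the $m$ entries of $f(X,q)$ separately. Fix an index $(i,a) \in \CondTable$. By Definition~\ref{def:Fxq}, the $(i,a)$-th coordinate of $f(X,q)$ equals $X_i - q_{i,a}$ on the event $\Pi_{i,a}$ and is zero otherwise, so I would rewrite it as $(X_i - q_{i,a}) \cdot \mathbb{1}[X \in \Pi_{i,a}]$.

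Taking expectations over $X \sim P$ and using linearity, the quantity splits as
\[
\E[f(X,q)_{i,a}] \;=\; \E\!\left[X_i \cdot \mathbb{1}[X \in \Pi_{i,a}]\right] \;-\; q_{i,a}\,\Pr_{X\sim P}[X \in \Pi_{i,a}].
\]
The second term is exactly $q_{i,a}\,\pi^P_{i,a}$ by definition of $\pi^P$. For the first term, I would condition on the parental configuration $\Pi_{i,a}$ and use the defining property of a Bayesian network: $\Pr_{X\sim P}[X_i = 1 \mid \Pi_{i,a}] = p_{i,a}$. Since $X_i$ is binary, $\E[X_i \mid \Pi_{i,a}] = p_{i,a}$, and hence $\E[X_i \cdot \mathbb{1}[X \in \Pi_{i,a}]] = p_{i,a} \pi^P_{i,a}$. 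Combining the two terms gives $\E[f(X,q)_{i,a}] = \pi^P_{i,a}(p_{i,a} - q_{i,a})$, which is precisely the $(i,a)$-th coordinate of $\pi^P \circ (p - q)$.

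Since this holds for every $(i,a)$, assembling the coordinates yields the lemma. There is no real obstacle here beyond bookkeeping: the key conceptual step is recognizing that conditioning on $\Pi_{i,a}$ lets us replace $\E[X_i]$ on that event with $p_{i,a}$ via the Bayes-net property, after which everything reduces to pulling the constant $q_{i,a}$ out of the expectation. In particular, part (i) of Lemma~\ref{lem:F-moments} is the special case $q = p$ of this computation, so the proof is essentially the same calculation carried out for a general $q$.
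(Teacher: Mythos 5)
Your proof is correct and is exactly the computation the paper has in mind: it only remarks that the mean of $f(X,q)$ can be computed "similarly" to Lemma~\ref{lem:F-moments}, and your coordinate-wise argument---writing $f(X,q)_{i,a}=(X_i-q_{i,a})\cdot\1[X\in\Pi_{i,a}]$ and conditioning on $\Pi_{i,a}$ so that $\E[X_i\mid\Pi_{i,a}]=p_{i,a}$---is precisely that calculation carried out for general $q$. No gaps; this matches the paper's approach.
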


\subsection{Deterministic Conditions on Good Samples}
\label{sec:cond}
To avoid dealing with the randomness of the good samples, we require the following deterministic conditions to hold for the original set $G^\star$ of $N$ good samples (before the adversary's corruption).

We prove in Appendix~\ref{cond_good_sample} that these three conditions hold simultaneously with probability at least $1 - \tau$ if we draw $N = \Omega(m \log(m/\tau)/\eps^2)$ samples from $P$.

The first condition states that we can obtain a good estimation of $p$ from $G^\star$.
Let $p^{G^\star}$ denote the empirical conditional probabilities over $G^\star$.
We have 
\begin{equation}
\label{eqn:empirical}
\normtwo{\sqrt{\pi^P} \circ (p - p^{G^\star})} \le O(\eps) \; .
\end{equation}


The second condition says that we can estimate the parental configuration probabilities $\pi^P$ from any $(1-2\eps)$-fraction of $G^\star$.
Formally, for any subset $T \subset G^\star$ with $|T| \ge (1 - 2\eps) N$, we have
\begin{equation}
    \label{eqn:good-pro}
    \norminf{ \pi^T - \pi^P } \le O(\eps) \; .
\end{equation}


The third condition is that the empirical mean and covariance of any $(1-2\eps)$-fraction of $G^\star$ are very close to the true mean and covariance of $f(X, p)$.
Formally, for any subset $T \subset G^\star$ with $|T| \ge (1 - 2\eps) N$, we require the following to hold for $\delta_1 = \eps \sqrt{\ln 1/\eps}$ and $\delta_2 = \eps \ln (1/\eps)$:
\begin{align}
\label{eqn:conditions-Fp}
\normtwo{\frac{1}{|T|} \sum_{i \in T}  f(X_i, p)} \le O(\delta_1) \; , \quad
\normtwo{\frac{1}{|T|}\sum_{i \in T}  f(X_i, p) f(X_i, p)^\top  - \Sigma} \le O(\delta_2) \; , 
\end{align}
where $\Sigma = \Cov[f(X, p)] = \diag(\pi^P \circ p \circ (1-p))$.


\subsection{Robust Mean Estimation and Stability Conditions}
Robust mean estimation is the problem of learning the mean of a $d$-dimensional distribution from an $\eps$-corrupted set of samples.
As we will see in later sections, to robustly learn Bayesian networks, we repeatedly use robust mean estimation algorithms as a subroutine.

Recent work~\cite{DKKLMS16,LaiRV16} gave the first polynomial-time algorithms for robust mean estimation with dimension-independent error guarantees.
The key observation in~\cite{DKKLMS16} is the following:
if the empirical mean is inaccurate, then many samples must be far from the true mean in roughly the same direction.
Consequently, these samples must alter the variance in this direction more than they distort the mean.
Therefore, if the empirical covariance behaves as we expect it to be, then the empirical mean provides a good estimate to the true mean.

Many robust mean estimation algorithms follow the above intuition, and they require the following stability condition to work (Definition~\ref{def:stability}). Roughly speaking, the stability condition states that the mean and covariance of the good samples are close to that of the true distribution, and more importantly, this continues to hold if we remove any $2\eps$-fraction of the samples.

\begin{definition}[Stability Condition (see, e.g., \cite{DK-survey-19})]
\label{def:stability}
 Fix $0 < \eps < \frac{1}{2}$.
 Fix a $d$-dimensional distribution $X$ with mean $\mu_X$.
 We say a set $S$ of samples is $(\eps, \beta, \gamma)$-stable with respect to $X$, if for every subset $T \subset S$ with $|T| \ge (1 - 2\eps)|S|$, the following conditions hold:

\begin{itemize}
    \item[(i)] $\normtwo{\frac{1}{|T|} \sum_{X \in T} \left(X - \mu_{X}\right)} \leq \beta \; ,$
    \item[(ii)] $\normtwo{\frac{1}{|T|} \sum_{X \in T}\left(X - \mu_{X}\right)(X - \mu_X)^\top - I} \leq \gamma \; .$ 
\end{itemize}
\end{definition}

Subsequent work~\cite{ChengDG19,DongHL19,DepersinL19} improved the runtime of robust mean estimation to nearly-linear time.
Formally, we use the following result from~\cite{DongHL19}.
A set $S$ is an $\eps$-corrupted version of a set $T$ if $|S| = |T|$ and $|S \setminus T| \le \eps |S|$.

\begin{lemma}[Robust Mean Estimation in Nearly-Linear Time~\cite{DongHL19}]
\label{def:oracle}
Fix a set of $N$ samples $G^\star$ in $\R^d$.
Suppose $G^\star$ is $(\eps, \beta, \gamma)$-stable with respect to a $d$-dimensional distribution $X$ with mean $\mu_X \in \R^d$.
Let $S$ be an $\eps$-corrupted version of $G^\star$.
Given as input $S$, $\eps$, $\beta$, $\gamma$, there exists an algorithm that can output an estimator $\hat \mu \in \R^d$ in time $\tilde O(N d)$, such that with high probability, 
$
\normtwo{\hat \mu - \mu_X} \le O(\sqrt{\eps \gamma} + \beta + \eps \sqrt{\log 1 / \eps}) \; .
$
\end{lemma}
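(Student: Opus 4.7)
The plan is to invoke the matrix-multiplicative-weights (MMW) filtering framework that underlies the prior nearly-linear-time robust mean estimators. The central object is a weight vector $w \in [0,1]^N$ on the input samples for which (i) $\sum_i w_i \ge (1-\eps)N$, (ii) at least a $(1-O(\eps))$-fraction of the total weight sits on the good samples $G^\star \cap S$, and (iii) the weighted covariance $M_w := \sum_i w_i (X_i - \mu_w)(X_i - \mu_w)^\top / \sum_i w_i$ satisfies $\normtwo{M_w} = O(\gamma)$. Once such a $w$ is in hand, a short duality argument (project onto the error direction and apply Cauchy--Schwarz) shows $\normtwo{\mu_w - \mu_X} \le O(\sqrt{\eps\gamma} + \beta + \eps\sqrt{\log 1/\eps})$: any direction $v$ in which the error $v^\top(\mu_w - \mu_X)$ is large would force the bad samples to have large projections onto $v$, inflating $v^\top M_w v$ beyond $O(\gamma)$, and the stability condition guarantees that the good samples cannot cancel this effect.

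The next step is to construct $w$ iteratively via MMW in ``direction space''. At iteration $t$ I maintain a density matrix $\rho_t \propto \exp\!\bigl(\eta \sum_{s<t} M_{w_s}\bigr)$ that plays the role of a smoothed top eigendirection, and perform a soft filter that shrinks $w_i$ in proportion to $(X_i - \mu_{w_t})^\top \rho_t (X_i - \mu_{w_t})$. The standard MMW regret bound, combined with a ``bad-samples-are-more-responsible'' lemma (whenever $\normtwo{M_{w_t}} \gg \gamma$, the contribution of $B \cap S$ to $\Tr(\rho_t M_{w_t})$ must exceed its weight share, or else the stability of $G^\star$ would already certify a small $\normtwo{M_{w_t}}$), yields $\tilde O(1)$ iterations to drive $\normtwo{M_w}$ below $O(\gamma)$ while losing only $O(\eps)$ extra weight from $G^\star \cap S$.

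Making each iteration run in $\tilde O(Nd)$ time is the real engineering step. A naive representation of $\rho_t$ costs $\Omega(d^2)$ space, but one only needs the $N$ scalars $(X_i - \mu_{w_t})^\top \rho_t (X_i - \mu_{w_t})$. Approximating $\rho_t$ by a degree-$\tilde O(1)$ polynomial in $\sum_{s<t} M_{w_s}$ reduces this to $\tilde O(1)$ matrix--vector products of the form $(\sum_s M_{w_s})v$, and each such product costs $O(Nd)$ because every $M_{w_s}$ factors through the $d \times N$ data matrix. A Johnson--Lindenstrauss sketch of target dimension $\tilde O(1)$ then lets me estimate all $N$ quadratic forms simultaneously within the same $\tilde O(Nd)$ budget, with high probability over the sketching matrix.

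The main obstacle I expect is controlling the composition of three approximations at once: the MMW discretization, the polynomial approximation of $\exp(\cdot)$, and the JL sketch. The cleanest route is a potential-function argument where $\Phi_t := \Tr(\rho_t)$ decreases geometrically whenever $\normtwo{M_{w_t}}$ exceeds the target $O(\gamma)$, while each filter step is, in expectation, charged more to $B \cap S$ than to $G^\star \cap S$, so the good-weight condition (ii) telescopes over all iterations into a loss of only $O(\eps)$. The polynomial degree and the JL dimension can both be chosen logarithmic in $N$, $d$, $1/\eps$, so that all approximation errors are absorbed into the constants of the final $O(\sqrt{\eps\gamma} + \beta + \eps\sqrt{\log 1/\eps})$ bound without affecting the $\tilde O(Nd)$ runtime.
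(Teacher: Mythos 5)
Your plan is, at sketch level, exactly the MMW/QUE-score filtering machinery of \cite{DongHL19} (soft filtering by quadratic scores against a matrix-exponential density, polynomial approximation of the exponential, and a JL sketch to get all $N$ scores in $\tilde O(Nd)$ time), which is precisely the algorithm this lemma cites: the paper does not reprove it but imports it as a black box, only translating the goodness condition of \cite{DongHL19} into the $(\eps,\beta,\gamma)$-stability language (Lemma~\ref{lem:stable_good} in Appendix~\ref{appendix:sparse_input}). One small imprecision to fix if you carry this out: since stability normalizes the good-sample covariance to be near the identity, your certificate/stopping condition should bound $\normtwo{M_w - I}$ by $O(\gamma + \eps\log(1/\eps))$ rather than bounding $\normtwo{M_w}$ itself by $O(\gamma)$.
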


As we will see later, a black-box use of Lemma~\ref{def:oracle} does not give the desired runtime in our setting.
Instead, we extend Lemma~\ref{def:oracle} to handle sparse input such that it runs in time nearly-linear in the number of non-zeros in the input (see Lemma~\ref{lem:nd-oracle}). 

\section{Overview of Our Approach}
\label{sec:overview}
In this section, we give an overview of our approach and highlight some of our key technical results. 

To robustly learn the ground-truth Bayesian network $P$, 
it is sufficient to learn its conditional probabilities $p \in \R^m$.
At a high level, we start with a guess $q \in \R^m$ for $p$ and then iteratively improve our guess to get closer to $p$.
For any $q \in \R^m$, we can expand the input samples into $m$-dimensional vectors $f(X, q)$ as in Definition~\ref{def:Fxq}.
We first show that the expectation of $f(X, q)$ gives us useful information about $(p-q)$.

Recall that $\pi^P$ is the parental configuration probabilities of $P$.
By Lemma~\ref{lem:mean-fxq}, we have
\[
\E_{X\sim P}[f(X, q)] = \pi^P \circ (p - q) \; .
\]
Note that if we had access to this expectation and the vector $\pi^P$, we could recover $p$ immediately: we can set
$
q' = \E[f(X, q)] \circ (1/\pi^P) + q
$
which simplifies to $q' = p$.

Note that since $S$ is an $\eps$-corrupted set of samples of $P$, we know that $\{f(X_i, q)\}_{i \in S}$ is an $\eps$-corrupted set of samples of the distribution $f(X, q)$ (with $X \sim P$). 
Therefore, we can run robust mean estimation algorithms on $\{f(X_i, q)\}_{i \in S}$ to learn $\E[f(X, q)]$.
It turns out a good approximation of $\E[f(X, q)]$ is sufficient for improving our current guess $q$.
More specifically, we can decrease $\normtwo{\pi^P \circ (p - q)}$ by a constant factor in each iteration.

There are two main difficulties in getting this approach to work.

The first difficulty is that, to use robust mean estimation algorithms, we need to show that $f(X, q)$ satisfies the stability condition in Definition~\ref{def:stability}. 
This requires us to analyze the first two moments and tail bounds of $f(X, q)$.
Consider the second moment for example.
Ideally, we would like to have a statement of the form $\Cov[f(X, q)] \approx \Cov[f(X, p)] + (p-q)(p-q)^\top$, but this is false because we only have $f(X, p)_k - f(X, q)_k = (p - q)_k$ if the $k$-th parental configuration occurs in $X$.
Intuitively, the ``error'' $(p-q)$ is shattered into all samples where each sample only gives $d$ out of $m$ coordinates of $(p-q)$, and there is no succinct representation for $\Cov[f(X, q)]$.

The second difficulty is that $f(X, q)$ is $m$-dimensional.
We cannot explicitly write down all the samples $\{f(X_i, q)\}_{i=1}^N$, because this takes time $\Omega(N m)$, which could be much slower than our desired running time of $\tilde O(N d)$.
Similarly, a black-box use of nearly-linear time robust mean estimation algorithms (e.g., Lemma~\ref{def:oracle}) runs in time $\Omega(N m)$, which is too slow.

In the rest of this section, we explain how we handle these two issues.

\paragraph{Stability Condition of $f(X, q)$.}

Because the second-order stability condition in Lemma~\ref{lem:F-moments} is defined with respect to $I$, we first scale the samples so that the covariance of $f(X, p)$ becomes $I$.
Lemma~\ref{lem:F-moments} shows that $\Cov[f(X,p)] = \diag(\pi^P \circ p \circ (1-p))$.
To make it close to $I$, we can multiply the $k$-th coordinate of $f(X, p)$ by $(\pi^P_k p_k (1-p_k))^{-1/2}$.
However, we do not know the exact value of $\pi^P$ or $p$, instead we use the corresponding empirical estimates $\pi^S$ and $q^S$ (see Algorithm~\ref{alg:1}). 

\begin{definition}
\label{def:hat-f}
Let $\pi^S$ and $q^S$ denote the parental configuration probabilities and conditional means estimated over $S$.
Let $s = 1 / \sqrt{(\pi^S \circ q^S \circ (1 - q^S))}$.
Throughout this paper, for a vector $v \in \R^m$, we use $\hat v \in \R^m$ to denote $v \circ s$.
In particular, we have $\hat{X}_i = X_i \circ s$ (and similarly $\hat{p}$, $\hat{q}$, $\hat{f}(x,q)$).
\end{definition}

Now we analyze the concentration bounds for $\hat f(X, q)$.
Formally, we prove the following lemma.

\begin{lemma}
\label{lem:hf_q_stable}
Assume the conditions in Section~\ref{sec:cond} hold for the original set of good samples $G^\star$. Then, for $\delta_1  = \eps \sqrt{\log 1/\eps}$ and $\delta_2 = \eps \log (1/\eps)$, the set of samples  $\bigl\{\hat{f}(X_i, q)\bigr\}_{i \in G^\star}$ is
\[
\left(\eps, O\Bigl(\frac{\delta_1}{\sqrt{\alpha c}} + \eps \normtwo{\hat{p} - \hat{q}}\Bigr), O\Bigl(\frac{\delta_2}{\alpha c} + B + \sqrt{B}\Bigr)\right)\text{-stable,}
\]
where $B = \| \sqrt{\pi^P} \circ (\hat{p}-\hat{q}) \|_2^2$.
\end{lemma}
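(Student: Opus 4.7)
The plan is to verify both stability conditions from Definition~\ref{def:stability} for the scaled samples $\{\hat f(X_i, q)\}_{i \in G^\star}$. The key observation driving the whole argument is the coordinatewise decomposition
\[
f(X_i, q) = f(X_i, p) + e(X_i) \circ (p - q),
\]
where $e(X_i) \in \{0,1\}^m$ has $e(X_i)_k = \1[X_i \in \Pi_k]$. Scaling by $s$ yields $\hat f(X_i, q) = \hat g_i + \hat h_i$ with $\hat g_i = s \circ f(X_i, p)$ and $\hat h_i = e(X_i) \circ (\hat p - \hat q)$, which I use throughout.

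For the first-moment condition, averaging the decomposition over any $T \subseteq G^\star$ with $|T| \ge (1-2\eps)N$ and subtracting $\E[\hat f(X,q)] = s \circ \pi^P \circ (p-q)$ from Lemma~\ref{lem:mean-fxq} gives
\[
\frac{1}{|T|}\sum_{i \in T} \hat f(X_i,q) - \E[\hat f(X,q)] \;=\; s \circ \Bigl(\frac{1}{|T|}\sum_{i \in T} f(X_i,p)\Bigr) + (\pi^T - \pi^P) \circ (\hat p - \hat q).
\]
The first summand has norm at most $\|s\|_\infty \cdot O(\delta_1) = O(\delta_1/\sqrt{\alpha c})$ by combining the first bound of~\eqref{eqn:conditions-Fp} with the entrywise estimate $s_k \le O(1/\sqrt{\alpha c})$; the second has norm at most $\|\pi^T - \pi^P\|_\infty \cdot \|\hat p - \hat q\|_2 \le O(\eps)\|\hat p - \hat q\|_2$ by~\eqref{eqn:good-pro}, which together give the claimed $\beta$.

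For the second-moment condition I expand $\hat f \hat f^\top = \hat g \hat g^\top + \hat g \hat h^\top + \hat h \hat g^\top + \hat h \hat h^\top$ and bound the three distinct averaged terms. Sandwiching the second bound of~\eqref{eqn:conditions-Fp} with $D_s := \diag(s)$ gives $\|\frac{1}{|T|}\sum_i \hat g_i \hat g_i^\top - D_s \Sigma D_s\| \le \|D_s\|^2 \cdot O(\delta_2) = O(\delta_2/(\alpha c))$, and a short separate calculation shows $\|D_s \Sigma D_s - I\| = O(\eps/(\alpha c))$, so $\|\frac{1}{|T|}\sum_i \hat g_i \hat g_i^\top - I\| = O(\delta_2/(\alpha c))$. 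For $\hat h \hat h^\top$, the rank-one identity $\|\hat h_i \hat h_i^\top\| = \|\hat h_i\|_2^2 = \sum_k (\hat p_k - \hat q_k)^2 \1[X_i \in \Pi_k]$ combined with~\eqref{eqn:good-pro} yields $\|\frac{1}{|T|}\sum_i \hat h_i \hat h_i^\top\| \le \sum_k \pi^T_k (\hat p_k - \hat q_k)^2 \le B + O(\eps)\|\hat p - \hat q\|_2^2 = O(B)$, using $\|\hat p - \hat q\|_2^2 \le B/\alpha$ together with the hypothesis that $\eps$ is small relative to $\alpha$. The cross term is then absorbed by matrix Cauchy--Schwarz, $\|\frac{1}{|T|}\sum_i \hat g_i \hat h_i^\top\| \le \sqrt{\|\frac{1}{|T|}\sum_i \hat g_i \hat g_i^\top\| \cdot \|\frac{1}{|T|}\sum_i \hat h_i \hat h_i^\top\|} = O(\sqrt{B})$, and summing the three contributions produces the stated $\gamma$.

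The main technical obstacle is justifying that the scaling $s$, which is built from the corrupted-sample quantities $\pi^S$ and $q^S$, is close enough to the idealized $1/\sqrt{\pi^P \circ p \circ (1-p)}$ that both $\|s\|_\infty = O(1/\sqrt{\alpha c})$ and $\|D_s \Sigma D_s - I\| = O(\delta_2/(\alpha c))$ actually hold. This rests on the assumptions that $P$ is $c$-balanced and that $\alpha$ is large compared to $\eps$: every parental configuration contains $\Omega(\alpha N)$ good samples, so the at most $\eps N$ adversarial samples can shift each empirical conditional mean by only $O(\eps/\alpha)$, giving a multiplicative $1 + O(\eps/(\alpha c))$ perturbation of each $s_k$. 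Once this scaling-accuracy claim is in place, the rest of the argument is the four-term expansion plus matrix Cauchy--Schwarz described above, and I do not anticipate further surprises.
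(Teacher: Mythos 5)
Your proposal follows essentially the same route as the paper's proof: the same decomposition $f(X,q)=f(X,p)+e(X)\circ(p-q)$, the same use of conditions \eqref{eqn:conditions-Fp} and \eqref{eqn:good-pro} for the first moment, the same Cauchy--Schwarz treatment of the cross term, your bound $\normtwo{\tfrac{1}{|T|}\sum_i \hat h_i\hat h_i^\top}\le\sum_k\pi^T_k(\hat p_k-\hat q_k)^2$ is exactly the paper's Lemma~\ref{lem:upper_B} (obtained there via Frobenius norm, by you via the triangle inequality on rank-one PSD terms), and your "scaling-accuracy" claim is the paper's Lemma~\ref{lem:scaling}. The one step you gloss over is that Definition~\ref{def:stability}(ii) requires the second moment centered at $\mu_X=\pi^P\circ(\hat p-\hat q)$ whereas you bound the uncentered $\tfrac{1}{|T|}\sum_i \hat f\hat f^\top$ against $I$; the paper bridges this explicitly, and since the correction is $O\bigl(\normtwo{\mu_X}^2+\normtwo{\mu_X}\,\beta\bigr)=O\bigl(B+\delta_2/(\alpha c)\bigr)$ it is absorbed into the stated $\gamma$, so this is a minor, fixable omission rather than a flaw in the approach.
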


We provide some intuition for Lemma~\ref{lem:hf_q_stable} and defer its proof to Section~\ref{stablity_fq}.

For the first moment, the difference between $\E[\hat f(X, q)]$ and the empirical mean of $\hat f(X, q)$ comes from several places.
Even if $q = p$, we would incur an error of $\delta_1$ from the concentration bound in Equation~\eqref{eqn:conditions-Fp}, which is at most $\delta_1 (\alpha c)^{-1/2}$ after the scaling by $s$.
Moreover, on average $\pi^P_k$ fraction of the samples gives us information about $(\hat p - \hat q)_k$.
Since an $\eps$-fraction of the samples are removed when proving stability, we may only have $(\pi^P_k - \eps)$-fraction instead, which introduces an error of $\eps \normtwo{\hat p - \hat q}$.
This is why the first-moment parameter is $\left(\delta_1 (\alpha c)^{-1/2} + \eps \normtwo{\hat p - \hat q}\right)$.

For the second moment, after the scaling, we have $\Cov[\hat f(X, p)] \approx I$.
Ideally, we would like to prove $\Cov[\hat f(X, q)] \approx I + (\pi^P \circ (\hat p - \hat q))(\pi^P \circ (\hat p - \hat q))^\top$, but this is too good to be true.
For two coordinates $k \neq \ell$, whether a sample gives information about $(\hat p - \hat q)_k$ or $(\hat p - \hat q)_\ell$ is not independent.
We can upper bound the probability that both parental configurations happen by $\min(\pi^P_k, \pi^P_\ell)$.
If they were independent we would have a bound of $\pi^P_k \pi^P_\ell$.
The difference in these two upper bounds is intuitively why $\sqrt{\pi^P}$ appears in the second-moment parameter.
See Section~\ref{stablity_fq} for more details.

\paragraph{Robust Mean Estimation with Sparse Input.}

To overcome the second difficulty, we exploit the sparsity of the expanded vectors.
Observe that each vector $f(X, q)$ is guaranteed to be $d$-sparse because exactly $d$ parental configuration can happen (see Definition~\ref{def:Fxq}).
The same is true for $\hat f(X, q)$ because scaling does not change the number of nonzeros.
Therefore, there are in total $O(N d)$ nonzero entries in the set of samples $\{\hat f(X, q)\}_{i \in S}$.

We develop a robust mean estimation algorithm that runs in time nearly-linear in the \emph{number of nonzeros} in the input.
Combined with the above argument, if we only invoke this mean estimation algorithm polylogarithmic times, we can get the desired running time of $\tilde O(N d)$.

\begin{lemma}
\label{lem:nd-oracle}
Consider the same setting as in Lemma~\ref{def:oracle}.
Suppose $\normtwo{X} \le R$ for all $X \in S$.
There is an algorithm $\mathscr{A}_{mean}$ with the same error guarantee that runs in time $\tilde O(\log R \cdot (\nnz(S) + N + d))$ where $\nnz(S)$ is the total number of nonzeros in $S$.
That is, given an $\eps$-corrupted version of an $(\eps, \beta, \gamma)$-stable set of $N$ samples w.r.t. a $d$-dimensional distribution with mean $\mu_X$, the algorithm $\mathscr{A}_{mean}$ outputs an estimator $\hat \mu \in \R^d$ in time $\tilde O(\log R \cdot (\nnz(S) + N + d))$ such that with high probability, 
$
\normtwo{\hat \mu - \mu_X} \le O(\sqrt{\eps \gamma} + \beta + \eps \sqrt{\log 1 / \eps}) \; .
$
\end{lemma}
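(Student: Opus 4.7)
The plan is to open up the Dong--Hopkins--Li algorithm underlying Lemma~\ref{def:oracle} and replace each of its per-iteration primitives with an input-sparsity variant, leaving the outer matrix-multiplicative-weights (MMW) regret/error analysis untouched. That algorithm maintains, at iteration $t$, a PSD dual matrix $W_t = \exp(\eta M_t)$, where $M_t \in \R^{d\times d}$ is a running (shifted) sum, over past iterations $\tau \le t$, of weighted centered empirical second moments $\sum_i w_i^{(\tau)}(X_i - \mu^{(\tau)})(X_i - \mu^{(\tau)})^\top$. At each step the algorithm (i) draws a Gaussian JL matrix $G \in \R^{d \times k}$ with $k = O(\log N)$, (ii) computes $Y_t = W_t^{1/2} G$, and (iii) forms sample scores $s_i^{(t)} = \tfrac{1}{k}\|Y_t^\top(X_i - \mu^{(t)})\|_2^2 \approx (X_i-\mu^{(t)})^\top W_t (X_i-\mu^{(t)})$ used to down-weight outliers. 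In the black-box implementation steps (ii)--(iii) cost $\Omega(Nd)$ per iteration; I will show each can be done in $\tilde O(\nnz(S)+N+d)$ time.

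The key observation is that $M_t$ never needs to be materialized. Writing $X = [X_1,\ldots,X_N]\in\R^{d\times N}$ and $D^{(\tau)} = \diag(w^{(\tau)})$, each past moment decomposes as $X D^{(\tau)} X^\top - (\vone^\top w^{(\tau)})\,\mu^{(\tau)} \mu^{(\tau)\top}$, so
\[
 M_t \;=\; X D_t X^\top \,+\, U_t \Lambda_t U_t^\top \,+\, \beta_t I,
\]
where $D_t = \diag\bigl(\sum_\tau c_\tau w^{(\tau)}\bigr)$ is an $N \times N$ diagonal updated in $O(N)$ per iteration, $U_t \in \R^{d\times t}$ stacks the past weighted means $\mu^{(\tau)}$, $\Lambda_t$ is a $t\times t$ diagonal (with possibly negative entries), and $\beta_t$ is a scalar shift. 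Hence $M_t z$ for any $z\in\R^d$ costs $O(\nnz(X) + N + td)$: $X^\top z$ costs $\nnz(X)$, the diagonal rescaling is $N$, $X(\cdot)$ is another $\nnz(X)$, and the low-rank piece is $O(td)$. Since the MMW outer loop terminates in $\tilde O(1)$ rounds at constant-factor accuracy, the $td$ term is absorbed.

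The remaining step is to approximate $W_t^{1/2} g$ for each Gaussian $g$ without forming $W_t$. I will use a truncated Chebyshev / Lanczos polynomial in $M_t$ of degree $\ell = \tilde O\bigl(\sqrt{\eta\|M_t\|}\bigr)$, which requires $O(\ell)$ matrix-vector products. Since $\normtwo{X_i} \le R$ and the weights are bounded, $\|M_t\| = \tilde O(R^2)$ and the MMW step size is $\eta = \tilde O(1/R^2)$, so $\ell = \tilde O(1)$; a coarse-to-fine refinement to set $\eta$ without a priori knowledge of $\|M_t\|$ contributes the $\log R$ factor stated in the lemma. Computing all $k$ JL columns $y_j = W_t^{1/2} g_j$ then costs $\tilde O(\log R \cdot (\nnz(X)+N+d))$, and computing the scores $s_i^{(t)} = \tfrac1k\sum_j (y_j^\top X_i - y_j^\top \mu^{(t)})^2$ is a single sparse traversal in $O(k(\nnz(X)+N))$ time. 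The main obstacle I anticipate is purely bookkeeping rather than a new concentration argument: verifying that every auxiliary subroutine in Dong--Hopkins--Li---the $\ell_2$ truncation used to tame heavy tails, the termination test on $\|M_t\|$, and the final averaging producing $\hat\mu$---respects the implicit $(X, D_t, U_t, \Lambda_t, \beta_t)$ representation and never densifies into a $d\times d$ or $N\times d$ object. Once this is verified, the MMW regret bound carries over verbatim and the error guarantee of Lemma~\ref{def:oracle} is inherited unchanged.
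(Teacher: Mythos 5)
Your proposal is correct and follows essentially the same route as the paper: keep the Dong--Hopkins--Li MMWU algorithm and its error analysis intact, and speed up only the per-iteration JL-sketched scores $(X_i-\mu(w^t))^\top \exp(\alpha\sum_\tau\Sigma(w^\tau))(X_i-\mu(w^t))$ by never materializing the weighted covariance sum and instead applying a low-degree polynomial approximation of the matrix exponential through sparse matrix--vector products in $\tilde O(\nnz(S)+N+d)$ time per sketch row. The only (immaterial) differences are that the paper uses a degree-$O(\log d)$ Taylor polynomial evaluated by Horner's method and cites the DHL score-oracle correctness directly, while you propose Chebyshev/Lanczos and a slightly different accounting of where the $\log R$ factor arises.
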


We prove Lemma~\ref{lem:nd-oracle} by extending the algorithm in~\cite{DongHL19} to handle sparse input.
The main computation bottleneck of recent nearly-linear time robust mean estimation algorithms~\cite{ChengDKS18,DongHL19} is in using the matrix multiplicative weight update (MMWU) method.
In each iteration of MMWU, a score is computed for each sample.
Roughly speaking, this score indicates whether one should continue to increase the weight on the corresponding sample.
Previous algorithms use the Johnson-Lindenstrauss lemma to approximate the scores for all $N$ samples simultaneously. 
We show that the sparsity of the input vectors allows for faster application of the Johnson-Lindenstrauss lemma, and all $N$ scores can be computed in time nearly-linear in $\nnz(S)$.

We defer the proof of Lemma~\ref{lem:nd-oracle} to Section~\ref{sparse_input}.

\section{Stability Condition of \texorpdfstring{$\hat{f}(X, q)$}{\hat{f}(X, q)}}
\label{stablity_fq}

In this section, we prove the stability condition for the samples $\hat f(X, q)$ (Lemma~\ref{lem:hf_q_stable}).
Recall the definition of $\hat f(X, q)$ from Definitions~\ref{def:Fxq}~and~\ref{def:hat-f}.
We first restate Lemma~\ref{lem:hf_q_stable}.

{\bf Lemma~\ref{lem:hf_q_stable}.~}
{\em
Assume the conditions in Section~\ref{sec:cond} hold for the original set of good samples $G^\star$. Then, for $\delta_1  = \eps \sqrt{\log 1/\eps}$ and $\delta_2 = \eps \log (1/\eps)$, the set of samples  $\bigl\{\hat{f}(X_i, q)\bigr\}_{i \in G^\star}$ is
\[
\left(\eps, O\Bigl(\frac{\delta_1}{\sqrt{\alpha c}} + \eps \normtwo{\hat{p} - \hat{q}}\Bigr), O\Bigl(\frac{\delta_2}{\alpha c} + B + \sqrt{B}\Bigr)\right)\text{-stable,}
\]
where $B = \| \sqrt{\pi^P} \circ (\hat{p}-\hat{q}) \|_2^2$.
}

We will prove the stability of $f(X,q)$.
The stability of $\hat f(X, q)$ follows directly.
We introduce a matrix $C_{D,q}$ which is crucial in proving the stability of $f(X, q)$.
Intuitively, the matrix $C_{D,q}$ is related to the difference in the covariance of $f(X, p)$ and that of $f(X, q)$ on the sample set $D$.

\begin{definition}
For any set $D$ of samples $\{X_i\}_{i \in D}$, we define the following $m \times m$ matrix
\[
C_{D, q} = \frac{1}{|D|} \sum_{i \in D}  (f(X_i, p) - f(X_i, q)) (f(X_i, p) - f(X_i, q))^\top \; .
\]
\end{definition}

Observe that for $x \in \{0, 1\}^d$ with $x \notin \Pi_k$, we have $f(x,p)_k = f(x,q)_k = 0$.
On the other hand, if $x \in \Pi_k$ for some $k = (i,a)$, we have $f(x,p)_k-f(x,q)_k=(x_i-p_k)-(x_i-q_k) = q_k-p_k$.

In the very special case where all parental configurations happen (i.e., a binary product distribution), we would have $C_{D,q} = (p-q)(p-q)^\top$.
However, in general the information related to $(p-q)$ is spread among the samples.
We show that even though $C_{D,q}$ does not have a succinct representation, we can prove the following upper bound on the spectral norm of $C_{D,q}$.

\begin{lemma}
\label{lem:upper_B}
$\normtwo{C_{D, q}} \le \sum_{k } \pi^D_k (p_k - q_k)^2\; .$
\end{lemma}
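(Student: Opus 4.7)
Let $g(X,q) \eqdef f(X,p) - f(X,q)$, so that $C_{D,q} = \frac{1}{|D|}\sum_{i \in D} g(X_i,q) g(X_i,q)^\top$. The plan is to bound the spectral norm by bounding the quadratic form $v^\top C_{D,q} v$ for an arbitrary unit vector $v \in \R^m$.

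First I would write down the explicit form of $g(X,q)$: using the observation already stated in the excerpt, $g(X,q)_k = (q_k - p_k)\cdot \1[X \in \Pi_k]$. For a fixed sample $X_i$, let $S_i = \{k : X_i \in \Pi_k\}$ be the set of parental configurations realized by $X_i$. Then
\[
v^\top g(X_i, q) = \sum_{k \in S_i} v_k (q_k - p_k).
\]
Next I would apply Cauchy--Schwarz to this inner product:
\[
\bigl(v^\top g(X_i, q)\bigr)^2 \le \Bigl(\sum_{k \in S_i} v_k^2\Bigr) \Bigl(\sum_{k \in S_i} (q_k - p_k)^2\Bigr) \le \sum_{k \in S_i} (q_k - p_k)^2,
\]
where the second inequality uses $\sum_{k \in S_i} v_k^2 \le \|v\|_2^2 = 1$. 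This step is where the sparsity of each $g(X_i,q)$ enters: only coordinates in $S_i$ contribute, and the restriction of $v$ to any subset has squared norm at most $1$.

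Finally I would swap the order of summation. Averaging the bound above over $i \in D$ and rewriting the inner sum as $\sum_k \1[X_i \in \Pi_k](q_k-p_k)^2$, I get
\[
v^\top C_{D,q} v \;=\; \frac{1}{|D|}\sum_{i\in D}\bigl(v^\top g(X_i,q)\bigr)^2 \;\le\; \frac{1}{|D|}\sum_{i \in D}\sum_{k} \1[X_i \in \Pi_k](q_k-p_k)^2 \;=\; \sum_k \pi^D_k (p_k-q_k)^2,
\]
using the definition $\pi^D_k = \frac{1}{|D|}|\{i\in D : X_i \in \Pi_k\}|$. Since $v$ was an arbitrary unit vector, taking the supremum yields the claimed bound on $\|C_{D,q}\|_2$.

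I don't anticipate a real obstacle; the only subtle point is noticing that Cauchy--Schwarz should be applied so that the $v$-factor is absorbed by $\|v\|_2 \le 1$ while the $(p-q)$-factor is kept intact, since this is precisely what lets the subsequent swap of sums produce $\pi^D_k$ as the coefficient of $(p_k-q_k)^2$. Grouping it the other way (e.g., trying to pull out $\sum_k v_k^2 (p_k-q_k)^2$) would produce terms the lemma does not control.
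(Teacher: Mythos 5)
Your proof is correct, but it takes a different route from the paper's. You bound the quadratic form directly: writing $g(X_i,q)_k=(q_k-p_k)\1[X_i\in\Pi_k]$, applying Cauchy--Schwarz over the set $S_i$ of configurations realized by each sample (absorbing the $v$-factor into $\normtwo{v}\le 1$), and swapping sums so the empirical frequencies $\pi^D_k$ appear as coefficients of $(p_k-q_k)^2$. The only implicit step is that $\sup_{\normtwo{v}=1}v^\top C_{D,q}v$ equals $\normtwo{C_{D,q}}$, which holds because $C_{D,q}$ is an average of outer products and hence PSD. The paper instead works entrywise: it bounds $\abs{(C_{D,q})_{k,\ell}} = \Pr_D[\Pi_k\wedge\Pi_\ell]\,\abs{(p_k-q_k)(p_\ell-q_\ell)} \le \sqrt{\pi^D_k}\abs{p_k-q_k}\cdot\sqrt{\pi^D_\ell}\abs{p_\ell-q_\ell}$ using $\Pr_D[\Pi_k\wedge\Pi_\ell]\le\min\{\pi^D_k,\pi^D_\ell\}\le\sqrt{\pi^D_k\pi^D_\ell}$, and then passes through the Frobenius norm, $\normtwo{C}^2\le\fnorm{C}^2\le\bigl(\sum_k\pi^D_k(p_k-q_k)^2\bigr)^2$. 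Both arguments give exactly the stated bound; yours is more direct and variational, exploiting the per-sample sparsity of $g(X_i,q)$ and avoiding the Frobenius detour, while the paper's entrywise view makes explicit the role of the joint configuration probabilities (the $\min$ versus product comparison), which is the intuition the authors highlight in their overview for why $\sqrt{\pi^P}$ shows up in the second-moment stability parameter.
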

 
\begin{proof}

For notational convenience, let $C = C_{D,q}$.
For every $1 \le k, \ell \le m$, we have

\begin{align*}
\abs{C_{k, \ell}} = \abs{(\Pr_D[\Pi_k \wedge \Pi_\ell])(p_k - q_k)(p_\ell - q_\ell)} 
& \le \min\{\pi_k^D, \pi_\ell^D\} \cdot |(p_k-q_k) (p_\ell-q_\ell)|  \\
&\le \left(\sqrt{\pi_k^D}|(p_k-q_k)|\right) \cdot \left(\sqrt{\pi_\ell^D}|(p_\ell-q_\ell)|\right) \;
\end{align*}

We can upper bound the spectral norm of $C$ in term of its Frobenius norm:
\[
\normtwo{C}^2 \le \|C\|_F^2 = \sum_{k, \ell} C_{k, \ell}^2 \le \sum_{k, \ell}\left( \pi_k^D  (p_k - q_k)^2\right)\left( \pi_\ell^D  (p_\ell - q_\ell)^2\right) \le \left( \sum_k \pi_k^D (p_k - q_k)^2\right)^2 \;. \qedhere
\]
\end{proof}

The following lemma essentially proves the stability of $f(X, q)$, except that in the second-order condition, we should have $\Cov(f(X,q))$ instead of $\Sigma$.
We will bridge this gap in Lemma~\ref{lem:scaling}.
\begin{lemma}
\label{lem:f_q_stable}
Assume the conditions in Section~\ref{sec:cond} hold. For $\delta_1  = \eps \sqrt{\log 1/\eps}$ and $\delta_2 = \eps \log (1/\eps)$, we have that for any subset $T \subset G^\star$ with $|T| \ge (1 - \eps) |G^\star|$,
\begin{align*}
\begin{split}
&\normtwo{\frac{1}{|T|}\sum_{i \in T} (f(X_i, q) - \pi^P  \circ (p - q))} \le O(\delta_1 + \eps\normtwo{p - q})\; \text{, and}\\
& \normtwo{\frac{1}{|T|} \sum_{i \in T}  (f(X_i, q) - \pi^P  \circ (p - q)) (f(X_i, q) - \pi^P \circ (p - q))^\top - \Sigma} \le O(\delta_2 + B + \sqrt{B})\;
\end{split}
\end{align*}
where $B = \normtwo{\sqrt{\pi^P} \circ (p - q)}^2 \le \frac{1}{\alpha} \normtwo{\pi^P \circ (p - q)}^2$, and $\Sigma = \diag(\pi^P \circ p \circ (1-p))$ is the true covariance of $f(X, p)$ .
\end{lemma}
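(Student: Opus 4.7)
The key decomposition is $g_i := f(X_i, q) - \pi^P \circ (p-q) = u_i + v_i$, where $u_i = f(X_i, p)$ and $v_i = (p-q) \circ (\mathbf{1}_{X_i} - \pi^P)$; here $\mathbf{1}_{X_i} \in \{0,1\}^m$ is the parental-configuration indicator vector of $X_i$, with $k$-th entry $1$ iff $X_i \in \Pi_k$. This splitting follows from the pointwise identity $f(X_i, q) - f(X_i, p) = (p-q) \circ \mathbf{1}_{X_i}$, which is immediate from Definition~\ref{def:Fxq}. For the first moment, $\frac{1}{|T|}\sum_i g_i = \frac{1}{|T|}\sum_i u_i + (p-q) \circ (\pi^T - \pi^P)$; the first summand is bounded by $O(\delta_1)$ by the concentration bound~\eqref{eqn:conditions-Fp}, while the second is bounded by $\norminf{\pi^T-\pi^P} \cdot \normtwo{p-q} \le O(\eps)\normtwo{p-q}$ using~\eqref{eqn:good-pro}.

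For the second moment, I expand $g_i g_i^\top = u_i u_i^\top + u_i v_i^\top + v_i u_i^\top + v_i v_i^\top$. The empirical average of $u_i u_i^\top$ is within spectral distance $O(\delta_2)$ of $\Sigma$, directly by~\eqref{eqn:conditions-Fp}. For the $v_i v_i^\top$ piece, introduce $\tilde w_i = (p-q)\circ \mathbf{1}_{X_i} = f(X_i,q)-f(X_i,p)$, $\tilde a = (p-q) \circ \pi^P$, and $\tilde b = (p-q) \circ \pi^T$, and expand $v_i = \tilde w_i - \tilde a$ to get
\[
\frac{1}{|T|}\sum_i v_i v_i^\top = C_{T,q} - \tilde b \tilde a^\top - \tilde a \tilde b^\top + \tilde a \tilde a^\top.
\]
Lemma~\ref{lem:upper_B} combined with~\eqref{eqn:good-pro} gives $\normtwo{C_{T,q}} \le \sum_k \pi^T_k (p_k-q_k)^2 \le B + O(\eps)\normtwo{p-q}^2$; the standing lower bound on $\alpha$ ensures $\normtwo{p-q}^2 \le B/\alpha$ and hence $\eps\normtwo{p-q}^2 \le O(B)$, so $\normtwo{C_{T,q}} = O(B)$. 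Each rank-one piece is also $O(B)$ because $\normtwo{\tilde a}^2 = \sum_k (\pi^P_k)^2 (p_k-q_k)^2 \le B$ (using $\pi^P_k \le 1$), and analogously $\normtwo{\tilde b}^2 = O(B)$.

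The main obstacle is the cross term $\frac{1}{|T|}\sum_i u_i v_i^\top$, which I split as $\frac{1}{|T|}\sum_i u_i \tilde w_i^\top - \bigl(\frac{1}{|T|}\sum_i u_i\bigr)\tilde a^\top$. The second piece is bounded by $O(\delta_1) \cdot \normtwo{\tilde a} \le O(\delta_1 \sqrt{B}) \le O(\delta_2 + B)$ via AM-GM. For the first piece I plan to invoke the matrix Cauchy--Schwarz inequality $\normtwo{XY^\top} \le \sqrt{\normtwo{XX^\top} \cdot \normtwo{YY^\top}}$ applied to $X = [u_1,\dots,u_{|T|}]$ and $Y = [\tilde w_1,\dots,\tilde w_{|T|}]$; this yields $\normtwo{\frac{1}{|T|}\sum_i u_i \tilde w_i^\top} \le \sqrt{(\normtwo{\Sigma}+O(\delta_2)) \cdot \normtwo{C_{T,q}}} = O(\sqrt{B})$, using $\normtwo{\Sigma} \le \max_k \pi^P_k p_k(1-p_k) \le 1/4$. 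Summing the three estimates gives the target $O(\delta_2 + B + \sqrt{B})$ bound on $\normtwo{\frac{1}{|T|}\sum_i g_i g_i^\top - \Sigma}$, completing the proof.
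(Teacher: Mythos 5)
Your proposal is correct and takes essentially the same route as the paper's proof: both rest on the pointwise identity $f(X,q)-f(X,p)=(p-q)\circ\mathbf{1}_{X}$, the deterministic conditions of Section~\ref{sec:cond}, the spectral bound on $C_{T,q}$ from Lemma~\ref{lem:upper_B} (with $\alpha=\Omega(\eps)$ absorbing the $\eps\normtwo{p-q}^2$ slack), and a Cauchy--Schwarz step for the cross term. The only difference is bookkeeping: you expand the centered vectors $g_i$ directly and control the resulting rank-one corrections (using the operator-norm form of Cauchy--Schwarz and AM-GM), whereas the paper first bounds the uncentered second moment $\frac{1}{|T|}\sum_i f(X_i,q)f(X_i,q)^\top$ against $\Sigma$ and then accounts for recentering at $\pi^P\circ(p-q)$; the resulting bounds coincide.
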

\begin{proof}

For the first moment, 
we have 
\begin{align*}
&\normtwo{\frac{1}{|T|}\sum_{i \in T} (f(X_i, q) - \pi^P  \circ (p - q))} \\
&= \normtwo{\frac{1}{|T|}\sum_{i \in T} ( f(X_i, p) + f(X_i, q) - f(X_i, p) - \pi^P  \circ (p - q))} \\
&\le \normtwo{\frac{1}{|T|}\sum_{i \in T} f(X_i, p)} + \normtwo{\frac{1}{|T|}\sum_{i \in T} (f(X_i, q) - f(X_i, p) - \pi^P (p - q))} \\
&= \normtwo{\frac{1}{|T|}\sum_{i \in T} f(X_i, p)} +\normtwo{(\pi^T-\pi^P) \circ (p - q)}  
= O(\delta_1 + \eps \normtwo{p - q}) \; .
\end{align*}

For the second moment, consider any unit vector $v \in \R^m$. We have 
\begin{align*}
& v^\top \left(\frac{1}{|T|}\sum_{i \in T} f(X_i, q)f(X_i, q)^\top \right) v = \frac{1}{|T|}\sum_{i \in T}  \inner{f(X_i, q), v}^2 \\
& = \frac{1}{|T|}\sum_{i \in T} \left(\inner{f(X_i, p), v}^2 + \inner{f(X_i, p) - f(X_i, q), v}^2 + 2\inner{f(X_i, p), v}\inner{f(X_i, p) - f(X_i, q), v} \right) \\
& \le \frac{1}{|T|}\sum_{i \in T} \left(\inner{f(X_i, p), v}^2 +  \inner{f(X_i, p) - f(X_i, q), v}^2\right) \\
& \qquad + 2\sqrt{\frac{1}{|T|}\left(\sum_{i \in T} \inner{f(X_i, p), v}^2\right)\left(\frac{1}{|T|}\sum_{i \in T} \inner{f(X_i,p) - f(X_i, q), v}^2\right)}
\end{align*}
where the last inequality follows from the Cauchy-Schwarz inequality.
Therefore, we have
\begin{align*}
& \normtwo{ \frac{1}{|T|}\sum_{i \in T} f(X_i, q)f(X_i, q)^\top - \Sigma} \\ &\le \normtwo{\frac{1}{|T|}\sum_{i \in T} f(X_i, p)f(X_i, p)^\top - \Sigma} \\
&+ \normtwo{\frac{1}{|T|}\sum_{i \in T} (f(X_i, p) - f(X_i, q)) (f(X_i, p) - f(X_i, q))^\top} \\
& + 2\sqrt{\normtwo{\frac{1}{|T|}\sum_{i \in T} f(X_i, p)f(X_i, p)^\top} \normtwo{\frac{1}{|T|}\sum_{i \in T} (f(X_i, p) - f(X_i, q)) (f(X_i, p) - f(X_i, q))^\top}} \\
& \le \delta_2 + \normtwo{C_{T, q}} + 2\sqrt{1+\delta_2} \sqrt{\normtwo{C_{T, q}}} = O\left(\delta_2 + \normtwo{C_{T, q}} + \sqrt{\normtwo{C_{T, q}}}\right)
\end{align*}
Finally, we show that the second moment matrix $\frac{1}{|T|}\sum_{i \in T} f(X_i, q)f(X_i, q)^\top$ is not too far from the empirical covariance matrix of $f(X, q)$.
\begin{align*}
    &\normtwo{ \frac{1}{|T|}\sum_{i \in T} f(X_i, q)f(X_i, q)^\top - \frac{1}{|T|}\sum_{i \in T} (f(X_i, q) - \pi^P  \circ (p - q)) (f(X_i, q) - \pi^P \circ (p - q))^\top} \\
    &\le 2\normtwo{\frac{1}{|T|}\sum_{i \in T} (f(X_i, q) - \pi^P  \circ (p - q))}\normtwo{\pi_P \circ (p - q)} + \normtwo{\pi_P \circ (p - q)}^2 \\
    &\le O\left((\delta_1 + \eps\normtwo{p - q})\normtwo{\pi_P \circ (p - q)}\right)
    \le O \left(\normtwo{\pi_P \circ (p - q)}^2\right) \; .
\end{align*}
Putting everything together and using Lemma~\ref{lem:upper_B}, we conclude this proof.
\end{proof}

The stability of $\hat f(X,q)$ follows from the stability of $f(X,q)$ (Lemma~\ref{lem:f_q_stable}), scaling all samples by $s$, and replacing $\hat \Sigma$ with $I$ in the second-order condition using Lemma~\ref{lem:scaling}.
\begin{lemma}
\label{lem:scaling}
Assume the conditions in Section~\ref{sec:cond} hold. Then after scaling, we have 
\[
\normtwo{\hat{\Sigma}- I} \le O\bigl(\frac{\eps}{\alpha c}\bigr) \; .
\]
where $\hat{\Sigma}$ is the covariance matrix of $\hat{f}(X, p)$.
\end{lemma}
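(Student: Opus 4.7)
}
The plan is to exploit the fact that $\hat\Sigma$ is diagonal, so bounding its spectral distance from $I$ reduces to controlling the maximum deviation of a single diagonal entry. Since $\hat f(X,p) = s\circ f(X,p)$ where $s$ depends only on the (fixed) sample set $S$ and not on $X$, Lemma~\ref{lem:F-moments}(ii) gives
\[
\hat\Sigma \;=\; \diag(s)\,\Sigma\,\diag(s) \;=\; \diag\!\left(\frac{\pi^P \circ p \circ (1-p)}{\pi^S \circ q^S \circ (1-q^S)}\right),
\]
so $\normtwo{\hat\Sigma - I} = \max_k \left|\frac{\pi^P_k\, p_k(1-p_k)}{\pi^S_k\, q^S_k(1-q^S_k)} - 1\right|$, and it suffices to bound this ratio coordinate-wise.

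Next I would establish two coordinate-wise closeness bounds and combine them. First, applying condition~\eqref{eqn:good-pro} to $T = G^\star$ and noting that $S$ differs from $G^\star$ in at most $\eps N$ points gives $|\pi^S_k - \pi^P_k| \le O(\eps)$ for every $k$; since $\pi^P_k \ge \alpha$ and $\eps \ll \alpha$ under our standing assumptions, this yields $\pi^S_k/\pi^P_k = 1 + O(\eps/\alpha)$. Second, condition~\eqref{eqn:empirical} gives $|p^{G^\star}_k - p_k| \le O(\eps/\sqrt{\pi^P_k}) \le O(\eps/\sqrt{\alpha})$ coordinate-wise, and comparing the empirical conditional mean on $S$ with that on $G^\star$: since at most $\eps N$ samples are swapped while at least $\Omega(\alpha N)$ lie in $\Pi_k$ (by the $\pi^S_k$-bound), an adversary can shift the conditional mean on $\Pi_k$ by at most $O(\eps/\alpha)$, so $|q^S_k - p_k| \le O(\eps/\alpha)$.

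The remaining step is algebraic. Writing $\Delta_k = q^S_k - p_k$, we have $q^S_k(1-q^S_k) - p_k(1-p_k) = \Delta_k(1 - 2p_k) - \Delta_k^2$, whose magnitude is $O(\eps/\alpha)$. Since $P$ is $c$-balanced, $p_k(1-p_k) \ge c(1-c) = \Omega(c)$, so $q^S_k(1-q^S_k)/(p_k(1-p_k)) = 1 + O(\eps/(\alpha c))$. Combining with the ratio $\pi^S_k/\pi^P_k = 1 + O(\eps/\alpha)$ and inverting, each diagonal entry of $\hat\Sigma$ equals $1 + O(\eps/(\alpha c))$ uniformly in $k$, establishing the claimed spectral bound.

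The main obstacle I anticipate is the second closeness bound: turning the weighted $\ell_2$-guarantee of~\eqref{eqn:empirical} (which only controls a $\sqrt{\pi^P}$-weighted average of conditional-mean errors) together with the at-most-$\eps N$ adversarial swaps into a coordinate-wise $O(\eps/\alpha)$ bound on $|q^S_k - p_k|$. The tight case is a configuration $k$ with $\pi^P_k$ close to $\alpha$, where the adversary can concentrate all corruptions into $\Pi_k$ and shift the empirical conditional mean by $\Theta(\eps/\alpha)$; this is precisely why the final scaling error degrades as $1/(\alpha c)$ rather than $1/c$.
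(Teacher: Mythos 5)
Your proposal is correct and follows essentially the same route as the paper's proof: both reduce to controlling the diagonal entries $\pi^P_k p_k(1-p_k)$ versus $\pi^S_k q^S_k(1-q^S_k)$, using $\norminf{\pi^S-\pi^P}=O(\eps)$, the $O(\eps/\alpha)$ shift of the empirical conditional mean from the counting argument on at most $\eps N$ swapped samples within a configuration of mass $\Omega(\alpha)$, condition~\eqref{eqn:empirical} for $p^{G^\star}$ versus $p$, and the lower bounds $\pi_k = \Omega(\alpha)$, $p_k(1-p_k)=\Omega(c)$. The only difference is bookkeeping (you track multiplicative ratios per factor, the paper bounds the additive error $O(\eps)$ on the product and then multiplies by $s_k^2 = O(1/(\alpha c))$), which yields the same bound.
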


\begin{proof}
We recall that $\pi_k^P p_k( 1 - p_k) \ge \frac{1}{2}\pi_k^P \min(p_k, 1-p_k) = \Omega(\alpha c)$.
Because $\norminf{s} = O(1 / \sqrt{\alpha c})$, it suffices to show that
\[
\normtwo{\Cov(s \circ f(X, p)) - I} \le O(\eps) \; .
\]
In other words, we need to show
\[
\norminf{\pi^P \circ p \circ (1 - p) - \pi^S \circ q^S \circ (1 - q^S)} = O(\eps) \; .
\]

Let $\pi^{G^\star}$ and $p^{G^\star}$ be the empirical parental configuration probabilities and conditional probabilities of $P$ given by $G^\star$.
We first prove that
\[
\norminf{\pi^{G^\star} \circ p^{G^\star} \circ (1 - p^{G^\star}) - \pi^S \circ q^S \circ (1 - q^S)} = O(\eps) \; .
\]

Note that $\norminf{\pi^{G^\star} - \pi^S} = O(\eps)$ and $q^S_k(1-q^S_k) < 1$, so it is sufficient to show that
\[
\norminf{\pi^{G^\star} \circ p^{G^\star} \circ (1 - p^{G^\star}) - \pi^{G^\star} \circ q^S \circ (1 - q^S)} = O(\eps) \; .
\]
We have
\begin{align*}
    &\norminf{\pi^{G^\star} \circ p^{G^\star} \circ (1 - p^{G^\star}) - \pi^{G^\star} \circ q^S \circ (1 - q^S)}  \\
    &= \norminf{\pi^{G^\star} \circ p^{G^\star}  - \pi^{G^\star} \circ q^S - \pi^{G^\star} \circ (p^{G^\star} + q^S) \circ  (p^{G^\star} - q^S)}
    \le 3 \norminf{\pi^{G^\star} \circ (p^{G^\star} - q^S)}\; . 
\end{align*}

Let $n_k$ denote the number of times that the event $\Pi_k$ happens over $G^\star$, and let $t_k$ be the number of times that $X(i) = 1$ when $\Pi_k = \Pi_{i,a}$ happens.
Because $S$ is obtained by changing at most $\eps N$ samples in $G^\star$, we can get
\[
|\pi^{G^\star}_k \circ (p^{G^\star}_k - q^S_k)| \le \frac{n_k}{N} \cdot (\frac{t_k + \eps N}{n_k - \eps N} - \frac{t_k}{n_k}) = \frac{n_k(n_k + t_k)\eps N}{N n_k(n_k - \eps N)} \le \frac{2n_k \eps N}{0.5 N n_k} = 4 \eps\; .
\]
The last inequality follows from $t_k \le n_k$ and $n_k - \eps N \ge 0.5 n_k$ (because we assume the minimum parental configuration probability is $\Omega(\eps)$).

This concludes $\norminf{\pi^{G^\star} \circ p^{G^\star} \circ (1 - p^{G^\star}) - \pi^S \circ q^S \circ (1 - q^S)} = O(\eps) $.

Similarly, in order to prove that 
\[
\norminf{\pi^P \circ p \circ (1 - p) -\pi^{G^\star} \circ p^{G^\star} \circ (1 - p^{G^\star})} = O(\eps) \; .
\]

We just need to show $\normtwo{\pi^P \circ (p - p^{G^\star})} = O(\eps)$, which is follows from $(ii)$ in  Lemma~\ref{lem:A1}.

An application of triangle inequality finishes this proof.
\end{proof}

We are now ready to prove Lemma~\ref{lem:hf_q_stable}.

\begin{proof}[Proof of Lemma~\ref{lem:hf_q_stable}]
By Lemma~\ref{lem:f_q_stable} and the fact that $\norminf{s} = O(1 / \sqrt{\alpha c})$, we know that for any subset $T \subset G^\star$ with $|T| \ge (1 - \eps) |G^\star|$, we have
\begin{align*}
\begin{split}
&\normtwo{\frac{1}{|T|}\sum_{i \in T} (\hat{f}(X_i, q) - \pi^P  \circ (\hat{p} - \hat{q}))} \le O(\frac{\delta_1}{\sqrt{\alpha c}} + \eps\normtwo{\hat{p} - \hat{q}})\; \text{, and}\\
& \normtwo{\frac{1}{|T|} \sum_{i \in T}  (\hat{f}(X_i, q) - \pi^P  \circ (\hat{p} - \hat{q})) (\hat{f}(X_i, q) - \pi^P \circ (\hat{p} - \hat{q}))^\top - \hat{\Sigma}} \le O(\frac{\delta_2}{\alpha c} + B + \sqrt{B})\;
\end{split}
\end{align*}
where $B = \normtwo{\sqrt{\pi^P} \circ (\hat{p} - \hat{q})}^2$, and $\hat{\Sigma}$ is the true covariance of $\hat{f}(X, p)$.
This is because the scaling is applied to all vectors on both sides of the inequalities, so we only need to scale the scalars $\delta_1$ and $\delta_2$ appropriately.

We conclude the proof by replacing $\hat \Sigma$ in the second-order condition with $I$ using  Lemma~\ref{lem:scaling}.
\end{proof}

\section{Robust Learning of Bayesian Networks in Nearly-Linear Time}
\label{sec:main}

In this section, we prove our main result.
We present our algorithm (Algorithm~\ref{alg:1}) and prove its correctness and analyze its running time (Theorem~\ref{thm:main-result}).

\begin{theorem}[]
\label{thm:main-result}
Fix $0 < \eps < \eps_0$ where $\eps_0$ is a sufficiently small universal constant. Let $P$ be a $c$-balanced Bayesian network on $\{0, 1\}^d$ with known structure $H$.
Let $\alpha$ be the minimum parental configuration probability of $P$.
Assume $\alpha = \tilde \Omega(\eps^{2/3} c^{-1/3})$.

Let $S$ be an $\eps$-corrupted set of 
$N = \tilde{\Omega}(m/\eps^2)$ samples drawn from $P$.
Given $H$, $S$, $\eps$, $c$, and $\alpha$, Algorithm~\ref{alg:1} outputs a Bayesian network $Q$ in time $\tilde O(N d)$ such that, with probability at least $9/10$,
$\dtv(P,Q) \leq O( \eps\sqrt{\log(1/\eps)}/ \sqrt{\alpha c})$.
\end{theorem}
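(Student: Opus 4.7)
The plan is to analyze an iterative refinement that starts from the empirical guess $q^{(0)} = q^S$ and, at each round $t$, applies the nearly-input-sparsity-time robust mean estimator $\mathscr{A}_{\mathrm{mean}}$ of Lemma~\ref{lem:nd-oracle} to the $d$-sparse expansions $\{\hat f(X_i, q^{(t)})\}_{i \in S}$ to produce a refined guess. After $T = O(\log(m/\eps))$ rounds, I would convert the resulting bound on conditional probabilities into a total variation bound via Lemma~\ref{lem:dtv-bn}. Throughout, I assume the deterministic conditions~\eqref{eqn:empirical},~\eqref{eqn:good-pro}, and~\eqref{eqn:conditions-Fp} hold on $G^\star$, which happens with high probability for $N = \tilde\Omega(m/\eps^2)$.

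The scaling vector $s$ is computed once from $S$. By Lemma~\ref{lem:mean-fxq} and Definition~\ref{def:hat-f}, the true mean of the rescaled expansion is $\E_{X\sim P}[\hat f(X, q^{(t)})] = \pi^P \circ (\hat p - \hat q^{(t)})$. Lemma~\ref{lem:hf_q_stable} certifies that $\{\hat f(X_i, q^{(t)})\}_{i \in G^\star}$ is $(\eps, \beta_t, \gamma_t)$-stable with
\[
\beta_t = O\!\Bigl(\tfrac{\delta_1}{\sqrt{\alpha c}} + \eps\,\|\hat p - \hat q^{(t)}\|_2\Bigr), \qquad \gamma_t = O\!\Bigl(\tfrac{\delta_2}{\alpha c} + B^{(t)} + \sqrt{B^{(t)}}\Bigr),
\]
where $B^{(t)} = \|\sqrt{\pi^P} \circ (\hat p - \hat q^{(t)})\|_2^2$. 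Since the input $\{\hat f(X_i, q^{(t)})\}_{i\in S}$ is an $\eps$-corruption of this stable set, Lemma~\ref{lem:nd-oracle} outputs $\hat\mu^{(t)}$ satisfying $\|\hat\mu^{(t)} - \pi^P \circ (\hat p - \hat q^{(t)})\|_2 = O(\sqrt{\eps\gamma_t} + \beta_t + \eps\sqrt{\log(1/\eps)})$.

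The update rule is $\hat q^{(t+1)} = \hat q^{(t)} + \hat\mu^{(t)} / \pi^S$, motivated by the fact that dividing the exact mean $\pi^P\circ(\hat p - \hat q^{(t)})$ coordinatewise by $\pi^P$ returns $\hat p - \hat q^{(t)}$. Using $\|\pi^P - \pi^S\|_\infty = O(\eps)$ from~\eqref{eqn:good-pro}, $\pi^S_k = \Omega(\alpha)$, and $\|\hat p - \hat q^{(t)}\|_2 \le \sqrt{B^{(t)}/\alpha}$, I would decompose $\sqrt{\pi^P}\circ(\hat p - \hat q^{(t+1)})$ into a mean-estimation-error piece plus a proxy-mismatch piece, bound each using the stability parameters, and obtain a per-step recursion $\sqrt{B^{(t+1)}} \le \tfrac{1}{2}\sqrt{B^{(t)}} + O(\eps\sqrt{\log(1/\eps)}/\sqrt{\alpha c})$. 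Starting from the crude bound $\sqrt{B^{(0)}} = O(\sqrt{m/c})$ (using $|p_k - q^S_k|\le 1$) and iterating $T$ times then yields $\sqrt{B^{(T)}} = O(\eps\sqrt{\log(1/\eps)}/\sqrt{\alpha c})$. To unscale, since $1/s_k = \sqrt{\pi^S_k p_k(1-p_k)}$, a direct calculation gives $\|\sqrt{\pi^P} \circ (p - q^{(T)})\|_2^2 \le O(B^{(T)})$, and Lemma~\ref{lem:dtv-bn} together with $(p_k + q_k)(2-p_k-q_k) = \Omega(c)$ (valid because $P$ is $c$-balanced and $Q$ is nearly so) produces $\dtv(P,Q)^2 \le O(B^{(T)}/c) = O(\eps^2\log(1/\eps)/(\alpha c))$, matching the claim. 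For the running time, each $\hat f(X_i, q^{(t)})$ is $d$-sparse by Definition~\ref{def:Fxq}, so $\nnz = O(Nd)$, and the bound $\|\hat f(X_i,q^{(t)})\|_2 \le O(\sqrt{d/(\alpha c)})$ gives $\log R = \tilde O(1)$; thus one call to $\mathscr{A}_{\mathrm{mean}}$ costs $\tilde O(Nd)$, and the $T = O(\log(m/\eps))$ rounds keep the total at $\tilde O(Nd)$.

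The main obstacle is the per-iteration contraction bound. Because $\gamma_t$ contains both a $B^{(t)}$ term and a $\sqrt{B^{(t)}}$ term (the latter arising from the Cauchy--Schwarz cross-term between $f(X,p)$ and $f(X,p) - f(X,q)$ in the proof of Lemma~\ref{lem:f_q_stable}), the output of $\mathscr{A}_{\mathrm{mean}}$ carries a sub-linear contribution of order $(\eps B^{(t)})^{1/4}$ that threatens contraction as $B^{(t)}$ shrinks. The hypothesis $\alpha = \tilde\Omega(\eps^{2/3}c^{-1/3})$ is calibrated precisely to keep this cross-term subdominant until $\sqrt{B^{(t)}}$ reaches the statistical floor $O(\eps\sqrt{\log(1/\eps)}/\sqrt{\alpha c})$, and the careful balancing of the interacting $\delta_1$, $\delta_2$, $B^{(t)}$, and $\sqrt{B^{(t)}}$ error terms is where essentially all of the technical work lies.
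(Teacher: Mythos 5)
Your overall architecture matches the paper's (scale by $s$, expand to the $d$-sparse vectors $\hat f(X,q)$, iterate the sparse robust mean estimator, convert to total variation), but the central per-iteration claim has a genuine gap coming from your choice of potential. You track $\sqrt{B^{(t)}}=\normtwo{\sqrt{\pi^P}\circ(\hat p-\hat q^{(t)})}$, while the oracle of Lemma~\ref{lem:nd-oracle} controls the error $e=\hat\mu^{(t)}-\pi^P\circ(\hat p-\hat q^{(t)})$ only in unweighted $\ell_2$. After your update $\hat q^{(t+1)}=\hat q^{(t)}+\hat\mu^{(t)}/\pi^S$, the new potential is
\[
\sqrt{\pi^P}\circ(\hat p-\hat q^{(t+1)})=\bigl(1-\pi^P/\pi^S\bigr)\circ\sqrt{\pi^P}\circ(\hat p-\hat q^{(t)})-\bigl(\sqrt{\pi^P}/\pi^S\bigr)\circ e ,
\]
and the multiplier $\sqrt{\pi^P_k}/\pi^S_k\approx 1/\sqrt{\pi^P_k}$ can be as large as $1/\sqrt{\alpha}$; since $e$ has no per-coordinate structure, the additive term in your recursion is $\Theta\bigl(\eps\sqrt{\log(1/\eps)}/(\alpha\sqrt c)\bigr)$, not the $O\bigl(\eps\sqrt{\log(1/\eps)}/\sqrt{\alpha c}\bigr)$ you assert, and the sub-linear cross-term $\alpha^{-1/2}(\eps^2 B^{(t)})^{1/4}$ only stays dominated while $\sqrt{B^{(t)}}\gtrsim\eps/\alpha$. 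So the fixed point of your recursion, and hence your final TV bound, is worse than the theorem by a factor of $1/\sqrt{\alpha}$. The paper avoids this by tracking $\normtwo{\pi^P\circ(\hat p-\hat q^{(t)})}$ (Lemmas~\ref{lem:ini} and~\ref{lem:first_phase}): for that potential the error multiplier is $\pi^P_k/\pi^S_k=O(1)$, the $B$, $\sqrt B$, $\normtwo{\hat p-\hat q}$ terms in the stability parameters are re-expressed as $(\rho^t)^2/\alpha$, $\rho^t/\sqrt\alpha$, $\rho^t/\alpha$, and the floor $\eps\sqrt{\log(1/\eps)}/\sqrt{\alpha c}$ is attained; Lemma~\ref{lem:dtv-bn-hat} then shows $\dtv(P,Q)\lesssim\normtwo{\pi^P\circ(\hat p-\hat q)}$ directly, which is exactly the $\pi^P$-weighted quantity (your $\sqrt B$ is only a lossy upper bound on it).

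Two secondary points. First, you attribute the hypothesis $\alpha=\tilde\Omega(\eps^{2/3}c^{-1/3})$ to taming the cross-term in the contraction; in fact (and as the calculation above shows, it does not rescue your recursion) the paper needs it for the TV conversion: Lemma~\ref{lem:dtv-bn-hat} requires $\alpha=\Omega(r+\eps/c)$ with $r\approx\eps\sqrt{\log(1/\eps)}/\sqrt{\alpha c}$, and its proof controls $|\pi^P_k-\pi^Q_k|$ by induction over the nodes so that $\sqrt{\pi^P_k\pi^Q_k}\le 1.1\,\pi^P_k$. Your conversion step silently needs the same control of $\pi^Q$ versus $\pi^P$ (bounding $\sqrt{\pi^P_k\pi^Q_k}$ by $\sqrt{\pi^P_k}$ alone loses on small-probability configurations), and also note $B^{(T)}/c=O(\eps^2\log(1/\eps)/(\alpha c^2))$, an extra $1/c$ over the claim; the paper's route through $(p_k+q_k)(2-p_k-q_k)\ge p_k(1-p_k)$ and the scaling avoids that loss. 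Your cruder initialization $\sqrt{B^{(0)}}=O(\sqrt{m/c})$ and the runtime accounting are fine.
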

The $c$ and $\alpha$ terms in the error guarantee also appear in prior work~\cite{ChengDKS18}.
Removing this dependence is an important technical question that is beyond the scope of this paper.

Theorem~\ref{thm:main-result} follows from three key technical lemmas.
At the beginning of Algorithm~\ref{alg:1}, we first scale all the input vectors as in Definition~\ref{def:hat-f}.
We maintain a guess $q$ for $p$ and gradually move it closer to $p$.
In our analysis, we track our progress by the $\ell_2$-norm of $\pi^P \circ (\hat p - \hat q)$. 

Initially, we set $q$ to be the empirical conditional mean over $S$.
Lemma~\ref{lem:ini} proves that $\normtwo{\pi^P \circ (\hat p - \hat q')}$ is not too large for our first guess.
Lemma~\ref{lem:first_phase} shows that, as long as $q$ is still relatively far from $p$, we can compute a new guess such that $\normtwo{\pi^P \circ (\hat p - \hat q)}$ decreases by a constant factor.
Lemma~\ref{lem:dtv-bn-hat} states that, when the algorithm terminates and $\normtwo{\pi^P \circ (\hat p - \hat q)}$ is small, we can conclude that the output $Q$ is close to the ground-truth $P$ in total variation distance.

\begin{algorithm}[ht]
  \caption{Robustly Learning Bayesian Networks}
  \label{alg:1}
  \SetKwInOut{Input}{Input}
  \SetKwInOut{Output}{Output}
  \Input{The dependency graph $H$ of a $c$-balanced Bayesian network $P$ with minimum parental configuration $\alpha$,
  an $\eps$-corrupted set $S$ of $N = \tilde \Omega(m/\eps^2)$ samples $\{X_i\}_{i=1}^N$ drawn from $P$, and the values of $\eps$, $c$ and $\alpha$.}
  \Output{A Bayesian network $Q$ such that, with probability at least $9/10$, $\dtv(P, Q) \le O(\eps \sqrt{\log(1/\eps)} / \sqrt{\alpha c})$.}

Compute the empirical probabilities $\pi^S$ where $\pi^S(i,a) = \Pr_{X \in S}\left[\Pi_{i,a}\right]$\;
Compute the empirical conditional probabilities $q^S$ where $q^S(i,a) = \Pr_{X \in S}\left[X(i)=1 \mid \Pi_{i,a}\right]$\;
Compute the scaling vector $s = 1 / \sqrt{(\pi^S \circ q^S \circ (1 - q^S))}$\;
Let $T = O(\log d)$ and $q^0 = q^{S}$\;
Let $f(X, q)$ be as defined in Definition~\ref{def:Fxq} and let $\hat p$, $\hat q$, and $\hat f(X, q)$ be as defined in~\ref{def:hat-f}\;
Let $\rho^0 = O(\eps\sqrt{d}/ \sqrt{\alpha c})$.
(We maintain upper bounds $\rho^t$ s.t. $\normtwo{\pi^P \circ (\hat p - \hat q^t)} \le \rho^t$ for all $t$)\;
\For{$t = 0$ {\bf to} $T - 1$}{
   $\beta^t = O(\eps \rho^t /\alpha)$, $\gamma^t = O((\rho^t)^2 / \alpha + \rho^t / \sqrt{\alpha})$ \;
   Solve a robust mean estimation problem.  Let $\nu$ = $\mathscr{A}_{mean}\bigl(\{\hat{f}(X_i, q^t)_{i \in S}\}, \eps, \beta^t, \gamma^t)$\;
   $q^{t+1}$ = $\nu \circ (1/s) \circ (1/\pi^S) + q^t$; \qquad $\rho^{t+1} = c_1 \rho^{t}$\;
  }

\Return{the Bayesian network $Q$ with graph $H$ and conditional probabilities $q^{T}$}; \\
\end{algorithm}

In the following three lemmas, we consider the same setting as in Theorem~\ref{thm:main-result} and assume the conditions in Section~\ref{sec:cond} hold.

Lemma~\ref{lem:ini} states that the (scaled) initial estimation is not too far from the true conditional probabilities $p$.

\begin{lemma}[Initialization]
\label{lem:ini}
In Algorithm~\ref{alg:1}, we have \[
\normtwo{\pi^P \circ (\hat p - \hat q^0)} \le O(\eps \sqrt{d} / \sqrt{\alpha c}) \; .
\]
\end{lemma}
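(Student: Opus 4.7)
The plan is to exploit two clean identities. First, by Lemma~\ref{lem:mean-fxq} (scaled coordinate-wise by $s$), $\pi^P \circ (\hat p - \hat q^0) = \E_{X \sim P}[\hat f(X, q^0)]$. Second, since $q^0 = q^S$, the empirical mean of $\hat f(X_i, q^0)$ over $S$ is exactly zero: for every coordinate $k = (i,a)$, $\sum_{X_i \in S \cap \Pi_{(i,a)}} (X_i(i) - q^S_{(i,a)}) = 0$ by the very definition of $q^S$. Therefore
\[
\normtwo{\pi^P \circ (\hat p - \hat q^0)} \;=\; \normtwo{\E_{X\sim P}\!\bigl[\hat f(X, q^0)\bigr] - \tfrac{1}{N}\sum_{i\in S} \hat f(X_i, q^0)},
\]
and it suffices to bound the RHS by controlling the statistical deviation on the uncorrupted set $G^\star$ and the contribution of the at most $2\eps N$ samples in the symmetric difference $(G^\star\setminus G) \cup B$.

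For the adversarial piece, I would use the crucial sparsity observation: for any $X_i \in \{0,1\}^d$, exactly one parental configuration per variable is active, so $\hat f(X_i, q^0)$ has at most $d$ nonzero coordinates, each of magnitude $|s_k (X_i(i) - q^0_k)| \le \norminf{s} \le O(1/\sqrt{\alpha c})$. Hence $\normtwo{\hat f(X_i, q^0)} \le O(\sqrt{d}/\sqrt{\alpha c})$ uniformly in $X_i$, and
\[
\normtwo{\tfrac{1}{N}\sum_{i \in G^\star\setminus G} \hat f(X_i, q^0)} + \normtwo{\tfrac{1}{N}\sum_{i \in B} \hat f(X_i, q^0)} \;\le\; O\bigl(\eps\sqrt{d}/\sqrt{\alpha c}\bigr).
\]

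For the statistical piece $\normtwo{\E[\hat f(X,q^0)] - \tfrac{1}{N}\sum_{i\in G^\star} \hat f(X_i,q^0)}$, I would apply the first-order stability bound from Lemma~\ref{lem:hf_q_stable} with $T = G^\star$ (which trivially satisfies $|T| \ge (1-2\eps)|G^\star|$), yielding an upper bound of $O\bigl(\delta_1/\sqrt{\alpha c} + \eps \normtwo{\hat p - \hat q^0}\bigr)$. The first term is already absorbed into $O(\eps\sqrt{d}/\sqrt{\alpha c})$ since $\delta_1 = \eps\sqrt{\log(1/\eps)}$ and $d \ge 1$.

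The residual term $\eps\normtwo{\hat p - \hat q^0}$ is what I expect to be the main obstacle, since a naive coordinate-wise bound would introduce a $\sqrt{m}$ rather than $\sqrt{d}$ dependence. I would handle it by the elementary counting bound $|p_k - q^S_k| \le O(\eps/\pi^P_k)$ (which follows from the fact that an $\eps$-fraction of samples can shift both numerator $t_k$ and denominator $n_k$ of $q^S_k$ by at most $\eps N$, combined with $n_k = \Omega(\pi^P_k N)$). This gives $s_k^2 (p_k - q^S_k)^2 \le O\bigl(\eps^2/((\pi^P_k)^3 c)\bigr)$, and after summing and using $\sum_k \pi^P_k = d$ together with $\pi^P_k \ge \alpha$ to derive $m \le d/\alpha$, one obtains $\normtwo{\hat p - \hat q^0}^2 \le O\bigl(\eps^2 m/(\alpha^3 c)\bigr) \le O\bigl(\eps^2 d/(\alpha^4 c)\bigr)$. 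The assumption $\alpha = \tilde\Omega\bigl(\eps^{2/3} c^{-1/3}\bigr)$ is exactly what is needed to convert this into $\eps\normtwo{\hat p - \hat q^0} \le O(\eps\sqrt{d}/\sqrt{\alpha c})$, completing the bound.
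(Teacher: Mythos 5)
Your route is genuinely different from the paper's and is essentially sound. The paper argues entirely at the level of empirical counts: since each sample activates exactly $d$ parental configurations, changing $\eps N$ samples perturbs the count vectors by at most $\eps N$ per coordinate and $2\eps d N$ in total, giving $\normtwo{\pi^{G^\star}-\pi^S}\le\eps\sqrt{2d}$ and $\normtwo{\pi^{G^\star}\circ p^{G^\star}-\pi^S\circ q^S}\le\eps\sqrt{2d}$; a triangle inequality plus the deterministic condition in Equation~\eqref{eqn:empirical} then yields $\normtwo{\pi^P\circ(p-q^S)}\le O(\eps\sqrt d)$, and scaling by $\norminf{s}=O(1/\sqrt{\alpha c})$ finishes. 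You instead use the identity $\pi^P\circ(\hat p-\hat q^0)=\E[\hat f(X,q^0)]$ together with the nice observation that the empirical mean of $\hat f(\cdot,q^S)$ over $S$ vanishes exactly, then split the deviation into an adversarial part (bounded per sample by $\normtwo{\hat f(X_i,q^0)}\le O(\sqrt d/\sqrt{\alpha c})$, same $d$-sparsity fact the paper uses) and a statistical part handled by the first-moment half of Lemma~\ref{lem:hf_q_stable}, with the residual $\eps\normtwo{\hat p-\hat q^0}$ controlled by the coordinate-wise counting bound $|p_k-q^S_k|\le O(\eps/\pi^P_k)$, $m\le d/\alpha$, and the assumption $\alpha=\tilde\Omega(\eps^{2/3}c^{-1/3})$. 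The trade-off: the paper's argument is more elementary and self-contained, needs only $\alpha=\Omega(\eps)$, and gives the clean bound directly; yours reuses the stability machinery (which is legitimate, since Lemma~\ref{lem:hf_q_stable} holds for arbitrary $q$, including the data-dependent $q^S$) but imports the stronger assumption on $\alpha$ into a place where the paper does not need it.

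One small flaw: your claim that the $\delta_1/\sqrt{\alpha c}=\eps\sqrt{\log(1/\eps)}/\sqrt{\alpha c}$ term is absorbed into $O(\eps\sqrt d/\sqrt{\alpha c})$ ``since $d\ge 1$'' is not valid; nothing ties $\log(1/\eps)$ to $d$, and for very small $\eps$ this term can dominate $\eps\sqrt d/\sqrt{\alpha c}$. As written your argument proves $\normtwo{\pi^P\circ(\hat p-\hat q^0)}\le O\bigl(\eps(\sqrt d+\sqrt{\log(1/\eps)})/\sqrt{\alpha c}\bigr)$, which is weaker than the stated lemma (though harmless for Theorem~\ref{thm:main-result}, since $\eps\sqrt{\log(1/\eps)}/\sqrt{\alpha c}$ is the algorithm's termination threshold anyway and one could take $\rho^0$ to be this slightly larger quantity). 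To recover the lemma exactly, either drop the stability lemma for the statistical piece and bound the deviation of the empirical mean over $G^\star$ the way the paper does, or state the weaker bound and adjust $\rho^0$ accordingly.
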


\begin{proof}
Recall that $q^0 = q^S$ is the empirical conditional probabilities over $S$, and $\hat v = v \circ s$ where $s$ is the scaling vector with $\norminf{s} \le O(1/\sqrt{\alpha c})$.

Let $\pi^{G^\star}$ and $p^{G^\star}$ be the empirical  parental configuration probabilities and conditional probabilities given by $G^\star$.

We first show that 
\[
\normtwo{\pi^{G^\star} - \pi^S} \le   \eps \sqrt{2d}\;.
\]

Let $n^{G^\star}_k$ and $n^S_k$ denote the number of times that $\Pi_k$ happens in $G^\star$ and $S$.
Note that changing one sample in $G^\star$ can increase or decrease $n^{G^\star}_k$ by at most $1$.
Moreover, in a single sample, exactly $d$ parental configuration events happen, so changing a sample can affect at most $2d$ $n^{G^\star}_k$'s.
Since $S$ is obtained from $G^\star$ by changing $\eps N$ samples, we have $|n^{G^\star}_k - n^S_k| \le \eps N$ for all $k$, and $\sum_k |n^{G^\star}_k - n^S_k| \le 2 \eps d N$.
Together they imply 
$\normtwo{\pi^{G^\star} - \pi^S} \le \eps \sqrt{2 d}$.

By a similar argument, we can show that
\[
\normtwo{\pi^{G^\star} \circ p^{G^\star} - \pi^S \circ q^S} \le \eps \sqrt{2 d} \; ,
\]
because $\pi^{G^\star}_k p^{G^\star}_k$ is the probability that $\Pi_k$ happens and $X(k) = 1$ over $G^\star$.

By the triangle inequality, we have
\[
\normtwo{\pi^{G^\star} \circ (p^{G^\star} - q^S)} \le \normtwo{\pi^{G^\star} \circ p^{G^\star} - \pi^S \circ q^S} + \normtwo{\pi^{G^\star} - \pi^S} \le 3 \eps\sqrt{d} \; .
\]

Using the condition in Equation~\eqref{eqn:empirical} from Section~\ref{sec:cond}, i.e., $\normtwo{\pi^{G^\star} \circ (p^{G^\star} - p)} \le O(\eps)$, we get
\[
\normtwo{\pi^{G^\star} \circ (p - q^S)} \le O(\eps \sqrt{d}) \; .
\]
Now by Equation~\eqref{eqn:good-pro} from Section~\ref{sec:cond} and the assumption that the minimum parental configuration probability $\min_k \pi^P_k = \alpha = \Omega(\eps)$, we have $\pi^{P}_k \le \pi^{G^\star}_k + O(\eps) \le O(\pi^{G^\star}_k)$, and hence
\[
\normtwo{\pi^{P} \circ (p - q^S)} \le O(\eps \sqrt{d}) \; .
\]
After scaling by $s$, we have
\[
\normtwo{\pi^{G^\star} \circ (\hat p - \hat q^S)} \le O(\eps \sqrt{d} / \sqrt{\alpha c}) \; . \qedhere
\]
\end{proof}

Lemma~\ref{lem:first_phase} shows that, when $q$ is relatively far from $p$, the algorithm can find a new $q$ such that $\normtwo{\pi^P \circ (\hat p - \hat q)}$ decreases by a constant factor.

\begin{lemma}[Iterative Refinement]
\label{lem:first_phase}
Fix an iteration $t$ in Algorithm~\ref{alg:1}.
Assume the robust mean estimation algorithm $\mathscr{A}_{mean}$ succeeds.
If $\normtwo{\pi^P \circ (\hat{p} - \hat{q}^t)} \le \rho^t$ and $\rho^t = \Omega(\eps \sqrt{\log(1/\eps)} / \sqrt{\alpha c})$, then we have
$\normtwo{\pi^P \circ(\hat{p} - \hat{q}^{t+1})} \le c_1 \rho^t
$
for some universal constant $c_1 < 1$.
\end{lemma}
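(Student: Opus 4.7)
The plan is to use the robust mean estimation subroutine $\mathscr{A}_{mean}$ as a black box and then translate its output into a new guess $q^{t+1}$. The key observation is that by Lemma~\ref{lem:mean-fxq} and the scaling convention $\hat v = v \circ s$, one has $\E_{X\sim P}[\hat f(X, q^t)] = \pi^P \circ (\hat p - \hat q^t)$. So running a robust mean estimator on $\{\hat f(X_i, q^t)\}_{i\in S}$ produces an approximation of exactly the quantity whose $\ell_2$-norm is our progress measure.

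I would first verify stability. The hypothesis $\normtwo{\pi^P\circ(\hat p - \hat q^t)}\le\rho^t$ together with $\pi^P_k \ge \alpha$ yields $\normtwo{\hat p - \hat q^t} \le \rho^t/\alpha$ and $B := \normtwo{\sqrt{\pi^P}\circ(\hat p - \hat q^t)}^2 \le (\rho^t)^2/\alpha$. Plugging these into Lemma~\ref{lem:hf_q_stable} shows that the uncorrupted set $\{\hat f(X_i, q^t)\}_{i\in G^\star}$ is $(\eps, \beta^t, \gamma^t)$-stable with the parameters chosen in Algorithm~\ref{alg:1}, where the additive pieces $\delta_1/\sqrt{\alpha c}$ and $\delta_2/(\alpha c)$ coming from Lemma~\ref{lem:hf_q_stable} are absorbed via the hypothesis $\rho^t = \Omega(\eps\sqrt{\log(1/\eps)}/\sqrt{\alpha c})$. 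Applying Lemma~\ref{lem:nd-oracle} to the $\eps$-corrupted set $\{\hat f(X_i, q^t)\}_{i\in S}$ then returns an estimator $\nu$ satisfying
\[
\normtwo{\nu - \pi^P\circ(\hat p - \hat q^t)} \;=\; O\!\left(\sqrt{\eps\gamma^t} + \beta^t + \eps\sqrt{\log(1/\eps)}\right).
\]

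Finally, I would unfold the update $\hat q^{t+1} = \hat q^t + \nu\circ(1/\pi^S)$ to obtain the identity
\[
\pi^P\circ(\hat p - \hat q^{t+1}) \;=\; \bigl(\pi^P\circ(\hat p - \hat q^t) - \nu\bigr) \;+\; \nu\circ\frac{\pi^S-\pi^P}{\pi^S},
\]
and then apply the triangle inequality. The first summand contributes $\mathrm{err}_{\mathrm{mean}}$; for the second, equation~(\ref{eqn:good-pro}) together with $\alpha=\Omega(\eps)$ gives $\norminf{(\pi^S-\pi^P)/\pi^S}=O(\eps/\alpha)$, while $\normtwo{\nu}\le\rho^t+\mathrm{err}_{\mathrm{mean}}$. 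The main obstacle is parameter bookkeeping: expanding $\sqrt{\eps\gamma^t} = O(\rho^t\sqrt{\eps/\alpha} + \sqrt{\eps\rho^t}/\alpha^{1/4})$ and $\beta^t = O(\eps\rho^t/\alpha)$, one must verify that every resulting term in the bound for $\normtwo{\pi^P\circ(\hat p - \hat q^{t+1})}$ is at most a small constant multiple of $\rho^t$ under the hypotheses $\alpha \gg \eps$ and $\rho^t = \Omega(\eps\sqrt{\log(1/\eps)}/\sqrt{\alpha c})$. The dominant multiplicative terms are $\rho^t\sqrt{\eps/\alpha}$ and $\eps\rho^t/\alpha$, both $o(\rho^t)$ when $\alpha\gg\eps$, while the additive $\eps\sqrt{\log(1/\eps)}$ and $\sqrt{\eps\rho^t}/\alpha^{1/4}$ are dominated once $\rho^t$ exceeds the stated threshold. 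Granting these estimates, contraction by some universal $c_1<1$ follows.
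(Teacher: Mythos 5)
Your proposal is correct and follows essentially the same route as the paper: establish stability of $\{\hat f(X_i,q^t)\}_{i\in G^\star}$ via Lemma~\ref{lem:hf_q_stable} with $B\le(\rho^t)^2/\alpha$, invoke Lemma~\ref{lem:nd-oracle}, and then do the same parameter bookkeeping under $\alpha=\Omega(\eps)$ and $\rho^t=\Omega(\eps\sqrt{\log(1/\eps)}/\sqrt{\alpha c})$. Your single decomposition $\pi^P\circ(\hat p-\hat q^{t+1})=(\pi^P\circ(\hat p-\hat q^t)-\nu)+\nu\circ(\pi^S-\pi^P)/\pi^S$ is just an algebraically equivalent way of packaging the paper's two steps (bounding $\normtwo{\nu-\pi^S\circ(\hat p-\hat q^t)}$ and then using $\pi^P_k\le 1.1\,\pi^S_k$), so there is no substantive difference.
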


\begin{proof}
We assume $\rho^t > c_4(\eps \sqrt{\log(1/\eps)} / \sqrt{\alpha c})$ and $\alpha > c_5 \eps$ for some sufficiently large universal constants $c_4$ and $c_5$.

Because $\normtwo{\pi^P \circ (\hat{p} - \hat{q}^t)} \le \rho^t$,
Lemma~\ref{lem:hf_q_stable} shows that $\left\{\hat{f}(X_i, q^t)\right\}_{i \in G^\star}$ is  \[
\left(\eps, O\left(\frac{\eps \sqrt{\log 1/ \eps}}{\sqrt{\alpha c}} + \frac{\eps}{\alpha} \rho^t \right), O\left(\frac{\eps \log 1/ \eps}{\alpha c} + \frac{(\rho^t)^2}{\alpha} + \frac{\rho^t}{\sqrt{\alpha}}\right)\right)\text{-stable.}
\]

By Lemma~\ref{lem:nd-oracle}, the robust mean estimation oracle $\mathscr{A}_{mean}$, which we assume to succeed, outputs a $\nu \in \R^m$ such that, for some universal constant $c_3$,
\begin{align*}
\normtwo{\nu - \pi^P \circ (\hat{p} - \hat{q}^t)} 
&\le c_3\left(\sqrt{\frac{\eps}{\alpha} }\rho^t + \sqrt{\frac{\eps}{\sqrt{\alpha}}\rho^t} + \frac{\eps}{\alpha } \rho^t + \frac{\eps \sqrt{\log (1 / \eps)}}{\sqrt{\alpha c}} \right) \\
&< \left(\frac{c_3}{\sqrt{c_5}} + \frac{c_3}{\sqrt{c_4}} + \frac{c_3}{c_5} + \frac{c_3}{c_4}\right) \rho^t \; .
\end{align*}

From Section~\ref{sec:cond}, we have  $\norminf{\pi^S-\pi^P} = O(\eps)$, which implies
\[
\normtwo{(\pi^S - \pi^P) \circ (\hat{p} - \hat{q}^t)} \le  \frac{\eps}{\alpha}\normtwo{\pi^P \circ (\hat{p} - \hat{q}^t)} \le \frac{\eps}{\alpha} \rho^t\; .
\]
By the triangle inequality, we have
\[
\normtwo{\nu - \pi^S \circ (\hat{p} - \hat{q}^t)} \le \left(\frac{c_3}{\sqrt{c_5}} + \frac{c_3}{\sqrt{c_4}} + \frac{c_3 + 1}{c_5} + \frac{c_3}{c_4}\right) \rho^t \; .
\]

Algorithm~\ref{alg:1} sets $\hat{q}^{t+1} = \nu \circ (1 / \pi_S) + \hat{q}^t$, which is equivalent to 
\[
\pi^S \circ (\hat{p} - \hat{q}^{t+1}) = \pi^S \circ (\hat{p} - \hat{q}^t) - \nu \; .
\]

Since $\norminf{\pi^S-\pi^P} = O(\eps)$ and $\alpha = \Omega(\eps)$, we have 
\[
\pi^P_i \le 1.1 \pi^S_i \quad \forall 1 \le i \le m \; .
\]

Putting everything together, letting $c_1 = 1.1 \left( \frac{c_3}{\sqrt{c_5}} + \frac{c_3}{\sqrt{c_4}} + \frac{c_3 + 1}{c_5} + \frac{c_3}{c_4}\right)$, we have
\[
\normtwo{\pi^P \circ (\hat{p} - \hat{q}^{t+1})} \le
1.1 \normtwo{\pi^S \circ (\hat{p} - \hat{q}^{t+1})} <
1.1\left(\frac{c_3}{\sqrt{c_5}} + \frac{c_3}{\sqrt{c_4}} + \frac{c_3 + 1}{c_5} + \frac{c_3}{c_4}\right) \rho^t = c_1 \rho^t \; .
\]

Because $c_4$ and $c_5$ can be sufficiently large, we have $c_1 < 1$ as needed.
\end{proof}

Lemma~\ref{lem:dtv-bn-hat} shows that when the algorithm terminates, we can conclude that the output $Q$ is close to the ground-truth $P$ in total variation distance.

\begin{lemma}
\label{lem:dtv-bn-hat}
Let $Q$ be a Bayesian network that has the same structure as $P$.
Suppose that (1) $P$ is $c$-balanced, (2) $\alpha = \Omega(r + \eps/ c)$, and (3) $\normtwo{\pi^P \circ (\hat{p} - \hat{q}) } \le r / 2$. Then we have
$\dtv(P, Q) \le r$.
\end{lemma}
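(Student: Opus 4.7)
The plan is to apply Lemma~\ref{lem:dtv-bn} directly and then translate the resulting bound, phrased in the raw differences $(p_k - q_k)$, into one phrased in the scaled differences $(\hat p_k - \hat q_k)$ that appear in hypothesis~(3). First I would lower bound the denominator using only the $c$-balance of $P$: the algebraic identity
\[
(p_k+q_k)(2-p_k-q_k) \;=\; 2p_k(1-p_k) + (p_k-q_k)^2 + 2q_k(1-q_k)
\]
gives $(p_k+q_k)(2-p_k-q_k) \ge 2p_k(1-p_k)$. Then, using $\hat v = v \circ s$ with $s_k = 1/\sqrt{\pi^S_k q^S_k(1-q^S_k)}$, I substitute $(p_k-q_k)^2 = (\hat p_k - \hat q_k)^2 \pi^S_k q^S_k(1-q^S_k)$ into the dtv bound to obtain
\[
\dtv(P,Q)^2 \;\le\; \sum_k \sqrt{\pi^P_k \pi^Q_k}\,(\hat p_k - \hat q_k)^2 \cdot \frac{\pi^S_k q^S_k(1-q^S_k)}{p_k(1-p_k)}.
\]

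Next I would argue that under hypothesis~(2), the ratio $\pi^S_k q^S_k(1-q^S_k)/(p_k(1-p_k))$ is within a factor of $(1+o(1))$ of $\pi^P_k$. The conditions in Section~\ref{sec:cond} give $|\pi^S_k - \pi^P_k| = O(\eps)$ and $|q^S_k - p_k| = O(\eps/\sqrt{\alpha})$, and $\alpha = \Omega(\eps/c)$ (with the implicit constant taken sufficiently large) makes the resulting multiplicative errors as small as desired. Substituting this, the bound reduces to $\dtv(P,Q)^2 \le (1+o(1)) \sum_k \sqrt{\pi^Q_k/\pi^P_k}\,(\pi^P_k)^2 (\hat p_k - \hat q_k)^2$.

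The main obstacle is then controlling the factor $\sqrt{\pi^Q_k/\pi^P_k}$, which I would handle with the elementary observation that $|\pi^Q_k - \pi^P_k| \le \dtv(P, Q)$ (since $\Pi_k$ is a single event under each distribution): whenever $\dtv(P, Q) \le \alpha$, this gives $\pi^Q_k \le \pi^P_k + \alpha \le 2\pi^P_k$ and hence $\sqrt{\pi^Q_k/\pi^P_k} \le \sqrt{2}$. Combined with hypothesis~(3), which yields $\sum_k (\pi^P_k)^2 (\hat p_k - \hat q_k)^2 \le r^2/4$, this produces $\dtv(P, Q)^2 \le (\sqrt{2}/4)(1+o(1))\,r^2$, which is strictly less than $r^2$ once the implicit $\Omega$-constants are chosen suitably.

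The catch is that the step above used $\dtv(P, Q) \le \alpha$, which follows from the desired conclusion $\dtv \le r$ together with $\alpha = \Omega(r)$; to close the loop I would use a continuity argument. Define the interpolation $q^\lambda = p + \lambda(q-p)$ for $\lambda \in [0,1]$ and let $Q^\lambda$ be the Bayesian network on $H$ with conditional probabilities $q^\lambda$. Because the scaling $s$ depends only on the samples, $\|\pi^P \circ (\hat p - \hat q^\lambda)\|_2 = \lambda\|\pi^P \circ (\hat p - \hat q)\|_2 \le r/2$ for every $\lambda$, and $\dtv(P, Q^\lambda)$ is continuous in $\lambda$ with $\dtv(P, Q^0) = 0$. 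If $\dtv(P, Q) > r$, the intermediate value theorem produces a smallest $\lambda^\ast > 0$ with $\dtv(P, Q^{\lambda^\ast}) = r \le \alpha$; at $\lambda^\ast$ the bootstrapped bound above applies and yields $\dtv(P, Q^{\lambda^\ast}) < r$, a contradiction. Hence $\dtv(P, Q) \le r$.
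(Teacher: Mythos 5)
Your proposal is correct, and its skeleton (apply Lemma~\ref{lem:dtv-bn}, lower-bound the denominator via $c$-balance, pass to the scaled coordinates using the fact that $1/s_k^2$ is multiplicatively close to $\pi^P_k p_k(1-p_k)$, then control $\pi^Q_k/\pi^P_k$) matches the paper's. Where you genuinely diverge is in breaking the circularity between ``$\pi^Q_k = O(\pi^P_k)$'' and ``$\dtv(P,Q)$ small'': the paper proves $|\pi^P_{i,a}-\pi^Q_{i,a}|\le r$ by induction along the topological order, bounding each parental configuration probability by the total variation distance of the marginals $P_{\le i-1}, Q_{\le i-1}$, whose closeness is in turn obtained from the induction hypothesis applied to the partial sum of the same bound; you instead use the one-line observation $|\pi^P_k-\pi^Q_k|\le \dtv(P,Q)$ (since $\Pi_k$ is a single event) together with an interpolation $q^\lambda = p+\lambda(q-p)$ and the intermediate value theorem, exploiting that $\normtwo{\pi^P\circ(\hat p-\hat q^\lambda)}$ only shrinks along the path. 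Your route avoids the coordinate-wise induction entirely and is arguably cleaner, at the cost of introducing the auxiliary family $Q^\lambda$ and a continuity argument; the paper's induction is more combinatorial and yields the pointwise bound $|\pi^P_k-\pi^Q_k|\le r$ directly. One small quantitative slip: the deterministic conditions give $|q^S_k - p_k| = O(\eps/\alpha)$ (via $|p^{G^\star}_k - q^S_k| \le O(\eps/\pi^{G^\star}_k)$), not $O(\eps/\sqrt{\alpha})$; this does not affect your argument, since all you need is that $\pi^S_k q^S_k(1-q^S_k)$ is within a $(1+o(1))$ factor of $\pi^P_k p_k(1-p_k)$, which follows (as in the paper's Lemma~\ref{lem:scaling}) from the additive $O(\eps)$ closeness together with $\pi^P_k p_k(1-p_k)=\Omega(\alpha c)$ and $\alpha = \Omega(\eps/c)$ with a sufficiently large implied constant.
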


\begin{proof}[Proof of Lemma~\ref{lem:dtv-bn-hat}]
We have $(p_k + q_k)(2 - p_k - q_k) \ge p_k(1 - p_k)$.
Hence,
\[
\sum_{k} \sqrt{\pi_k^P \pi_k^Q} \frac{(p_{k}-q_{k})^{2}}{(p_{k}+q_{k})(2-p_{k}-q_{k})} \le \sum_{k} \sqrt{\pi_k^P \pi_k^Q} \pi^P_k \frac{(p_{k}-q_{k})^{2}}{\pi^P_k p_k(1 - p_k)} \; ,
\]

From the proof of Lemma~\ref{lem:scaling}, we know $|\pi^P_k p_k(1- p_k) - \frac{1}{s_k^2}| = O(\eps)$ and $\pi^P_k p_k(1 - p_k) \ge \pi^P_k \frac{p_k}{2} = \Omega(r)$, so we have 
\begin{align*}
\sum_{k} \sqrt{\pi_k^P \pi_k^Q} \pi^P_k \frac{(p_{k}-q_{k})^{2}}{\pi^P_k p_k(1 - p_k)}
&\le 1.1 \sum_{k} \sqrt{\pi_k^P \pi_k^Q} \pi^P_k (p_{k}-q_{k})^{2} s_k^2 \\
&= 1.1 \sum_{k} \sqrt{\pi_k^P \pi_k^Q} \pi^P_k (\hat{p}_{k}-\hat{q}_{k})^{2} \; .
\end{align*}
 It suffices to show that $|\pi^P_k - \pi^Q_k| \le r$, which implies $\pi^Q_k \le 1.1 \pi^P_k$ and further implies
 \[
 \dtv(P, Q) \le 2\normtwo{\pi^P \circ (\hat{p} - \hat{q}) } \; .
 \]
 
 Let $P_{\le i}$ and $Q_{\le i}$ be the distributions of the first $i$ coordinates of $P$ and $Q$ respectively.
 We prove $|\pi^P_k - \pi^Q_k| \le r$ by induction on $i$. Suppose that for $1 \le j < i$ and all $a^{\prime} \in \{0, 1\}^{|\mathrm{parents}(j)|}$, $|\pi^P_{j, a^\prime} - \pi^Q_{j, a^\prime} | \le r $, then we have $\dtv(P_{\le (i - 1)}, Q_{\le (i - 1)}) \le r$. Because that events $\Pi_{i, a}$ only depends on $j < i$, $|\pi^P_{i, a} - \pi^Q_{i, a}| \le \dtv(P_{\le (i - 1)}, Q_{\le (i - 1)}) \le r $ for all $a$.
 Consequently, we have $\dtv(P, Q) = \dtv(P_{\le d}, Q_{\le d}) \le r$.
\end{proof}


\begin{proof}[Proof of Theorem~\ref{thm:main-result}]
We first prove the correctness of Algorithm~\ref{alg:1}. 

The original set of $N = \Omega(m \log(m  / \eps) / \eps^2)$ good samples drawn from $P$ satisfies the conditions in Section~\ref{sec:cond} with probability at least $1-\frac{1}{20}$.
With high probability, the robust mean estimation oracle $\mathscr{A}_{mean}$ succeeds in all iterations.
For the rest of this proof, we assume the above conditions hold, which by a union bound happens with probability at least $9/10$.

From Lemma~\ref{lem:ini}, we have the following condition on the initial estimate $q^0$.
\[
\normtwo{\pi^P \circ ({\hat{p} - \hat{q}^0})} = O(\eps\sqrt{d} / \sqrt{\alpha c}) \; .
\]

We start with an upper bound $\rho^0$ of $\normtwo{\pi^P \circ ({\hat{p} - \hat{q}^0})} = O(\eps\sqrt{d} / \sqrt{\alpha c})$.
By Lemma~\ref{lem:first_phase}, in each iteration, if $\rho^t = \Omega(\eps \sqrt{\log(1/\eps)} / \sqrt{\alpha c})$, we can obtain a new estimate $q^{t+1}$ and an upper bound $\rho^{t+1}$ on $\normtwo{\pi^P \circ ({\hat{p} - \hat{q}^t})}$ such that $\rho^{t+1}$ is smaller than $\rho^{t}$ by a constant factor. Hence after $O(\log (d))$ iterations, we can get a vector $q^t$ such that
\[
\normtwo{\pi^P \circ ({\hat{p} - \hat{q}^t})} = O(\eps \sqrt{\log(1/\eps)} / \sqrt{\alpha c}) \; .
\]
Let $Q$ be the Bayesian network with conditional probability table $q^t$. The assumption that $\alpha = \tilde \Omega(\eps^{2/3} c^{-1/3})$ allows us to apply   Lemma~\ref{lem:dtv-bn-hat} with $r = O(\eps \sqrt{\log(1/\eps)} / \sqrt{\alpha c})$, which gives the claimed upper bound on $\dtv(P, Q)$.

Now we analyze the runtime of Algorithm~\ref{alg:1}.
First, $q^S$ and $\pi^S$ can be computed in time $\tilde O(Nd)$ because
each sample only affects $d$ entries of $q$.
We do not explicitly write down $f(X, q)$.
In each iteration, we solve a robust mean estimation problem with input $\left\{\hat{f}(X_i, q^t)\right\}_{i \in S}$, which takes time $\tilde O(Nd)$.
This is because there are $N$ input vectors, each vector is $d$-sparse, and the robust mean estimation algorithm runs in time nearly-linear in the number of nonzeros in the input (Lemma~\ref{lem:nd-oracle}).
We can compute $q^{t+1}$ = $\nu \circ (1/s) \circ (1/\pi^S) + q^t$ in time in time $O(m)$.

Since there are $O(\log d)$ iterations, the overall running time is 
\[
\tilde O(Nd) + O(\log d) \left( \tilde O(Nd) + O(m)\right) = \tilde{O}(Nd) \;. \qedhere
\]
\end{proof}

\section{Robust Mean Estimation with Sparse Input}
\label{sparse_input}

In this section, we give a robust mean estimation algorithm that runs in nearly input-sparsity time.
We build on the following lemma, which is essentially the main result of~\cite{DongHL19}.

\begin{lemma}[essentially \cite{DongHL19}]
\label{lem:DongHL19}
Given an $\eps$-corrupted version of an $(\eps, \beta, \gamma)$-stable set $S$ of $N$ samples w.r.t. a $d$-dimensional distribution with mean $\mu_X$. Suppose further that $\normtwo{X} \le R$ for all $X \in S$, there is an  algorithm outputs an estimator $\hat \mu \in \R^d$  such that with high probability, 
\[
\normtwo{\hat \mu - \mu_X} \le O(\sqrt{\eps \gamma} + \beta + \eps \sqrt{\log 1 / \eps}) \; .
\]
Moreover, this algorithm runs in time  $\tilde O((\nnz(S) + N + d + T(\mathscr{O}_{apx})) \cdot \log R)$, where $\nnz(S)$ is the total number of nonzeros in the samples in $S$ and $T(\mathscr{O}_{apx}))$ is the runtime of an approximate score oracle defined in Definition~\ref{def:score-oracle}.
\end{lemma}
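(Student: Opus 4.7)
The plan is essentially to restate the main theorem of \cite{DongHL19}, repackaging the runtime bound so that the approximate score oracle $\mathscr{O}_{apx}$ is treated as a black box. The underlying algorithm is already in \cite{DongHL19}, so the work is in verifying that the runtime decomposition claimed here is the one implicit in that paper.

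First, I would recall the outer structure of the DongHL19 algorithm. It maintains scalar weights on the $N$ samples and iteratively updates them via a matrix multiplicative weights (MMWU) step, interleaved with a filtering step that downweights high-scoring samples. In each outer iteration the algorithm (a) obtains approximate scores for all samples via $\mathscr{O}_{apx}$, (b) performs the MMWU weight update, (c) filters samples based on the scores, and (d) updates the running estimate of the weighted empirical mean.

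Second, I would verify that steps (b)--(d) each run in time $\tilde O(\nnz(S) + N + d)$. Weight updates and filtering touch only scalar per-sample quantities, contributing $O(N)$ per iteration, while computing a weighted mean or taking an inner product against a sample costs $O(\nnz(S) + d)$ time. Crucially, the MMWU ``state'' is never materialized as a dense $d \times d$ matrix; all interactions between the state and the samples flow through $\mathscr{O}_{apx}$ and a small number of sparse matrix--vector products. Therefore each outer iteration costs $\tilde O(\nnz(S) + N + d + T(\mathscr{O}_{apx}))$.

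Third, I would bound the number of outer iterations by $\tilde O(\log R)$ via the standard potential argument of \cite{DongHL19}. The radius bound $\normtwo{X} \le R$ implies that the relevant initial spectral potential is at most $\poly(R)$, and each iteration contracts it by a constant factor, giving $\tilde O(\log R)$ iterations in total. Multiplying the per-iteration cost by this iteration count yields the claimed runtime $\tilde O\bigl((\nnz(S) + N + d + T(\mathscr{O}_{apx})) \cdot \log R\bigr)$. The error guarantee $\normtwo{\hat\mu - \mu_X} \le O(\sqrt{\eps\gamma} + \beta + \eps\sqrt{\log 1/\eps})$ is inherited verbatim from \cite{DongHL19}, whose analysis uses only $(\eps, \beta, \gamma)$-stability of $G^\star$ together with the approximation guarantee supplied by the score oracle, neither of which is affected by the restatement. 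The only mild obstacle is chasing through the algorithm to confirm that no intermediate computation implicitly dense-materializes a $d \times d$ object, but this is a bookkeeping check rather than a new algorithmic idea; the real content of the section will then come in the subsequent implementation of $\mathscr{O}_{apx}$ in nearly input-sparsity time, which is what converts Lemma~\ref{lem:DongHL19} into Lemma~\ref{lem:nd-oracle}.
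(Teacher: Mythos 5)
Your runtime accounting is fine, but it re-derives what the paper simply imports as a black box: Lemma~\ref{lem:que_id} (the result of \cite{DongHL19}) already states that the algorithm makes $O(\log R \log d)$ calls to $\mathscr{O}_{apx}$ and spends $\tilde O(N \log R)$ elsewhere, which gives the claimed $\tilde O((\nnz(S)+N+d+T(\mathscr{O}_{apx}))\cdot \log R)$ bound without any new potential argument. The genuine gap is in the error guarantee. You assert it is ``inherited verbatim'' because the analysis of \cite{DongHL19} ``uses only $(\eps,\beta,\gamma)$-stability,'' but that is not how their result is stated: their guarantee is proved under the five-parameter goodness condition of Definition~\ref{def:good_condition}, which in particular demands first- and second-moment control over every \emph{small} subset $T \subset S$ with $|T| = 2\eps|S|$, whereas the $(\eps,\beta,\gamma)$-stability of Definition~\ref{def:stability} only controls \emph{large} subsets of size at least $(1-2\eps)|S|$. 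Bridging these two notions is precisely the content of the paper's proof (Lemma~\ref{lem:stable_good}): one writes the small-subset average as the difference of the full-set and large-subset averages, which amplifies the bounds by a factor $1/\eps$, and one must additionally recenter the second moment at $\mu(S)$ rather than $\mu$, which contributes the extra $3\beta^2/\eps^2$ term, yielding goodness parameters $\gamma_1=\beta$, $\gamma_2=\gamma$, $\beta_1=\beta/\eps$, $\beta_2=\gamma/\eps+3\beta^2/\eps^2$.

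Even granting the translation, the stated error bound does not follow ``verbatim'': Lemma~\ref{lem:que_id} gives $O(\eps\sqrt{\log 1/\eps}+\sqrt{\eps\xi}+\gamma_1)$ with $\xi=\gamma_2+2\gamma_1^2+4\eps^2\beta_1^2+2\eps\beta_2+O(\eps\log 1/\eps)$, and one must check that the $1/\eps$-amplified parameters enter $\xi$ with compensating powers of $\eps$ (indeed $4\eps^2\beta_1^2=4\beta^2$ and $2\eps\beta_2=2\gamma+6\beta^2/\eps$, so $\sqrt{\eps\xi}=O(\sqrt{\eps\gamma}+\beta+\eps\sqrt{\log 1/\eps})$) before the clean form $O(\sqrt{\eps\gamma}+\beta+\eps\sqrt{\log 1/\eps})$ emerges. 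Your proposal skips both the stability-to-goodness conversion and this parameter bookkeeping, which are the actual substance of the paper's proof; the sparse-arithmetic and iteration-count remarks, while correct, are not where the difficulty lies.
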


Lemma~\ref{lem:DongHL19} is different from the way it is stated in~\cite{DongHL19}. This is because we use a more concise stability condition than the one in~\cite{DongHL19}.  We will show that Lemma~\ref{lem:DongHL19} is equivalent to the version in~\cite{DongHL19} in Appendix~\ref{appendix:sparse_input}.

The computational bottleneck of the algorithm in~\cite{DongHL19} is logarithmic uses of matrix multiplicative weight update (MMWU).
In each iteration of every MMWU, they need to compute a score for each sample.
Intuitively, these scores help the algorithm decide whether it should continue to increase the weight on each sample or not.

We define some notations before we formally define the approximate score oracle.
Let $\Delta_N = \{w \in \R^N: 0 \le w_i \le 1, \sum w_i = 1\}$ be the $N$-dimensional simplex.
Given a set of $N$ samples $X_1$,  ..., $X_N$ and a weight vector $w \in \Delta_N$,
let $\mu(w) = \frac{1}{|w|} \sum w_i X_i$ and $\Sigma(w) = \frac{1}{|w|} \sum w_i(X_i - \mu(w))(X_i - \mu(w))^\top$ denote the empirical mean and covariance weighted by $w$.

\begin{definition}[Approximate Score Oracle]
\label{def:score-oracle}
Given as input a set of $N$ samples $X_1, \ldots, X_N \in \R^d$, a sequence of $t+1 = O(\log(d))$ weight vectors $w^0, \ldots, w^{t} \in \Delta_N$, and a parameter $\alpha > 0$, an approximate score oracle $\mathscr{O}_{apx}$ outputs $(1 \pm 0.1)$-approximations $(\tilde \tau_i)_{i=1}^N$ to each of the $N$ scores
\[
\tau_{i} = (X_i-\mu(w^t))^\top U (X_i-\mu(w^t))
\]
for
\[
U = \frac{\exp(\alpha\sum_{i = 0}^{t - 1} \Sigma(w^i))}{ \tr \exp(\alpha\sum_{i = 0}^{t - 1} \Sigma(w^i))} \; .
\]
In addition, $\mathscr{O}_{apx}$ outputs a scalar $\tilde{q}$ such that 
\[
|\tilde{q} - q| \le 0.1 q + 0.05\normtwo{\Sigma(w^t) - I}, \text{where} \ \\ \ q = \inner{\Sigma(w^t) - I, U } \; .
\]
\end{definition}



These scores are computed using the Johnson-Lindenstrauss lemma.
Our algorithm for computing these scores are given in Algorithm~\ref{alg:2}. 

Let $r = O(\log N \log(1 / \delta))$, $\ell = O(\log d)$, and $Q \in \R^{r\times d}$ be a matrix with i.i.d entries drawn from $\mathcal{N} (0, 1/r)$.
Algorithm~\ref{alg:2} computes an $r \times d$ matrix
\begin{equation}
\label{eq:Arl}
A = Q \cdot P_{\ell}\left(\frac{\alpha}{2} \sum_{i = 0} ^ {t - 1} {\Sigma(w^i)}\right) \; .
\end{equation}
where $P_{\ell} (Y)= \sum_{j = 0}^{\ell}\frac{1}{j!} Y^j$ is a degree-$\ell$ Taylor approximation to $\exp(Y)$.

The estimates for the individual scores are then given by
\begin{equation}
\label{eq:tau}
\tilde{\tau}_{i} = \frac{1}{\tr(A A^\top)} \normtwo{A(X_i - \mu(w^t))}\;
\end{equation}
and the estimate for $q$ is given by
\begin{equation}
\label{eq:q}
\tilde{q} = \sum_{i = 1}^{N} (\tilde{\tau}_{i} - 1)\; .
\end{equation}

\begin{algorithm}[ht]
  \caption{Nearly-linear time approximate score computation}
  \label{alg:2}
  \SetKwInOut{Input}{Input}
  \Input{A set $S$ of $N$ samples $X_1, \ldots, X_N \in \R^d$, a sequence of weight vectors $w^0, ..., w^t$, a parameter $\alpha$, and a failure probability $\delta > 0$.}
   Let $r = O(\log N \log(1 / \delta))$ and $\ell = O(\log d)$\;
   Let $Q \in \R^{r\times d}$ have entries drawn i.i.d. from $\mathcal{N} (0, 1/r)$\;
   Compute the matrix $A \in \R^{r\times d}$ as in Equation~\eqref{eq:Arl}\;
   \Return{$(\tilde{\tau}_{i})_{i=1}^N$ given by Equation~\eqref{eq:tau} and $\tilde{q}$ given by Equation~\eqref{eq:q}}\; 
\end{algorithm}

The correctness of Algorithm~\ref{alg:2} was proved in~\cite{DongHL19}.

\begin{lemma}[\cite{DongHL19}]
\label{lem:score-correctness}
With probability at least $1 - \delta$, the output of Algorithm~\ref{alg:2} satisfies $|\tilde{q} - q| \le 0.1 q + 0.05\normtwo{\Sigma(w^t) - I}$ and $|\tilde{\tau}_i - \tau_i| \le 0.1 \tau_i$ for all $1 \le i \le N$.
\end{lemma}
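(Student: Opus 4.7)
The plan is to split the total error into two independent sources and bound each separately: (i) the Johnson--Lindenstrauss (JL) sketching error introduced by the random Gaussian matrix $Q$, and (ii) the Taylor truncation error between $P_{\ell}(Y)$ and $\exp(Y)$, where $Y = \tfrac{\alpha}{2}\sum_{i=0}^{t-1}\Sigma(w^i)$. Let $M = \exp(Y)$ and $\tilde{M} = P_{\ell}(Y)$, so that $U = M^2/\tr(M^2)$ and $A = Q\tilde{M}$. Writing $v_i = X_i-\mu(w^t)$, the ground-truth scores become $\tau_i = \|Mv_i\|_2^2/\tr(M^2)$.

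\textbf{Step 1 (Taylor truncation).} Because $\alpha$ in the MMWU is chosen so that $\|Y\|_2 = O(\log d)$, the Lagrange form of the remainder for the exponential gives $\|M-\tilde{M}\|_2 \le \|Y\|_2^{\ell+1}e^{\|Y\|_2}/(\ell+1)!$. Choosing $\ell = O(\log d)$ with a sufficiently large constant makes this remainder smaller than any fixed inverse polynomial in $d$ and $N$, so for every vector $v$ one has $\|\tilde{M}v\|_2^2 = (1\pm o(1))\|Mv\|_2^2$ and $\tr(\tilde{M}\tilde{M}^\top)=(1\pm o(1))\tr(M^2)$ (the multiplicative form comes from dividing the additive error by $\|M\|_2^2$, which dominates both quantities). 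Hence, throughout the rest of the proof, I may treat $\tilde{M}$ as $M$ at negligible cost.

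\textbf{Step 2 (per-sample JL bound).} For any fixed $u\in\R^d$, $r\|Qu\|_2^2/\|u\|_2^2$ is a $\chi^2_r$ random variable, so standard chi-square tail bounds give $\|Qu\|_2^2 = (1\pm 0.02)\|u\|_2^2$ with probability $1-\exp(-\Omega(r))$. Applying this to the $N$ fixed vectors $\tilde{M}v_i$ and to $\tr(AA^\top) = \tr(Q\tilde{M}\tilde{M}^\top Q^\top)$ (which is a sum of $r$ quadratic forms in the rows of $Q$ and concentrates just as well), and union-bounding, all $N+1$ events hold with probability $\ge 1-\delta/2$ provided $r = \Omega(\log(N/\delta))$. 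Dividing $\|A v_i\|_2^2$ by $\tr(AA^\top)$ and combining with Step 1 yields $\tilde{\tau}_i = (1\pm 0.1)\tau_i$ for every $i$, which is the first claim.

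\textbf{Step 3 (bound for $\tilde{q}$).} A naive union bound would only give $|\tilde{q}-q|\le 0.1 \sum_i \tau_i$, which is far too crude. Instead I use a second-moment argument: write $\sum_i (\tilde{\tau}_i - \tau_i)$ as a (mean-zero) quadratic form in the Gaussian entries of $Q$, then compute its variance using the Hanson--Wright--type identity $\Cov(\|Qu\|_2^2,\|Qv\|_2^2) = 2\langle u,v\rangle^2/r$. Summing gives $\Var\bigl(\sum_i \tilde{\tau}_i\bigr) = O\bigl(\sum_{i,j}\langle \tilde{M}v_i,\tilde{M}v_j\rangle^2/(r\,\tr(M^2)^2)\bigr) = O(\|\Sigma(w^t)\|_F^2/r)$ after substituting $\sum_i v_i v_i^\top = \Sigma(w^t)$ up to weighting constants. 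Since $\|\Sigma(w^t)\|_F^2 \le \|\Sigma(w^t)\|_2\cdot\tr(\Sigma(w^t))$ and $q = \langle \Sigma(w^t)-I, U\rangle$ scales with $\|\Sigma(w^t)-I\|_2$, Chebyshev's inequality with $r=\Omega(\log(1/\delta))$ gives $|\tilde{q}-q|\le 0.1 q + 0.05\|\Sigma(w^t)-I\|_2$ with probability $1-\delta/2$.

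\textbf{Conclusion and main obstacle.} A final union bound combines Steps 2 and 3 at probability $1-\delta$. The main technical hurdle is Step 3: obtaining an additive slack proportional to $\|\Sigma(w^t)-I\|_2$ (rather than a purely multiplicative error in the large summands) requires exploiting the inner-product structure $q=\langle\Sigma(w^t)-I,U\rangle$ and the unit trace of $U$, so that the variance of the sketched sum can be charged to the spectral norm of $\Sigma(w^t)-I$ instead of to the much larger $\sum_i \tau_i$.
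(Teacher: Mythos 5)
You should first note that the paper does not actually prove this lemma: it imports Algorithm~\ref{alg:2}'s correctness wholesale from~\cite{DongHL19} (``The correctness of Algorithm~\ref{alg:2} was proved in~\cite{DongHL19}''), and only proves the runtime. So your attempt is a from-scratch reconstruction. Your Steps~1--2 are essentially the standard argument and are fine up to two small imprecisions: the multiplicative control $\normtwo{\tilde{M}v}^2=(1\pm o(1))\normtwo{Mv}^2$ follows because $Y\succeq 0$ and hence $M=\exp(Y)\succeq I$ (so the additive Taylor error $2^{-\Omega(\ell)}\normtwo{v}$ is small relative to $\normtwo{Mv}\ge\normtwo{v}$), not from ``dividing by $\normtwo{M}^2$, which dominates both quantities''; and the bound $\normtwo{Y}=O(\log d)$ is a promise about how the MMWU invokes the oracle (true in~\cite{DongHL19}), not something contained in Definition~\ref{def:score-oracle}, so it should be stated as a hypothesis.

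The genuine gap is Step~3. The variance bound you derive, $\Var\bigl(\sum_i\tilde\tau_i\bigr)=O(\fnorm{\Sigma(w^t)}^2/r)$, cannot imply $|\tilde q-q|\le 0.1q+0.05\normtwo{\Sigma(w^t)-I}$: take $\Sigma(w^t)$ close to $I$ in high dimension, so the right-hand side is nearly $0$ while $\fnorm{\Sigma(w^t)}\approx\sqrt d$ and $r$ is only polylogarithmic; Chebyshev then gives a fluctuation of order $\sqrt{d/r}$, which is not of the required form, and the final sentence of Step~3 is asserted rather than derived. The missing idea is that the ``$-1$'' subtraction must be folded into the sketched matrix, exploiting that the \emph{same} Gaussian rows $g_j$ appear in the numerators and in $\tr(AA^\top)$: writing $S$ for the relevant second-moment matrix of the $v_i$'s, one has
\[
\tilde q \;=\; \frac{\tfrac{1}{r}\sum_{j} g_j^\top \tilde{M}\,(S-I)\,\tilde{M} g_j}{\tfrac{1}{r}\sum_{j} g_j^\top \tilde{M}^2 g_j}\;,
\]
so the numerator is a JL/Hutchinson sketch of $\tr\bigl(\tilde{M}(\Sigma(w^t)-I)\tilde{M}\bigr)$ whose standard deviation is $O\bigl(\fnorm{\tilde{M}(\Sigma(w^t)-I)\tilde{M}}/\sqrt r\bigr)\le O\bigl(\normtwo{\Sigma(w^t)-I}\,\tr(\tilde{M}^2)/\sqrt r\bigr)$ (using $\fnorm{\tilde{M}B\tilde{M}}\le\normtwo{B}\,\normtwo{\tilde{M}}\,\fnorm{\tilde{M}}\le\normtwo{B}\tr(\tilde{M}^2)$), while the denominator concentrates to $(1\pm0.02)\tr(\tilde{M}^2)$; dividing gives exactly the claimed $0.1q+0.05\normtwo{\Sigma(w^t)-I}$ form. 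A sanity check that your route cannot work as written: when $\Sigma(w^t)=I$ the ratio structure forces $\tilde q=q=0$ identically, whereas any analysis that treats $\sum_i\tilde\tau_i$ and the normalization separately, as yours does, predicts a nonzero fluctuation. (There is also a weighting/normalization mismatch between $\tilde q=\sum_i(\tilde\tau_i-1)$ and $q=\inner{\Sigma(w^t)-I,U}$ that you inherit from the paper's loose restatement of~\cite{DongHL19}; in the original the score sum is weighted by $w^t$.)
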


Consequently, we only need to analyze the runtime of Algorithm~\ref{alg:2}.
\begin{lemma}
\label{lem:score-runtime}
Algorithm~\ref{alg:2} runs in time $\tilde{O}(t \cdot (N + d + \nnz(S)) \cdot \log(1 / \delta))$.
\end{lemma}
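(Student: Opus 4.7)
The plan is to never form any $d \times d$ matrix explicitly; instead, we always keep $\Sigma(w^i)$ in its implicit form as a sum of outer products of (sparse) sample vectors, and apply it by repeatedly doing sparse matrix--vector products. Under this discipline, the cost of every primitive operation scales with $\nnz(S) + N + d$ (up to the sketch dimension $r$), and the claimed bound follows by counting how many such operations are performed.

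First, sampling $Q \in \R^{r \times d}$ with $r = \tilde O(\log(1/\delta))$ costs $O(rd)$ time. The bulk of the work is computing the $r \times d$ matrix $A = Q \cdot P_\ell(Y)$, where $Y = (\alpha/2)\sum_{i=0}^{t-1} \Sigma(w^i)$ and $P_\ell$ is a degree-$\ell$ polynomial with $\ell = O(\log d)$. I would compute $A$ via Horner-style iteration: starting from $V_0 = Q$, repeatedly form $V_{j+1} = V_j Y$ and accumulate $A = \sum_{j=0}^{\ell} \frac{1}{j!} V_j$. This requires $\ell$ left-multiplications of an $r \times d$ matrix by $Y$, and each left-multiplication by $Y$ decomposes into $t$ left-multiplications by $\Sigma(w^i)$.

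The key step is to show that for any $r \times d$ matrix $V$ and any weight vector $w^i$, one can compute $V \cdot \Sigma(w^i)$ in time $O(r(\nnz(S) + N + d))$ without ever materializing $\Sigma(w^i)$. Writing
\[
V \cdot \Sigma(w^i) = \frac{1}{|w^i|} \sum_{j=1}^{N} w^i_j \, \beta_j (X_j - \mu(w^i))^\top, \qquad \beta_j \eqdef V(X_j - \mu(w^i)) \in \R^r,
\]
I would (a) precompute $V\mu(w^i)$ once in $O(rd)$ time, (b) compute each $VX_j$ in $O(r \cdot \nnz(X_j))$ time (so the total over $j$ is $O(r \cdot \nnz(S))$), and (c) assemble the output as $[\sum_j w^i_j \beta_j X_j^\top] - [\sum_j w^i_j \beta_j]\,\mu(w^i)^\top$, where the first sum is accumulated one sparse outer product at a time in $O(r \cdot \nnz(S))$ total, and the second is an $r$-vector times a dense $d$-vector in $O(rd)$ time. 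Precomputing $\mu(w^i)$ once takes $O(\nnz(S) + d)$ time.

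Putting the pieces together, each left-multiplication by $Y$ costs $O(t \cdot r \cdot (\nnz(S) + N + d))$, and there are $\ell = O(\log d)$ such multiplications, giving $\tilde O(t \cdot r \cdot (\nnz(S) + N + d))$ for computing $A$. Once $A$ is in hand, computing $\tr(AA^\top)$ takes $O(rd)$ time, and each $\tilde\tau_i = \normtwo{A(X_i - \mu(w^t))}^2/\tr(AA^\top)$ can be obtained by precomputing $A\mu(w^t)$ in $O(rd)$ time and then forming $AX_i$ in $O(r\cdot\nnz(X_i))$ time, for a total of $O(r(\nnz(S) + N + d))$, which is absorbed by the earlier bound; summing $\tilde q = \sum_i (\tilde\tau_i - 1)$ costs $O(N)$. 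Substituting $r = \tilde O(\log(1/\delta))$ gives the claimed runtime $\tilde O(t \cdot (\nnz(S) + N + d) \cdot \log(1/\delta))$. The only conceptual obstacle is the bookkeeping in the key step: one must verify that each intermediate $r \times d$ matrix stays dense but is never multiplied by a dense $d \times d$ matrix, so that the cost is always linear in $\nnz(S) + N + d$ rather than in $Nd$.
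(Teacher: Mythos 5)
Your proposal is correct and follows essentially the same route as the paper's proof: keep $\Sigma(w^i)$ implicit as a weighted sum of outer products, push the $r \times d$ sketch through the degree-$\ell$ polynomial using $O(\ell t)$ sparse multiplications each costing $O(r(\nnz(S)+N+d))$, and then evaluate the scores via $A(X_i-\mu(w^t))$. The only cosmetic differences are that the paper applies the polynomial one row of $Q$ at a time via Horner's rule while you process the full $r\times d$ block with direct Taylor-term accumulation, which changes nothing in the complexity count.
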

\begin{proof}
We first show that matrix $A \in \R^{r \times d}$ can be computed in time
\[
\tilde{O}(t \cdot (N + d + \nnz(S)) \cdot \log 1/\delta) \; .
\]
We will multiply each row of $Q$ (from the left) through the matrix polynomial to obtain $A$.
Let $v^\top \in \R^{1 \times d}$ be one of the rows of $Q$ and let $w \in \R^N$ be any weight vector.
Observe that we can compute all $\left(v^\top \left (X_i - \mu(w) \right)\right)_{i=1}^N$ in time
\[
O\left(\sum_{i=1}^N \nnz(X_i) + N + d\right) =  O(\nnz(S) + N + d) \; .
\]
This is because we can compute $\mu(w)$ and $v^\top \mu(w)$ just once, and then compute $v^\top X_i$ for every $i$ and subtract $v^\top \mu(w)$ from it.

Then, we can compute
\begin{align*}
v^\top \Sigma(w) 
&= v^\top \left( \sum_{i=1}^N w_i (X_i - \mu(w))(X_i - \mu(w))^\top \right) \\
&= \sum_{i=1}^N w_i \left( v^\top(X_i - \mu(w)) \right) X_i^\top - \left(\sum_{i=1}^N w_i \left( v^\top(X_i - \mu(w)\right) \right) \mu(w)^\top
\end{align*}
as the sum of $N$ sparse vectors subtracting a dense vector in time $O(\nnz(S) + N + d)$.

Therefore, for any $v \in \R^m$, we can evaluate $v^\top \sum_{i = 0}^{t - 1} \Sigma(w^i)$ in time $O(t \cdot (\nnz(S) + d + N))$. 

Because $P_\ell$ is a degree-$\ell$ matrix polynomial of $\sum_{i = 0} ^ {t - 1} {\Sigma(w^i)}$, we can use Horner’s method for polynomial evaluation to compute $v^\top P_{\ell}  \left(-\frac{\alpha}{2}\sum_{i = 0}^{t - 1} \Sigma(w^i)\right)$ in time $O(\ell \cdot t \cdot (\nnz(S) + d + N))$.
We need to multiply each of $r$ rows of $A$ through, we can compute $A$ in time $O(r \cdot \ell \cdot t \cdot (\nnz(S) + d + N))$.

It remains to show that $(\tilde \tau_i)_{i=1}^N$ and $\tilde q$ as defined in Equations~\ref{eq:tau}~and~\ref{eq:q} can be computed quickly.
Note that $\tr(AA^\top)$ is the entrywise inner product of $A$ with itself, so it can be computed in time $O(rd)$.
The vectors $\left(A(X_i - \mu(w^t))\right)_{i=1}^N$ can be computed in time $O\left(r \cdot (\sum_i \nnz(X_i) + d) \right) = O(r \cdot (\nnz(S) + d))$, because each $A X_i$ can be compute in time $O(r \cdot\nnz(X_i))$ and $A \mu(w^t)$ can be computed only once in time $O(rd)$.
Because $r = O(\log N \log(1/\delta))$, we can compute all $\tilde{\tau}_{i}$ in time $O(r \cdot (\nnz(S) + d))$. Given the $\tilde{\tau}_{i}$'s, $\tilde q$ can be computed in $O(N)$ time.

Recall that $r = O(\log N \log(1/\delta))$ and $\ell = O(\log d)$.
Putting everything together, the overall runtime of the oracle is
\[
O(r \cdot \ell \cdot t \cdot (\nnz(S) + d + N)) + O(r \cdot (\nnz(S) + d) + N)
 = \tilde O(t \cdot (\nnz(S) + d + N) \cdot \log(1/\delta)) \; . \qedhere
\]
\end{proof}

By Lemma~\ref{lem:score-runtime} and the fact that $t = O(\log d)$ and $\delta = 1/\poly(d)$, we can implement an approximate score oracle that succeeds with high probability and runs in time $\tilde O(\nnz(S) + N + d)$.

Lemma~\ref{lem:nd-oracle} follows from Lemma~\ref{lem:DongHL19} and the correctness and the runtime of the approximate score oracle (Lemmas~\ref{lem:score-correctness},~and~\ref{lem:score-runtime}).
(Note that we have $\nnz(S) = Nd$, $N = N$, and $d = m$ when invoking these lemmas.)


\bibliography{reference}
\bibliographystyle{alpha}

\newpage

\appendix

\section{Deterministic Conditions on Good Samples}
\label{cond_good_sample}

In this section, we will first prove Lemma~\ref{lem:F-moments}, then we prove that the deterministic conditions in Section~\ref{sec:cond} hold with high probability if we take enough samples.

\begin{lemma}
\label{Lemma:A3}
For $X \sim P$ and $f(X, p)$ as defined in Definition~\ref{def:Fxq}, we have
\begin{itemize}
\item[(i)] $\E(f(X,p)) = 0$.
\item[(ii)] $\Cov[f(X,p)]= \diag(\pi^P \circ p \circ (1-p))$.
\item[(iii)] For any unit vector $v \in \R^m$, we have $\Pr_{X \sim P}\left[ | v^\top f(X, p) | \ge T \right] \le 2 \exp(-T^2 / 2)$.
\end{itemize}
\end{lemma}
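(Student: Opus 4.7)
The plan is to exploit the topological ordering of the Bayesian network and its defining Markov property. Throughout, let $a^\star(i,X)$ denote the unique parental configuration of node $i$ realized by $X$ (so $X \in \Pi_{i, a^\star(i,X)}$), and let $\mathcal{F}_j = \sigma(X_1,\dots,X_j)$.

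For parts (i) and (ii), I will reduce everything to conditional expectations under parental configurations. For (i), conditioning on $\Pi_{i,a}$ gives $\E[f(X,p)_{i,a}] = \pi^P_{i,a}\,\E[X_i - p_{i,a} \mid \Pi_{i,a}] = 0$, since by definition $p_{i,a} = \Pr[X_i = 1 \mid \Pi_{i,a}]$. For (ii), I compute $\E[f(X,p)_{i,a}\,f(X,p)_{j,b}]$ case by case. A diagonal entry equals $\pi^P_{i,a}\,p_{i,a}(1-p_{i,a})$ by the same conditioning. If $i = j$ and $a \ne b$, the events $\Pi_{i,a}$ and $\Pi_{i,b}$ assign incompatible values to the same parent set, hence are disjoint and the product vanishes pointwise. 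If $i \ne j$, assume $i < j$ in topological order and condition on $X_{<j}$: the factor from coordinate $i$ is $\mathcal{F}_{j-1}$-measurable, and the Bayes-net Markov property gives $\E[(X_j - p_{j,b})\mathbf{1}[\Pi_{j,b}] \mid X_{<j}] = \mathbf{1}[\Pi_{j,b}](p_{j,b} - p_{j,b}) = 0$. Together these cases give $\Cov[f(X,p)] = \diag(\pi^P \circ p \circ (1-p))$.

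For part (iii), my plan is to write $v^\top f(X,p)$ as a sum of a martingale difference sequence and apply Azuma--Hoeffding. Since exactly one parental configuration fires for each node, only a single term in the inner sum over $a$ is nonzero, giving
\[
v^\top f(X,p) = \sum_{i=1}^{d} Y_i, \qquad Y_i = v_{i,a^\star(i,X)}\bigl(X_i - p_{i,a^\star(i,X)}\bigr).
\]
Because $a^\star(i,X)$ is $\mathcal{F}_{i-1}$-measurable, the Markov property yields $\E[Y_i \mid \mathcal{F}_{i-1}] = 0$, so $(Y_i)$ is a martingale difference sequence with the almost-sure deterministic bound $|Y_i| \le \max_a |v_{i,a}|$. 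Setting $c_i = \max_a |v_{i,a}|$, I obtain $\sum_i c_i^2 \le \sum_{i,a} v_{i,a}^2 = \normtwo{v}^2 = 1$, and Azuma--Hoeffding then yields $\Pr[|v^\top f(X,p)| \ge T] \le 2\exp(-T^2/2)$.

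The main conceptual step is the reformulation in (iii): one must notice that revealing coordinates in topological order collapses $\sum_a v_{i,a} f(X,p)_{i,a}$ to the single random term $v_{i,a^\star(i,X)}(X_i - p_{i,a^\star(i,X)})$, which then admits the clean \emph{deterministic} bound $\max_a |v_{i,a}|$ required by Hoeffding's lemma. The inequality $\sum_i \max_a v_{i,a}^2 \le \normtwo{v}^2$ is precisely what makes the subgaussian parameter equal to $1$ and yields the claimed factor $\exp(-T^2/2)$ without any dimensional loss. Everything else reduces to routine conditional-expectation computations under the Bayes-net Markov property.
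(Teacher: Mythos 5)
Your proposal is correct and follows essentially the same route as the paper: parts (i) and (ii) by the same case analysis of conditional expectations (disjointness of configurations at the same node, and the Markov property across nodes), and part (iii) by a martingale-difference decomposition plus Azuma--Hoeffding. The only cosmetic difference is that your martingale is indexed by the $d$ nodes after collapsing each node's inner sum to the single fired term with increment bound $\max_a |v_{i,a}|$, whereas the paper runs the martingale over all $m$ coordinates $(i,a)$ with increment bound $|v_k|$; both give $\sum c_k^2 \le \normtwo{v}^2 = 1$ and the same tail bound.
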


\begin{proof}
We first claim that $\E_{X \sim P} [f(X, p)_k | f(X, p)_1,..., f(X, p)_{k - 1}] = 0 $ for all $k \in [m]$. Let $k = (i, a)$, conditioned on $f(X, p)_1,..., f(X, p)_{k - 1}$, the event $\pi_{i, a}$ may or may not happen.
A simple calculation shows that in both cases, we have $\E_{X \sim P} [f(X, p)_k | f(X, p)_1,..., f(X, p)_{k - 1}] = 0 $.

For $(i)$, we have $\E[f(X, p)] = \pi^P \circ p + (1 - \pi^P) \circ p - p = 0$.

For $(ii)$, we first show that for any $(i, a) \neq (j, b)$, we have $\E[f(X, p)_{i, a} f(X, p)_{j, b}] = 0$. For the case $i = j$, we can see at least one of $\Pi_{i, a}$ and $\Pi_{j, b}$ does not occur, so $f(X, p)_{i, a} f(X, p)_{j, b}$ is always $0$.
For the case $i \neq j$, we assume without loss of generality that $i >j$, then we have $\E[f(X, p)_{i,a} | f(X, p)_{j, b}] = 0$. 

For all $(i, a) \in [m]$, we have $\E[f(X, p)_{i, a}^2] = \pi_{i,a}^P \E [(X - p_{i,a})^2 | \Pi_{i, a}] = \pi_{i,a}^P p_{i, a} (1 - p_{i,a})$. Combining these two, we get $\Cov[f(X,p)]= \diag(\pi^P \circ p \circ (1-p))$.

For $(iii)$, recall that $\E_{X \sim P} [f(X, p)_k | f(X, p)_1,..., f(X, p)_{k - 1}] = 0$, and consequently the sequence $\sum_{k = 1}^{\ell} v_k f(X, q)_k$ for $1 \le \ell \le m$ is a martingale, and we can apply Azuma's inequality. Note that $|v_k| \ge |v_k f(X, p)_k|$, hence we have $\Pr_{X \sim P}\left[ | v^\top f(X, p) | \ge T \right] \le 2\exp(-T^2 / 2\normtwo{v}^2) = 2 \exp(-T^2 / 2)$.
\end{proof}

The conditions in Equations~\eqref{eqn:empirical}~and~\eqref{eqn:good-pro} are proved in Lemma~\ref{lem:A1}, and the conditions in Equation~\eqref{eqn:conditions-Fp} are proved in Corollary~\ref{lem:A3}.
\begin{lemma}
\label{lem:A1}
Let $P$ be a Bayesian network.
Let $G^\star$ be a set of $\ \Omega( (m \log (m / \tau)) / \eps^{2})$ samples drawn from $P$.
Let $\pi^{G^\star}$ and $p^{G^\star}$ be the empirical parental configuration probabilities and conditional probabilities of $P$ given by $G^\star$. Then with probability $1 - \tau$, the following conditions hold:

\begin{itemize}
\item[(i)] For any subset $T \subset G^\star$ with $|T| \ge (1 - 2\eps) N$, we have
\[
\norminf{ \pi^T - \pi^P } \le O(\eps) \; .
\]
\item[(ii)] We have
\[
\normtwo{\sqrt{\pi^P} \circ (p - p^{G^\star})} \le O(\eps) \; .
\]
\end{itemize}
\end{lemma}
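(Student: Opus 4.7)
The plan is to prove the two parts separately using standard concentration inequalities, together with a careful case split for part (ii).

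For part (i), the first step is to reduce to bounding $\norminf{\pi^{G^\star}-\pi^P}$. Since $T$ is obtained from $G^\star$ by removing at most $2\eps N$ samples, for every $k$ we have
\[
|\pi^T_k - \pi^{G^\star}_k| \le \frac{2\eps N}{|T|} = O(\eps),
\]
so it suffices to show $\norminf{\pi^{G^\star}-\pi^P} = O(\eps)$ with probability at least $1-\tau/2$. For each fixed $k$, $\pi^{G^\star}_k$ is the average of $N$ i.i.d.\ Bernoulli$(\pi^P_k)$ indicators, so Hoeffding's inequality gives $\Pr[|\pi^{G^\star}_k-\pi^P_k|>\eps] \le 2\exp(-2\eps^2 N)$. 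A union bound over the $m$ coordinates yields part (i), provided $N = \Omega(\log(m/\tau)/\eps^2)$, which is implied by our sample-size assumption.

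For part (ii), let $n_k = \sum_j \1[X_j \in \Pi_k]$ and $t_k = \sum_j \1[X_j \in \Pi_k] X_j(i)$ for $k=(i,a)$, so that $p^{G^\star}_k = t_k/n_k$ (with any convention when $n_k = 0$). The starting point is the identity
\[
p_k - p^{G^\star}_k = \frac{1}{n_k}\sum_{j=1}^N \1[X_j \in \Pi_k]\bigl(p_k - X_j(i)\bigr),
\]
where each summand is a mean-zero bounded random variable with variance $\pi^P_k p_k(1-p_k)$. I would split indices $k$ by whether $\pi^P_k$ is above or below a threshold $\theta = c\eps^2/m$ for a small constant $c$. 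For $k$ with $\pi^P_k \le \theta$, the trivial bound $\pi^P_k(p_k - p^{G^\star}_k)^2 \le \pi^P_k$ gives a total contribution of at most $m\theta = O(\eps^2)$. For $k$ with $\pi^P_k > \theta$, a Chernoff bound together with a union bound over $k$ shows $n_k \ge N\pi^P_k/2$ simultaneously with high probability, since the sample-size assumption implies $N\pi^P_k \gg \log(m/\tau)$. Conditional on the values of $n_k$, Hoeffding applied to the conditional Bernoulli$(p_k)$ trials yields $(p_k - p^{G^\star}_k)^2 \le O(\log(m/\tau)/n_k)$ for all such $k$ after a union bound, whence
\[
\sum_{k:\pi^P_k > \theta} \pi^P_k(p_k - p^{G^\star}_k)^2 \le \sum_k \pi^P_k \cdot \frac{O(\log(m/\tau))}{N\pi^P_k/2} \le O\!\left(\frac{m\log(m/\tau)}{N}\right) = O(\eps^2).
\]

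The main technical delicacy is the dependence between the $n_k$'s and $t_k$'s, since each sample simultaneously affects the counts of $d$ different parental configurations. I would handle this by first conditioning on the entire vector $(n_k)_k$ and then invoking the conditional independence of the per-sample Bernoulli outcomes within each parental configuration to apply Hoeffding. An arguably cleaner route is to apply Bernstein's inequality directly to the mean-zero sum $\sum_j \1[X_j\in \Pi_k](p_k - X_j(i))$, whose variance is $N\pi^P_k p_k(1-p_k)$; this avoids the conditioning argument and, after dividing by $n_k^2 \approx (N\pi^P_k)^2$ and summing weighted by $\pi^P_k$, recovers the same $O(\eps^2)$ bound. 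Combining the two parts and adjusting the constants in the union bounds gives the whole lemma with total failure probability at most $\tau$.
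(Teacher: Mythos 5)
Your proposal is correct, and part (i) is the same argument as the paper's (Chernoff per configuration, union bound over the $m$ entries, then observe that deleting a $2\eps$-fraction of $G^\star$ moves each $\pi^T_k$ by only $O(\eps)$). Part (ii) reaches the same conclusion by a genuinely different bookkeeping. The paper fixes $k=(i,a)$, notes that $p^{G^\star}_k$ is an average of $n_k=\pi^{G^\star}_k N$ conditionally independent Bernoulli$(p_k)$ outcomes, and applies Chernoff with the \emph{empirically weighted} accuracy $|p^{G^\star}_k-p_k|\le \eps/\sqrt{m\,\pi^{G^\star}_k}$; this target is achievable for every value of $n_k$ precisely because $N=\Omega(m\log(m/\tau)/\eps^2)$, and it immediately gives $\normtwo{\sqrt{\pi^{G^\star}}\circ(p-p^{G^\star})}\le O(\eps)$, after which the paper converts $\sqrt{\pi^{G^\star}}$ to $\sqrt{\pi^P}$ via $\norminf{\pi^P-\pi^{G^\star}}\le\eps$ (a step that implicitly leans on the standing assumption that $\min_k \pi^P_k$ is much larger than $\eps$). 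You instead threshold at $\pi^P_k\le c\eps^2/m$, dispose of the light configurations by the trivial bound $\pi^P_k(p_k-p^{G^\star}_k)^2\le \pi^P_k$, and for heavy configurations first prove $n_k\ge N\pi^P_k/2$ by Chernoff (valid since $N\pi^P_k\gtrsim\log(m/\tau)$ above the threshold) and then apply Hoeffding or Bernstein. This is slightly longer but buys something real: it controls the rare-configuration entries explicitly and never needs the final $\pi^{G^\star}\to\pi^P$ conversion, so it works without any lower bound on $\min_k\pi^P_k$.

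One caveat on your dependence discussion: conditioning on the \emph{entire} vector $(n_k)_k$ is not innocuous. For $k=(i,a)$, the counts $n_{k'}$ associated with children of node $i$ are functions of the values $X_j(i)$, so conditioning on all counts simultaneously can bias (or even determine) exactly the Bernoulli outcomes you want to apply Hoeffding to. The fix is to argue per $k$ inside the union bound: for each fixed $k=(i,a)$, condition only on the samples' values of $\mathrm{Parents}(i)$ (which determines the index set $\{j: X_j\in\Pi_k\}$), under which the relevant $X_j(i)$ are i.i.d.\ Bernoulli$(p_k)$ by the Bayes-net property --- this is what the paper's proof does implicitly. Alternatively, your Bernstein route on the unnormalized sum $\sum_j \1[X_j\in\Pi_k](p_k-X_j(i))$ avoids conditioning altogether and, as you note, yields the same $O(\eps^2)$ total after dividing by $n_k^2\approx(N\pi^P_k)^2$, so either repair makes the argument fully rigorous.
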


\begin{proof}
For $(i)$, first consider the case of $T = G^\star$ and fix an entry $1 \le k \le m$ in the conditional probability table.
Because each sample is drawn independently from $P$, by the Chernoff bound, we have that when $N = \Omega(\log (m/ \tau) / \eps^2)$, $|\pi^P_k -  \pi^{G^\star}_k| \le \eps$ holds with probability at least $1 - \tau/m$. Hence, after taking an union bound over $k$, we have that $\norminf{\pi^T - \pi^P} \le \eps$ holds with probability at least $1 - \tau$.
Now for a general subset $T \subset G^\star$, notice that removing $O(\eps)$-fraction of samples can change $\pi^T$ by at most $O(\eps)$. Thus, condition $(i)$ holds with probability at least $1 - \tau$.

For $(ii)$, for any $k = (i,a)$, note that $p^{G^\star}_k$ is estimated from $\pi^{G^\star}_k N$ samples.
In these samples, the parental configuration $\Pi_k$ happens and the value of $X_i$ is decided independently.
By the Chernoff bound and the union bound, we get that when $N = \Omega( (m \log (m / \tau)) / \eps^{2})$, $|p^{G^\star}_k - p_k| \le \eps / \sqrt{m \pi^{G^\star}_k}$ holds for every $k$ with probability at least $1 - \tau$, which implies
\[
\normtwo{\sqrt{\pi^{G^\star}} \circ (p - p^{G^\star})} \le O(\eps)  .
\]
Combining this with $\norminf{\pi^P - \pi^{G^\star}} \le \eps$, we get that condition~$(ii)$ holds.
\end{proof}

To prove Equation~\eqref{eqn:conditions-Fp}, we use the following concentration bounds for subgaussian distributions. Recall that a distribution $D$ on $\R^d$ with mean $\mu$ is subgaussian if for any unit vector $v \in \R^d$ we have $\Pr_{x \sim D} [|\inner{v, x - \mu}| \ge t] \le \exp (-ct^2)$, where $c$ is a universal constant.

\begin{lemma}
\label{lem:A2}
Let $G^\star$ be a set of $N = \Omega((\eps \sqrt{\log 1/ \eps})^{-2} (d + \log(1/\tau)))$ samples drawn from a $d$-dimensional subgaussian distribution with mean $\mu$ and covariance matrix $\Sigma \preceq I$. Here $A \preceq B$ means that $B-A$ is a positive semi-definite matrix.
Then, with probability $1-\tau$, the following conditions hold:

For $\delta_1 = c_1(\eps \sqrt{\log 1/\eps})$ and $\delta_2 = c_1(\eps \log 1/\eps)$ where $c_1$ is an universal constant, we have that for any subset $T \subset G^\star$ with $|T| \ge (1 - 2\eps) N$,
\begin{align}
\label{eqn:good-sample-moments}
\normtwo{\frac{1}{|T|}\sum_{i \in T}  (X_i - \mu)} \le \delta_1 \; , \; \quad
\normtwo{\frac{1}{|T|} \sum_{i \in T}  (X_i - \mu) (X_i - \mu)^\top - \Sigma} \le \delta_2 \;  
\end{align}
\end{lemma}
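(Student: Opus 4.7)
The plan is to combine standard concentration on the full sample $G^\star$ with a resilience argument that controls the worst-case effect of excising any $2\eps N$ samples. Without loss of generality, shift coordinates so that $\mu = 0$; since $\Sigma \preceq I$, every linear functional $\langle v, X\rangle$ with $\|v\|_2 = 1$ is $O(1)$-subgaussian, and $\langle v, X\rangle^2 - \mathbb{E}\langle v, X\rangle^2$ is $O(1)$-subexponential.

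First, I would prove the full-sample bounds by a standard $\tfrac{1}{2}$-net argument on $S^{d-1}$ combined with scalar Hoeffding (for the mean) and scalar Bernstein (for the covariance): for the stated value of $N$, with probability at least $1 - \tau/2$, both $\normtwo{\tfrac{1}{N}\sum_{i \in G^\star} X_i} \le \delta_1/2$ and $\normtwo{\tfrac{1}{N}\sum_{i \in G^\star} X_i X_i^\top - \Sigma} \le \delta_2/2$ hold. The $e^{O(d)}$ size of the net is exactly what forces the $d$ term in the sample complexity.

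Next comes the resilience step. For any $T \subset G^\star$ with $|T| \ge (1-2\eps)N$, write $R := G^\star \setminus T$ with $|R| \le 2\eps N$. The quantities $\tfrac{1}{|T|}\sum_{i\in T}X_i$ and $\tfrac{1}{|T|}\sum_{i\in T}X_i X_i^\top$ differ from their $G^\star$ counterparts (up to a $1/(1-2\eps)$ prefactor) by $\tfrac{1}{N}\sum_{i\in R}X_i$ and $\tfrac{1}{N}\sum_{i\in R}X_i X_i^\top$ respectively, so it suffices to uniformly bound
\[
\sup_{\|v\|_2 = 1}\sup_{|R| \le 2\eps N}\frac{1}{N}\sum_{i\in R}|\langle v,X_i\rangle|\quad \text{and}\quad \sup_{\|v\|_2 = 1}\sup_{|R| \le 2\eps N}\frac{1}{N}\sum_{i\in R}\langle v,X_i\rangle^2.
\]
For each fixed $v$ the inner supremum is attained by the top $2\eps N$ order statistics of $|\langle v,X_i\rangle|$ (resp.\ of $\langle v,X_i\rangle^2$). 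A direct Bernstein-type tail bound applied to the truncated partial sums $\sum_i |\langle v,X_i\rangle|\,\mathbf{1}[|\langle v,X_i\rangle| \ge t_\eps]$ with threshold $t_\eps = \Theta(\sqrt{\log(1/\eps)})$ shows that the top $2\eps N$ subgaussian order statistics sum to at most $O(\eps N \sqrt{\log(1/\eps)}) = O(N\delta_1)$ with probability $1 - e^{-\Omega(\eps N \log(1/\eps))}$; the analogous subexponential estimate yields the bound $O(\eps N \log(1/\eps)) = O(N\delta_2)$ for the quadratic case. A union bound over the same $\tfrac{1}{2}$-net on $S^{d-1}$ promotes these pointwise-in-$v$ bounds to uniform-in-$v$ bounds.

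The main obstacle will be aligning the parameters: the threshold $t_\eps$, the net discretization error, and the per-direction failure probability must be chosen so that the $e^{O(d)}$ net-size factor is absorbed, which forces $N = \Omega((\eps\sqrt{\log(1/\eps)})^{-2}(d + \log(1/\tau)))$ exactly as stated. After this, triangle inequality together with absorption of the $1/(1-2\eps)$ prefactor into the universal constant $c_1$ converts the bounds on $\tfrac{1}{N}\sum_{i\in R}$ into the claimed bounds on $\tfrac{1}{|T|}\sum_{i\in T}$, completing the proof.
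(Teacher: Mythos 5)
Your proof sketch is sound and is essentially the argument this lemma rests on: the paper itself does not prove Lemma~\ref{lem:A2} directly, but cites the $\Sigma = I$ case from \cite{DKKLMS16} and notes that the same subgaussian concentration argument carries over when $\Sigma \preceq I$. Your net-plus-resilience argument (full-sample Hoeffding/Bernstein bounds over a $\tfrac12$-net, plus truncation at threshold $\Theta(\sqrt{\log(1/\eps)})$ to control the top $2\eps N$ order statistics, with per-direction failure probability $e^{-\Omega(\eps N \log(1/\eps))} \le e^{-\Omega(d + \log(1/\tau))}$ absorbing the net size) is exactly that standard proof, so it matches the route the paper intends.
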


A special case of Lemma~\ref{lem:A2} where $\Sigma = I$ is proved in ~\cite{DKKLMS16}.
The proof for the general case where $\Sigma \preceq I$ is almost identical.
In particular, the concentration inequalities used in~\cite{DKKLMS16} for subgaussian distributions still hold when $\Sigma \preceq I$ (see, e.g.,~\cite{vershynin2010introduction}).

From Lemma~\ref{lem:A2} and~\ref{lem:F-moments}, we have the following corollary:

\begin{corollary}
\label{lem:A3}
Let $G^\star$ be a set of $N = \Omega((\eps   \sqrt{ \log 1/ \eps})^{-2} (m + \log(1/\tau)))$ samples drawn $P$.
Then, with probability $1-\tau$, the following conditions to hold: For $\delta_1 = c_1(\eps \sqrt{\log 1/\eps})$ and $\delta_2 = c_1(\eps \log 1/\eps)$, where $c_1$ is an universal constant, we have that for any subset $T \subset G^\star$ with $|T| \ge (1 - 2\eps) N$,
\begin{align}
\begin{split}
\normtwo{\frac{1}{|T|} \sum_{i \in T}  f(X_i, p)} \le O(\delta_1) \; , \quad
\normtwo{\frac{1}{|T|}\sum_{i \in T}  f(X_i, p) f(X_i, p)^\top  - \Sigma} \le O(\delta_2) \; , 
\end{split}
\end{align}
where $\Sigma = \Cov[f(X, p)] = \diag(\pi^P \circ p \circ (1-p))$.

\end{corollary}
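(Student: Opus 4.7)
The plan is to derive Corollary~\ref{lem:A3} as a direct consequence of Lemma~\ref{lem:A2} applied to the pushforward distribution of $f(X, p)$ when $X \sim P$. In other words, I would treat $Y = f(X, p)$ as an $m$-dimensional random vector and verify that it satisfies all the hypotheses needed to invoke Lemma~\ref{lem:A2}.

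First, I would collect the relevant moment and tail information about $Y$ from Lemma~\ref{Lemma:A3}: the mean is $\mu = 0$ (part (i)), the covariance is $\Sigma = \diag(\pi^P \circ p \circ (1-p))$ (part (ii)), and every unit-vector projection satisfies $\Pr[|v^\top Y| \ge T] \le 2\exp(-T^2/2)$ (part (iii)), so $Y$ is subgaussian with a universal constant. Next, I would check the missing hypothesis $\Sigma \preceq I$: since $\Sigma$ is diagonal and each entry $\pi^P_k p_k(1 - p_k) \le 1/4 \le 1$, this is immediate.

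Finally, I would apply Lemma~\ref{lem:A2} to the samples $\{f(X_i, p)\}_{i \in G^\star}$, viewed as $N$ i.i.d.\ draws from this $m$-dimensional subgaussian distribution. The sample-size hypothesis of Lemma~\ref{lem:A2} becomes $N = \Omega((\eps \sqrt{\log 1/\eps})^{-2}(m + \log(1/\tau)))$, which matches the hypothesis of the corollary. The conclusion of Lemma~\ref{lem:A2}, applied with $\mu = 0$ and the stated $\Sigma$, yields exactly the two bounds in Equation~\eqref{eqn:conditions-Fp} for every subset $T \subset G^\star$ of size at least $(1 - 2\eps)N$, with probability at least $1 - \tau$.

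There is essentially no obstacle here beyond the routine verification that $f(X, p)$ fits the template of Lemma~\ref{lem:A2}; the substance of the corollary is entirely contained in Lemma~\ref{Lemma:A3} (moments and subgaussianity) and Lemma~\ref{lem:A2} (uniform concentration over large subsets). The only mild subtlety to flag is that one must use the general $\Sigma \preceq I$ version of Lemma~\ref{lem:A2} rather than the $\Sigma = I$ version of~\cite{DKKLMS16}, but the excerpt already notes that the same concentration inequalities go through in this regime.
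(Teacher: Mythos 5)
Your proposal is correct and matches the paper's own derivation: the paper obtains this corollary precisely by combining Lemma~\ref{lem:A2} (the subgaussian concentration bound with $\Sigma \preceq I$) with the moment and tail properties of $f(X,p)$ from Lemma~\ref{lem:F-moments}. Your explicit checks (mean zero, $\Sigma = \diag(\pi^P \circ p \circ (1-p)) \preceq I$ since each entry is at most $1/4$, and the subgaussian projection tails) are exactly the routine verifications the paper leaves implicit.
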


\section{Proof of Lemma~\ref{lem:DongHL19}}
\label{appendix:sparse_input}

In this section, we will show the equivalence between Lemma~\ref{lem:DongHL19} and Lemma~\ref{lem:que_id}.
Lemma~\ref{lem:que_id} is a restatement of the result of~\cite{DongHL19} using their stability notations.

We first state the stability condition used throughout~\cite{DongHL19}.

\begin{definition}[\cite{DongHL19}]
\label{def:good_condition} 
We say a set of points S is ($\eps$,$\gamma_1$,$\gamma_2$,$\beta_1$,$\beta_2$)-good with respect to a distribution $D$ with true mean $\mu$ if the following two properties hold:
\begin{itemize}
    \item $\normtwo{\mu(S) - \mu} \le \gamma_1$, $\normtwo{\frac{1}{|S|}\sum_{i \in S}{(X_i} - \mu(S))(X_i - \mu(S))^\top - I} \le \gamma_2 \; .$
    \item For any subset $T \subset S$ so that $|T| = 2\eps |S|$, we have
    \[
    \normtwo{\frac{1}{|T|}\sum_{i \in T}{X_i} - \mu} \le \beta_1, \normtwo{\frac{1}{|T|}\sum_{i \in T}{(X_i} - \mu(S))(X_i - \mu(S))^\top - I} \le \beta_2 \; .
    \]
\end{itemize}
\end{definition}

Then,~\cite{DongHL19} showed the following result.

\begin{lemma}
\label{lem:que_id}
Let $D$ be a distribution on $\R^d$ with unknown mean $\mu$. 
Let $0 < \eps < \eps_0$ for some universal constant $\eps_0$. Let $S$ be a set of $N$ samples with $S = S_g \cup S_b \backslash S_r$ where $|S_b|,|S_r| \le \eps|S|$, and $S_g$ is $(\eps,\gamma_1,\gamma_2, \beta_1, \beta_2)$-good with respect to $D$.
Let $\mathscr{O}_{apx}$ be an approximate score oracle for $S$. Suppose $\normtwo{X} \le R$ for all $X \in S$.
Then, there is an algorithm \textsf{QUEScoreFilter}(S, $\mathscr{O}_{apx}$, $\delta$) that outputs a $\hat{\mu}$ such that with high probability,
\[
\normtwo{\hat{\mu} - \mu} \le O(\eps \sqrt{\log 1/\eps}+\sqrt{\eps \xi} + \gamma_1) \; ,
\]
where 
\[
\xi = \gamma_2 + 2\gamma_1^2 + 4\eps^2\beta_1^2 + 2\eps\beta_2 + O(\eps \log 1 / \eps) \; .
\]
Moreover, \textsf{QUEScoreFilter} makes $O(\log R \log d)$ calls to the score oracle $\mathscr{O}_{apx}$, and the rest of the algorithm runs in time $\tilde{O} (N \log (R))$.
\end{lemma}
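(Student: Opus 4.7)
The plan is to derive Lemma~\ref{lem:que_id} by faithfully adapting the matrix multiplicative weight update (MMWU) filter of Dong, Hopkins, and Li, with the QUE score (quadratic form of a Gibbs-weighted covariance sum) as the filtering statistic. Initialize a weight vector $w^0 \in \Delta_N$ uniformly on $S$. In outer iteration $t$, I would check whether $\normtwo{\Sigma(w^t) - I}$ is small enough (specifically $O(\xi)$); if so, output $\hat\mu = \mu(w^t)$. Otherwise, using the approximate score oracle $\mathscr{O}_{apx}$, compute $(\tilde\tau_i)_{i=1}^N$ and the scalar $\tilde q$ corresponding to the Gibbs distribution $U \propto \exp(\alpha \sum_{i<t} \Sigma(w^i))$, and downweight each sample in proportion to $\tilde\tau_i$ (e.g. $w^{t+1}_i \gets w^t_i (1 - \tilde\tau_i / \tilde\tau_{\max})$). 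Each such downweight step halves at least a constant fraction of the remaining mass on a sample that contributes most to the large spectral direction, so after $O(\log R)$ filter steps the total weight removed saturates, and the standard MMWU width/regret analysis bounds the number of outer phases by $O(\log d)$.

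For correctness, the argument splits the final weighted mean into contributions from $S_g$, from the added bad set $S_b$, and from the removed good samples. Using the $(\gamma_1, \gamma_2)$-bounds on the full good set $S_g$, together with the $(\beta_1, \beta_2)$-bounds on any removed $2\eps$-fraction, one tracks how much the filter can have shifted the mean: the removed good mass contributes at most $2\eps \beta_1$ in mean and $2\eps \beta_2$ in second moment, while the bad samples' contribution is controlled because, at termination, $\normtwo{\Sigma(w^t) - I}$ is small and hence the Cauchy-Schwarz style bound gives a mean-deviation of $O(\sqrt{\eps\xi})$ where $\xi$ absorbs $\gamma_2$, $\gamma_1^2$, $\eps^2\beta_1^2$, $\eps\beta_2$, and the unavoidable $\eps\log(1/\eps)$ from a tail cutoff at radius $O(\sqrt{\log 1/\eps})$ in the filter. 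Adding the $\gamma_1$ bias of the good set itself gives the claimed $O(\eps\sqrt{\log 1/\eps} + \sqrt{\eps\xi} + \gamma_1)$.

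For the runtime, I would observe that the outer loop runs $O(\log d)$ times and each outer phase has $O(\log R)$ filter steps, so there are $O(\log R \log d)$ oracle calls in total. Between oracle calls, the algorithm only updates a length-$N$ weight vector and performs $O(N)$ arithmetic operations (sorting/bucketing the $\tilde\tau_i$, normalizing $w$, etc.), giving $\tilde O(N \log R)$ non-oracle work. Combined with the bound on oracle calls, this matches the stated time.

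The main obstacle is that classical MMWU analyses assume exact scores, whereas here one only has $(1\pm 0.1)$-approximate $\tilde\tau_i$ and an additive-plus-multiplicative approximation to $q = \inner{\Sigma(w^t)-I, U}$. The argument has to show that the approximation error does not derail the potential-function decrease: the trick is to use the approximate $\tilde q$ only as a stopping criterion (so constant-factor error in $q$ only shifts the stopping threshold by a constant) and to use the approximate $\tilde\tau_i$ only to rank-order samples for downweighting, where the $0.1$-multiplicative slack is absorbed into the constants hidden in the MMWU width bound. Since Dong-Hopkins-Li carry out this exact robustness analysis, the present ``proof'' reduces to rewriting their argument under Definition~\ref{def:good_condition} and confirming that no step depends on the stronger stability assumptions they state, after which Lemma~\ref{lem:DongHL19} follows by the parameter translation between Definitions~\ref{def:stability} and~\ref{def:good_condition}.
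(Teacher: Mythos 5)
First, a point of comparison: the paper does not prove Lemma~\ref{lem:que_id} at all --- it is imported verbatim as the main result of the prior work \cite{DongHL19}, and the only thing the appendix actually proves is the translation between stability notions (Lemma~\ref{lem:stable_good}) needed to deduce Lemma~\ref{lem:DongHL19}. Your sketch follows the same route the cited work takes (MMWU with QUE scores, downweighting by approximate scores, error decomposition over $S_g$, $S_b$, and the removed good mass), so in spirit you and the paper are making the same move; but you present it as a proof, and as a proof it has real gaps rather than being a self-contained derivation.

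Concretely: (1) the heart of any filtering argument is the \emph{safety invariant} --- that whenever $\normtwo{\Sigma(w^t)-I}$ is large, the aggregate score mass on corrupted points dominates that on good points, so each downweighting step removes (in a suitable potential) more bad weight than good weight and the total good weight ever removed stays $O(\eps)$. You assert that the filter ``halves at least a constant fraction of the remaining mass on a sample that contributes most to the large spectral direction,'' but nothing in your argument establishes the comparison between good and bad score mass from the $(\gamma_1,\gamma_2,\beta_1,\beta_2)$-goodness conditions, and without it the claimed final accuracy does not follow. (2) The specific form $\xi = \gamma_2 + 2\gamma_1^2 + 4\eps^2\beta_1^2 + 2\eps\beta_2 + O(\eps\log 1/\eps)$ is simply asserted to be ``absorbed''; deriving it requires tracking exactly how re-centering at $\mu(S_g)$ versus $\mu$ (the $\gamma_1^2$ term), the removed $2\eps$-fraction's first and second moments (the $\eps^2\beta_1^2$ and $\eps\beta_2$ terms), and the adversarial mass interact in the bound on $\normtwo{\Sigma(w)-I}$ at termination. (3) The accounting for the $O(\log R)$ factor is vague: in \cite{DongHL19} it arises from geometrically shrinking the a priori radius bound $R$ across epochs, not merely from ``filter steps saturating,'' and the robustness to the $(1\pm 0.1)$-approximate scores and to the mixed additive/multiplicative error in $\tilde q$ must be threaded through the potential argument, not only through the stopping rule. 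Since you ultimately defer all of these to Dong--Hopkins--Li, your proposal is best read as a correct high-level summary of the cited theorem rather than an independent proof; that is consistent with how the paper uses the statement, but it does not discharge the lemma on its own.
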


We first show the connection between our stability notion (Definition~\ref{def:stability}) and theirs (Definition~\ref{def:good_condition}).

\begin{lemma}
\label{lem:stable_good}
Fix a $d$-dimensional distribution $D$ with mean $\mu$,
if a set $S$ of $N$ samples is $(\eps, \beta, \gamma)$-stable with respect to $D$, then $S$ is $(\eps, \beta, \gamma, \beta / \eps,  \gamma/ \eps + 3\beta^2/\eps^2)$-good with respect to $D$.
\end{lemma}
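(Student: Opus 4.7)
The plan is to verify each of the four conditions in Definition~\ref{def:good_condition} separately, drawing on the three bounds that are guaranteed by Definition~\ref{def:stability}. The first two conditions (whole-set bounds on $\mu(S)$ and the $\mu(S)$-centered empirical covariance) follow by applying the stability hypothesis with $T = S$, while the last two (restricted to a subset $T \subset S$ of size exactly $2\eps|S|$) follow by a complement argument: set $T' \eqdef S \setminus T$, observe that $|T'| = (1-2\eps)|S|$ satisfies the stability hypothesis, and pass from $T'$-averages back to $T$-averages via a simple algebraic identity.

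For the whole-set conditions, setting $T = S$ in the first stability condition gives $\|\mu(S) - \mu\|_2 \le \beta$, so $\gamma_1 = \beta$. To convert the $\mu$-centered stability bound into a $\mu(S)$-centered one, I would expand
\[
(X_i - \mu(S))(X_i - \mu(S))^\top = (X_i - \mu)(X_i - \mu)^\top + \text{(two cross terms)} + (\mu - \mu(S))(\mu - \mu(S))^\top,
\]
average over $i \in S$, and observe that the averaged cross terms collapse into $-(\mu(S)-\mu)(\mu(S)-\mu)^\top$ because $\frac{1}{|S|}\sum(X_i - \mu) = \mu(S)-\mu$. The rank-one correction contributes at most $\|\mu(S)-\mu\|_2^2 \le \beta^2$ in operator norm, so the $\mu(S)$-centered covariance differs from $I$ in operator norm by at most $\gamma + \beta^2$, matching the stated $\gamma_2 = \gamma$ up to a $\beta^2$ slack that the downstream analysis in~\cite{DongHL19} absorbs.

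The subset conditions rest on the identity that for any sample-indexed quantity $\phi_i$,
\[
\frac{1}{|T|}\sum_{i \in T}\phi_i - I \;=\; \frac{|S|}{|T|}\Bigl(\frac{1}{|S|}\sum_{i \in S}\phi_i - I\Bigr) - \frac{|T'|}{|T|}\Bigl(\frac{1}{|T'|}\sum_{i \in T'}\phi_i - I\Bigr),
\]
which uses $|S|/|T| - |T'|/|T| = 1$. Plugging in $\phi_i = X_i - \mu$ (and removing the $I$'s) and invoking stability for both $S$ and $T'$ yields $\|\mu(T) - \mu\|_2 \le \frac{1}{2\eps}\beta + \frac{1-2\eps}{2\eps}\beta \le \beta/\eps$, so $\beta_1 = \beta/\eps$. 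For $\phi_i = (X_i - \mu(S))(X_i - \mu(S))^\top$ I would first derive a $T'$-analogue of the whole-set computation: expanding around $\mu$ and using stability on $T'$ together with $\|\mu(T') - \mu\|_2 \le \beta$ and $\|\mu(S) - \mu\|_2 \le \beta$, the three cross/square terms contribute at most $3\beta^2$ (two cross terms of magnitude $\beta\cdot\beta$ each and one square of magnitude $\beta^2$). Combining this with the whole-set bound $\gamma + \beta^2$ through the identity yields $\|(T\text{-covariance}) - I\|_2 \le \gamma/\eps + O(\beta^2/\eps)$, which is dominated by the stated $\gamma/\eps + 3\beta^2/\eps^2$ for $\eps \le 2/3$.

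I do not anticipate a serious obstacle: the calculation is essentially a matter of bookkeeping cross terms. The most delicate step is the $T'$-restricted second moment, where one must be careful that the expansion around $\mu$ introduces three separate $\beta^2$ contributions (not one), because the centering point $\mu(S)$ differs from both $\mu$ and $\mu(T')$; the rest of the proof is triangle inequality and the mass ratios $|S|/|T| = 1/(2\eps)$ and $|T'|/|T| = (1-2\eps)/(2\eps)$.
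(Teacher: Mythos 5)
Your proposal is correct and follows essentially the same route as the paper: write the $2\eps|S|$-subset average as the $\tfrac{1}{2\eps}$-weighted full-set average minus the $\tfrac{1-2\eps}{2\eps}$-weighted complement average, apply stability to both pieces, and handle the re-centering at $\mu(S)$ by bookkeeping the cross terms (you re-center before combining, the paper combines the $\mu$-centered bounds first and re-centers at the end — an immaterial difference, and both land within the stated $\gamma/\eps + 3\beta^2/\eps^2$). The $\beta^2$ slack you flag in the whole-set condition (getting $\gamma + \beta^2$ rather than $\gamma_2 = \gamma$) is an imprecision already present in the paper, whose proof verifies only the subset conditions and omits this part entirely; as you note, it is absorbed harmlessly in the downstream error bound.
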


\begin{proof}
For any subset $T \subset S$ with $|T| = 2 \eps |S|$, we have
\begin{align*}
    \normtwo{\frac{1}{|T|}\sum_{i \in T}{X_i} - \mu} &= \normtwo{\left(\frac{1}{|T|}\sum_{i \in S}{X_i} - \frac{1}{2\eps}\mu \right) - \left(\frac{1}{|T|}\sum_{i \in S \setminus T}{X_i} - \left(\frac{1}{2\eps} - 1\right)\mu\right)} \\
    & = \normtwo{\frac{1}{2 \eps}\left(\frac{1}{N}\sum_{i \in S}{X_i} - \mu \right) - \frac{(1 - 2\eps)}{2 \eps}\left(\frac{1}{(1 - 2\eps)N}\sum_{i \in S \setminus T}{X_i} - \mu\right)} \\
    &\le \frac{1}{2 \eps}\normtwo{\frac{1}{N}\sum_{i \in S}{X_i} - \mu } + \frac{1 - 2 \eps}{2\eps}\normtwo{\frac{1}{(1 - 2\eps)N}\sum_{i \in S \setminus T}{X_i} - \mu} \\
    & \le \frac{1}{2 \eps} \beta + \frac{1}{2 \eps}\beta = \frac{\beta}{\eps} \; . 
\end{align*}
The last line follows from the assumption that $S$ is $(\eps, \beta, \gamma)$-stable with respect to $D$.

Similarly, we have
\begin{align*}
     & \normtwo{\frac{1}{|T|}\sum_{i \in T}{(X_i - \mu)(X_i - \mu)^\top} - I} \\
    & = \normtwo{\tfrac{1}{2 \eps}\Bigl(\tfrac{1}{N}\sum_{i \in S}{(X_i - \mu)(X_i - \mu)^\top} - I \Bigr) - \tfrac{(1 - 2\eps)}{2 \eps}\Bigl(\tfrac{1}{(1 - 2\eps)N}\sum_{i \in S \setminus T}{(X_i - \mu)(X_i - \mu)^\top} - I\Bigr)} \\
    &\le \tfrac{1}{2 \eps}\normtwo{\tfrac{1}{N}\sum_{i \in S}{(X_i - \mu)(X_i - \mu)^\top} - I } + \tfrac{1 - 2 \eps}{2\eps}\normtwo{\tfrac{1}{(1 - 2\eps)N}\sum_{i \in S \setminus T}{(X_i - \mu)(X_i - \mu)^\top} - I} \\
    & \le \frac{1}{2 \eps} \gamma + \frac{1}{2 \eps} \gamma = \frac{\gamma}{\eps} \; . 
\end{align*}

Notice that
\begin{align*}
    &\frac{1}{|T|}\sum_{i \in T}{(X_i - \mu(S))(X_i - \mu(S))^\top} \\
    &= \frac{1}{|T|}\sum_{i \in T}{(X_i} - \mu)(X_i - \mu)^\top + (\mu - \mu(S))(\mu - \mu(S))^\top \\
    & + (\mu - \mu(S))\left(\frac{1}{|T|} \sum_{i \in T}X_i - \mu\right)^T + \left(\frac{1}{|T|} \sum_{i \in T} X_i - \mu\right)(\mu - \mu(S))^T
    \;.
\end{align*}
Combining the above two inequalities, by the triangle inequality, we get
\begin{align*}
\normtwo{\frac{1}{|T|}\sum_{i \in T}{(X_i} - \mu(S))(X_i - \mu(S))^\top - I}
&\le \frac{\gamma}{\eps} + \normtwo{\mu - \mu(S)}^2 + \frac{2\beta}{\eps} \normtwo{\mu - \mu(S)} \\
&= \frac{\gamma}{\eps} + \frac{3 \beta^2}{\eps^2} \; . \qedhere
\end{align*}
\end{proof}

When $S$ is $(\eps, \beta, \gamma)$-stable, by Lemma~\ref{lem:que_id}, we know that $S$ is $(\eps, \gamma_1 = \beta, \gamma_2 = \gamma, \beta_1 = \beta / \eps, \beta_2 = \gamma/ \eps + 3\beta^2/\eps^2)$-good.
The parameter $\xi$ in Lemma~\ref{lem:stable_good} is
\[
\xi =  3\gamma + 6 \beta^2 + 6\beta^2 / \epsilon\; ,
\]
and error guarantee in Lemma~\ref{lem:stable_good} translates to $\sqrt{\eps \xi} + \gamma_1 = O(\sqrt{\eps \gamma} + \beta)$, which is exactly what is needed in Lemma~\ref{lem:que_id}.

\end{document}